\documentclass[11pt,letterpaper]{article}

\usepackage[T1]{fontenc}
\usepackage[margin=1in,headheight=14pt]{geometry}
\usepackage{amsmath,amssymb,amsthm,mathtools}
\usepackage{newtxtext,newtxmath}
\usepackage{bm,microtype}
\usepackage{graphicx,booktabs,tabularx,array,multirow,multicol}
\usepackage{caption,subcaption}
\usepackage{algorithm,algpseudocode}
\usepackage[shortlabels]{enumitem}
\usepackage{dsfont,bbm,tikz,empheq,mdframed,tcolorbox,soul}
\usepackage[round,authoryear]{natbib}
\usepackage{xcolor,xurl,placeins,fancyhdr}
\usepackage{hyperref}

\definecolor{linkblue}{RGB}{0,51,153}
\hypersetup{
  pdftitle={Reinforcement Learning with Multi-Step Transition Predictions: A Bellman Theory},
  pdfauthor={Chenbei Lu, Zaiwei Chen, Tongxin Li, Chenye Wu, and Adam Wierman},
  colorlinks=true,
  linkcolor=linkblue,
  citecolor=linkblue,
  urlcolor=linkblue,
  bookmarksnumbered=true
}
\providecommand{\doi}[1]{doi: \begingroup\urlstyle{rm}\href{https://doi.org/#1}{\nolinkurl{#1}}\endgroup}
\allowdisplaybreaks[1]
\setlist{noitemsep,leftmargin=*}
\fancypagestyle{plain}{
  \fancyhf{}
  \fancyfoot[C]{\small\thepage}
  
}

\theoremstyle{plain}
\newtheorem{theorem}{Theorem}[section]
\newtheorem{lemma}{Lemma}[section]
\newtheorem{proposition}{Proposition}[section]
\newtheorem{corollary}{Corollary}[section]
\theoremstyle{definition}
\newtheorem{definition}{Definition}[section]
\newtheorem{assumption}{Assumption}[section]
\theoremstyle{remark}

\newcommand{\FH}{\mathrm{FH}}
\newcommand{\BF}{\mathrm{BF}}

\newcommand{\Bayes}{\mathcal{B}}
\newcommand{\E}{\mathbb{E}}

\title{\LARGE\bfseries Reinforcement Learning with Multi-Step Transition Predictions: A Bellman Theory}

\author{
  \normalsize Chenbei Lu\textsuperscript{1} \quad Zaiwei Chen\textsuperscript{2} \quad Tongxin Li\textsuperscript{3} \quad Chenye Wu\textsuperscript{4} \quad Adam Wierman\textsuperscript{5}\\[0.5em]
  \small\textsuperscript{1}Duffield College of Engineering, Cornell University\\
  \small\textsuperscript{2}Edwardson School of Industrial Engineering, Purdue University\\
  \small\textsuperscript{3}School of Data Science, The Chinese University of Hong Kong (Shenzhen)\\
  \small\textsuperscript{4}School of Science and Engineering, The Chinese University of Hong Kong (Shenzhen)\\
  \small\textsuperscript{5}Computing \& Mathematical Sciences, Caltech
}
\date{}

\begin{document}
\maketitle

\begin{abstract}
Multi-step predictions of future transition realizations can improve reinforcement learning, but their value remains difficult to quantify. Directly including these predictions in the state can lead to exponential state-space growth, making direct dynamic programming computationally expensive.
We study discounted finite MDPs under a \emph{block-feedback} protocol: at the start of each $K$-period block, the decision maker observes $K$-step probabilistic transition predictions for a subset of actions and adapts decisions to observed states within the block. We show that the optimal value satisfies a Bayesian Bellman equation on the original state space. This representation reveals that predictive information creates value by recursively changing the local order of maximization and expectation in the Bellman expansion. We formalize this effect through a Bellman--Jensen analysis and quantify the effects of prediction horizon, error, action coverage, and within-block feedback. We propose block-feedback Bayesian offline learning with online adaptation (BF--BOLA), a model-based algorithm for learning near-optimal policies, and establish upper and lower sample-complexity bounds that match up to logarithmic factors in the stated regimes. The results show that predictions can reduce transition learning to uncovered rows and that learning a near-optimal policy can require fewer prediction samples than estimating the Bayesian value function.
\end{abstract}

\noindent\textbf{Keywords:} reinforcement learning, Markov decision processes, transition predictions, value of information, sample complexity

\section{Introduction}
Predictions of future system conditions are increasingly available in online decision making problems \citep{lim2021time}. For example, energy storage operators use renewable-generation and electricity-price predictions \citep{ahmed2019review}, portfolio managers rely on signals about future market conditions \citep{charpentier2021reinforcement}, and mobility and logistics systems use traffic, demand, and supply predictions \citep{haydari2020deep}. These predictions provide advance information about future uncertainty and can help decision makers adapt actions before the corresponding uncertainty is realized.

Such problems are commonly modeled as Markov decision processes (MDPs), where a one-step transition kernel specifies the distribution of the next state given the current state and action. When the model is unknown, reinforcement learning (RL) provides methods for learning policies from data \citep{sutton2018reinforcement,puterman2014markov}.  Multi-step transition predictions provide additional information about upcoming transition realizations before actions are chosen. An action that appears less attractive under the base transition model may offer a higher expected return given the prediction. Adapting actions to this information may therefore improve expected performance of reinforcement learning.

However, incorporating such predictions into decision making is nontrivial. A direct approach is to append the current \(K\)-step prediction to the state and solve the resulting augmented MDP. For probabilistic predictions, this can produce a high-dimensional, continuous state space. The number of discretized prediction states can grow exponentially with \(K\), which makes both policy representation and dynamic programming intractable. Moreover, optimal planning with multi-step transition lookahead can be NP-hard even when future transitions are revealed exactly \citep{pla2026hardness}. State-space growth may also increase the statistical burden of learning, since the sample complexity of tabular RL scales with the number of state-action pairs \citep{azar2012sample,gheshlaghi2013minimax,agarwal2020model,li2023breaking}. Beyond these computational and statistical challenges, it is also unclear how much predictive information actually improves decision making.


In this work, we study discounted finite MDPs with transition predictions under a decision protocol we call \emph{block feedback}. Time is divided into blocks of \(K\) periods. At the beginning of each block, the decision maker observes probabilistic predictions for a subset of transition rows over the next \(K\) periods. These predictions remain available throughout the block, while the current state is observed before each action is selected. Thus, actions can adapt to the realized state trajectory within the block while using the same multi-step prediction information. Within this framework, we study how multi-step transition predictions reshape the Bellman recursion and improve decision value, how to compute prediction-aware decisions efficiently, and how many samples are needed to learn a near-optimal policy.

\subsection{Main Contributions}
\label{subsec:main-contributions}

The main contributions of this work are summarized as follows.

\subsubsection{Low-Dimensional Bellman Representation}
We first establish a low-dimensional Bellman representation for reinforcement learning with multi-step transition predictions. A direct augmented-state formulation is exact, but it requires the value function to keep track of the currently observed prediction, leading to a large or even continuous state representation. In contrast, we show that under block feedback the optimal infinite-horizon value can be represented using only \(|\mathcal S|\) state coordinates.
Specifically, conditional on the realized prediction information, the decision maker solves an exact \(K\)-stage dynamic program in which actions adapt to the states realized within the block. Averaging the resulting optimized value over the prediction distribution defines a Bayesian Bellman operator on the original state space. We prove that this operator is a \(\gamma^K\)-contraction under the infinity norm and that its unique fixed point is exactly the optimal block-feedback value. The same recursion yields an optimal prediction-dependent state-feedback policy through backward induction. Thus, the policy may use high-dimensional prediction information online, while the Bellman fixed point remains low-dimensional.

\subsubsection{Value of Predictive Information}
We next characterize how multi-step transition predictions improve decision value. The Bellman representation reveals a Bellman--Jensen mechanism: observing transition information before action selection changes the local order of maximization and expectation in the Bellman expansion, from optimizing against an averaged transition law to optimizing after the predictive information is observed. We first establish a value ordering across no prediction, imperfect prediction, exact prediction with partial action coverage, exact prediction with full action coverage, and a full-information offline benchmark. We then derive a quantitative decomposition of the remaining gap to full information. The loss due to a finite prediction horizon is bounded at the scale $\frac{\gamma^K}{1-\gamma}$,
while a phase-wise prediction-error profile \((\epsilon_0,\ldots,\epsilon_{K-1})\) contributes
$\sum_{j=0}^{K-1}
\frac{\gamma^{j+1}\epsilon_j}
{(1-\gamma)(1-\gamma^K)}$.
The effect of partial action coverage is characterized by a one-block coverage gap scaled by \(1/(1-\gamma^K)\), and we separately quantify the value of within-block state feedback relative to committing to an open-loop action sequence. In particular, under exact full-coverage predictions, the gap to full information decays geometrically as \(O(\gamma^K/(1-\gamma))\). Under a uniform error budget \(\epsilon_j\le\epsilon\), the prediction-error loss is \(O(\epsilon/(1-\gamma)^2)\). We prove matching lower bounds for both dependences, showing that neither the finite-horizon rate nor the \((1-\gamma)^{-2}\) sensitivity to prediction error can be improved in general. The analysis compares all prediction regimes on the same underlying transition process, which allows the effects of information horizon, prediction quality, and action coverage to be separated without changing the physical MDP.

\subsubsection{Minimax-Optimal Policy Learning}
We develop a finite-sample theory for learning near-optimal block-feedback policies. We propose block-feedback Bayesian offline learning with online adaptation (BF--BOLA), a model-based algorithm that estimates transition rows not covered by predictions from environment samples and learns the distribution of the \(K\)-step prediction information from independent prediction samples. Offline, BF--BOLA computes an empirical Bayesian value. Online, it combines this value with the currently observed predictions and adapts actions to the realized states. Suppressing logarithmic factors, we show that an \(\varepsilon\)-optimal block-feedback policy can be learned with
\(D_1
=
\widetilde O\!\left(
\frac{|\mathcal S|(|\mathcal A|-|\mathcal A^-|)}
{(1-\gamma)^3\varepsilon^2}
\right)\)
environment samples, up to additive reward observations, and
\(D_2
=
\widetilde O\!\left(
1+
\frac{\gamma^{2K}}
{(1-\gamma)^2(1-\gamma^K)\varepsilon^2}
\right)\)
prediction samples. 
We then prove matching minimax lower bounds for both statistical resources: the environment-sample term and the nonconstant prediction-sample term in the regime where prediction uncertainty affects policy performance. These lower bounds certify that BF--BOLA is minimax optimal up to logarithmic factors. The environment-sample rate shows that prediction coverage reduces the learning dimension from all state-action transition rows to only the uncovered rows. The prediction-sample rate reveals a more subtle phenomenon: learning a near-optimal policy can require substantially fewer prediction samples than estimating the corresponding Bayesian value function, whose prediction-sample complexity is $\widetilde\Theta\!\left(
\frac{1}
{(1-\gamma)^2(1-\gamma^K)\varepsilon^2}
\right).$
Technically, the key step is a current-block cancellation: the learned and benchmark policies condition on the same current prediction during execution, so prediction-law estimation affects policy performance only through future-block Bayesian values. One-sided Bellman comparisons and variance-sensitive concentration then yield the \(\gamma^{2K}\) improvement in prediction-sample complexity.

\subsection{Related Literature}
\label{sec:related-literature}

\subsubsection{Model-Based RL}
\label{subsec:related-model-based}

Model-based reinforcement learning estimates the transition model and plans in the estimated MDP. Early work studied near-optimal planning under generative-model access \citep{kearns2002sparse}. {For discounted finite MDPs, \citet{azar2012sample} established minimax-optimal bounds for estimating \(Q^*\). \citet{gheshlaghi2013minimax} also obtained policy-learning guarantees for sufficiently small \(\varepsilon\).} Subsequent work developed variance-reduced policy-learning algorithms \citep{sidford2018near} and established the minimax optimality of planning in an empirical MDP \citep{agarwal2020model}. \citet{li2020breaking,li2023breaking} overcame the sample-size barrier and extended minimax policy guarantees to the full meaningful accuracy range. Up to logarithmic factors, the minimax rate is
\(\widetilde{\Theta}\bigl(|\mathcal S||\mathcal A|/((1-\gamma)^3\varepsilon^2)\bigr)\),
which characterizes the sample complexity of learning a near-optimal policy. Related work studies average-reward learning \citep{jin2021towards}. For episodic problems, efficient exploration has been studied in tabular MDPs \citep{azar2017minimax} and in linear MDPs \citep{jin2020provably}.

Our learning problem retains this model-based approach but changes what must be estimated. Uncovered transition rows are estimated from environment samples, while covered rows are supplied by predictions observed during operation. Accordingly, the environment-sample complexity scales with the number of uncovered rows,
\(\widetilde{\Theta}\bigl(|\mathcal S|(|\mathcal A|-|\mathcal A^-|)/((1-\gamma)^3\varepsilon^2)\bigr)\),
where \(\mathcal A^-\) is the set of actions covered by predictions. The learner also estimates the distribution of predictions from independent prediction samples.

Learning-augmented RL uses \(Q\)-value predictions to guide decisions \citep{golowich2022can,li2023beyond}. Other work studies how transition-matrix predictions affect sample complexity \citep{lyu2025efficiently}. Bayes-adaptive methods represent uncertainty about the model through a posterior \citep{duff2002optimal,ross2007bayes,guez2012efficient}. Our predictions instead provide additional information about upcoming transition realizations.

\subsubsection{Decision Making with Lookahead Information}
\label{subsec:related-lookahead}

Lookahead is often used as a planning tool. Multi-step greedy methods use deeper planning to improve policies \citep{efroni2018beyond,efroni2020online,winnicki2024role}. Adaptive lookahead adjusts planning depth to the current state \citep{rosenberg2023planning}, while lookahead-bounded \(Q\)-learning constructs value bounds from sampled information-relaxation problems \citep{el2020lookahead}. These methods use lookahead for computation, without observing future transition realizations during execution.

Predictions also guide decisions in optimization and control. Model predictive control repeatedly solves a finite-horizon control problem \citep{garcia1989model}. Online optimization uses predictions of future cost functions in problems with switching costs \citep{chen2015online,li2020online,li2020leveraging}, with competitive guarantees also available for nonconvex costs \citep{lin2020online}. In linear-quadratic control, regret analyses study predictions of disturbances \citep{yu2020power} and time-varying costs \citep{zhang2021regret}. Perturbation analyses establish regret bounds for predictive control in linear time-varying systems \citep{lin2021perturbation} and extend to prediction errors and constraints in MPC \citep{lin2022bounded}. Predictive control in nonstationary MDPs uses predictions of future transition kernels and rewards \citep{zhang2025predictive}.

More directly related are MDP models in which future information is revealed before action selection. Prior work studies the value of reward lookahead \citep{merlis2024reward} and develops Bellman equations and regret guarantees for one-step reward and transition lookahead \citep{merlis2024lookahead}. Extending transition lookahead to multiple steps raises a computational challenge. Rolling predictions enlarge the information state, and exact planning can be NP-hard even with perfect predictions \citep{pla2026hardness}. \citet{merlis2026adaptive} develops tractable policies with state-dependent block lengths under exact lookahead, together with block-boundary Bellman equations and online regret guarantees. \citet{pla2026nearoptimal} develop near-optimal planning and regret guarantees for rolling lookahead, with extensions to noisy predictions. Their planner runs in polynomial time for fixed lookahead depth, while its computational and storage costs grow exponentially with that depth. We study probabilistic transition predictions under block feedback, where predictions are refreshed only at block boundaries. The Bayesian value remains defined on the original state space. For any prediction horizon \(K\), within-block planning requires \(O(K|\mathcal S|^2|\mathcal A|)\) operations per observed prediction, polynomial in both \(K\) and the MDP size. We also establish minimax sample-complexity guarantees for policy learning and Bayesian value estimation.

Exogenous information provides another connection. Exogenous-information models study learning with irrelevant, action-independent observations \citep{efroni2022exogenous}, whereas exogenous-input MDPs use external inputs to drive the dynamics \citep{sinclair2023hindsight}. Discounted MDPs with i.i.d. contexts observed before action selection admit an averaged Bellman equation on the original state space and minimax learning bounds \citep{pla2026exogenous}. Our model formalizes multi-step probabilistic transition predictions under block feedback, with residual transition uncertainty and partial action coverage. The same prediction guides several state-adaptive decisions within a block, rather than being refreshed every period. We develop the corresponding Bellman theory and discounted sample-complexity guarantees for offline policy learning and Bayesian value estimation.

\subsubsection{Information in Stochastic Decision Problems}
\label{subsec:related-information}

Our value analysis builds on the study of information in stochastic decision problems. Blackwell's comparison of statistical experiments orders information structures \citep{blackwell1953equivalent}. The stochastic-programming literature quantifies the value of observing uncertainty before decisions are made \citep{avriel1970value,huang1977sharp}. Information-relaxation methods obtain performance bounds by relaxing nonanticipativity constraints and adding penalties for anticipative decisions \citep{brown2010information,brown2017information,balseiro2019approximations}. We instead quantify the value of transition predictions through their effect on the Bellman equation, particularly the ordering of expectation and maximization.

\section{Model and Decision Protocols}
\label{sec:model}
In this section, we first present the models of Markov decision processes (MDPs) and the transition predictions. Then, we define the decision protocol of block feedback that we study, and introduce two reference protocols.

\subsection{Preliminaries on Markov Decision Processes}
\label{subsec:base-mdp}

We consider a standard discounted infinite-horizon MDP specified by the tuple  $\mathcal{M} = (\mathcal{S}, \mathcal{A}, P, r, \gamma)$ \citep{puterman2014markov}.
Here, $\mathcal{S}$ and $\mathcal{A}$ denote the finite state and action spaces, respectively. 
The transition kernel $P$ specifies the next-state distribution, with  $P(\cdot \mid s,a)$ denoting the distribution of the next state when action $a$ is taken in state $s$. 
The reward function \(r:\mathcal S\times\mathcal A\to[0,1]\) is deterministic and normalized to the unit interval, and \(\gamma\in(0,1)\) is the discount factor.
The reward normalization is standard and only fixes the scale of the value function. The environment model \(P\) and \(r\) are unknown to the decision maker.

\subsection{Transition Predictions}
\label{sec:prediction-model}

Time is divided into blocks of \(K\) periods. Block \(b\ge0\) consists of periods \(bK,\ldots,(b+1)K-1\). We use the same phase index \(j=0,\ldots,K-1\) for predictions and decisions, so phase \(j\) occurs at time \(bK+j\). We fix a set \(\mathcal A^-\subseteq\mathcal A\) of actions whose transition rows are covered by the prediction.

At the beginning of block \(b\), the decision maker observes \(\boldsymbol\Sigma_b=(\Sigma_{b,0},\ldots,\Sigma_{b,K-1})\). For a realized prediction \(\boldsymbol\sigma=(\sigma_0,\ldots,\sigma_{K-1})\), the row \(\sigma_j(\cdot\mid s,a)\in\Delta(\mathcal S)\) gives the predicted next-state distribution at phase \(j\), for each state \(s\) and covered action \(a\in\mathcal A^-\). Thus the prediction takes values in $\mathcal Q_K
:= \left(\Delta(\mathcal S)^{\mathcal S\times\mathcal A^-}\right)^K.$
We let \(\mu\) denote the probability law of the random prediction \(\boldsymbol\Sigma_b\) on \(\mathcal Q_K\).

Given a prediction \(\boldsymbol\sigma\), the conditional transition row at phase \(j\) is
\begin{align}
\bar P_j(s'\mid s,a,\boldsymbol\sigma)
=
\begin{cases}
\sigma_j(s'\mid s,a), & a\in\mathcal A^-,\\
P(s'\mid s,a), & a\notin\mathcal A^-.
\end{cases}
\label{eq:operational-transition}
\end{align}
Thus predictions provide realization-level transition information for covered actions, while uncovered actions follow the base transition model. To ensure consistency with the base MDP model, we impose the following conditions on the predictions.

\begin{assumption}[Prediction process and consistency]
\label{assump:prediction-law}
\begin{enumerate}
\item \emph{Across-block sampling.}
At each block boundary \(\tau_b=bK\), a prediction \(\boldsymbol\Sigma_b\sim\mu\) is drawn independently of the past and independently across blocks. It is revealed at time \(\tau_b\) and is available for decisions at periods \(\tau_b,\ldots,\tau_b+K-1\).

\item \emph{Consistency.} For every covered action \(a\in\mathcal A^-\), the predicted rows average to the corresponding base transition row:
\[
\mathbb E_{\boldsymbol\Sigma_b\sim\mu}
\bigl[\Sigma_{b,j}(s'\mid s,a)\bigr]
=
P(s'\mid s,a),
\qquad
j=0,\ldots,K-1,\ s,s'\in\mathcal S.
\]

\item \emph{Fixed within-block prediction.} Conditional on the current prediction, earlier transitions observed within the same block do not change the transition rows used for later phases. Before choosing an action at phase \(j\), the next-state law for action \(a\) in state \(s\) is \(\bar P_j(\cdot\mid s,a,\boldsymbol\Sigma_b)\).

\item \emph{Base-MDP compatibility.} Any policy that ignores predictions has the same transition dynamics as in the base MDP. Formally, for every such policy \(\pi\), let \(\mathcal H_t^{\mathrm{phys}}:=\sigma(S_0,A_0,\ldots,A_{t-1},S_t)\). Then, for all \(t\ge0\) and \(s'\in\mathcal S\),
\[
\mathbb P^\pi(S_{t+1}=s'\mid\mathcal H_t^{\mathrm{phys}},A_t)
=P(s'\mid S_t,A_t)
\quad\text{almost surely}.
\]
\end{enumerate}
\end{assumption}

{The consistency condition ensures that the predicted rows average back to the base kernel \(P\).}

\subsection{Decision Making under Block Feedback}
\label{subsec:bf-decision-model}

We now specify how the prediction is used for decisions. Let \(\tau_b:=bK\) be the beginning of block \(b\). At time \(\tau_b\), the decision maker observes the state \(S_{\tau_b}\) and the predictions \(\boldsymbol\Sigma_b\). The same prediction remains available throughout the block, and at each period, the decision maker observes the current state before choosing an action. Thus, predictions are updated only at block boundaries, while actions still use state feedback within the block. We refer to such decision rules as block-feedback policies.

A stationary Markov block-feedback policy consists of \(K\) phase-dependent decision rules,
\[
\pi^{\BF}
=
\left(\pi^{\BF}_0,\ldots,\pi^{\BF}_{K-1}\right),
\qquad
\pi^{\BF}_j:\mathcal S\times\mathcal Q_K\to\Delta(\mathcal A),
\]
where \(\pi_j^{\BF}\) is used at phase \(j\) of each block. {Let \(\Pi^{\BF}\) denote the class of all stationary Markov block-feedback policies.}
Under a policy \(\pi^{\BF}\in\Pi^{\BF}\), at phase \(j=0,\ldots,K-1\) of block \(b\), the action and next state satisfy
\begin{align*}
A_{\tau_b+j}
&\sim
\pi_j^{\BF}\!\left(
\cdot\mid
S_{\tau_b+j},
\boldsymbol\Sigma_b
\right),
\qquad
S_{\tau_b+j+1}
\sim
\bar P_j\!\left(
\cdot\mid
S_{\tau_b+j},
A_{\tau_b+j},
\boldsymbol\Sigma_b
\right).
\end{align*}
The phase-\(j\) action may depend on the entire prediction and the state realized at that phase, but it cannot depend on future states or future predictions.

Block feedback is the main decision protocol studied in this paper. The next subsection introduces fixed-horizon and receding-horizon planning as reference protocols that differ in whether actions can adapt within a prediction block and how frequently predictions are updated.

\subsection{Reference Decision Protocols}
\label{subsec:reference-protocols}

We introduce two reference protocols, fixed-horizon planning and receding-horizon planning, to clarify what block feedback adds and what it avoids. Fixed-horizon planning uses the same blockwise prediction timing as block feedback but commits to an open-loop action sequence within each block \citep{lu2025reinforcement}. Receding-horizon planning instead uses a rolling prediction process: at every period, the current state and a newly updated \(K\)-step prediction are observed before the current action is selected. The three protocols are based on the same physical MDP and the same type of transition-prediction information, but they differ in how prediction information is timed and how actions may respond to realized states.

\paragraph{Fixed-horizon planning.} At the beginning of each block, the decision maker observes the current state and a \(K\)-step prediction and then selects an open-loop action sequence in \(\mathcal A^K\). A fixed-horizon policy is therefore a mapping $\pi^{\FH}:\mathcal S\times\mathcal Q_K\to\Delta(\mathcal A^K).$ Let \(\Pi^{\FH}\) denote this policy class.
Once selected, the action sequence is executed throughout the block and cannot depend on the intermediate states that are subsequently realized. Fixed-horizon planning thus uses the same predictions as block feedback but removes within-block state feedback. It admits a block-start value function on the original physical state space \(\mathcal S\), although its one-block optimization is over the exponentially large action space \(\mathcal A^K\).

\paragraph{Receding-horizon planning.} At every period \(t\), the decision maker observes the current state together with an updated \(K\)-step prediction and then selects the current action. Receding-horizon planning therefore uses more frequently updated prediction information than either fixed-horizon planning or block feedback. Because the prediction rolls forward after each transition, however, the current prediction generally becomes part of the Markov information state. Exact dynamic programming consequently requires a value function defined on an augmented space containing both the physical state and the rolling prediction.

For exact tabular implementations, the three protocols have different bottlenecks. Fixed-horizon planning requires optimization over open-loop action sequences and is NP-complete in the worst case, even with deterministic predictions and binary actions. For block feedback, a fixed observed prediction and terminal value can be handled by a \(K\)-stage dynamic program with dense tabular cost \(O(K|\mathcal S|^2|\mathcal A|)\). Receding-horizon planning avoids open-loop commitment but generally requires an augmented value representation. In a discretized setting where each single-phase prediction object has \(B\) possible values, a \(K\)-step rolling prediction has \(B^K\) possible prediction states, leading to exact-tabular storage of order \(|\mathcal S|B^K\). These distinctions are formalized in Appendix~\ref{app:reference-protocol-complexity} and summarized in Table~\ref{tab:complexity_comparison}.

\begin{table}[t]
\centering
\small
\renewcommand{\arraystretch}{1.15}
\setlength{\tabcolsep}{5pt}
\begin{tabularx}{\textwidth}{@{}l
>{\raggedright\arraybackslash}p{0.30\textwidth}
>{\raggedright\arraybackslash}p{0.22\textwidth}
>{\raggedright\arraybackslash}X@{}}
\toprule
Protocol & Information timing & Value representation & Exact computation or storage bottleneck \\
\midrule
FH
& One prediction per block; open-loop block actions
& Block-start value on \(\mathcal S\)
& NP-hard in worst case \\

BF
& One prediction per block; state-adaptive block actions
& Block-start value on \(\mathcal S\)
& \(O(K|\mathcal S|^2|\mathcal A|)\) within-block DP per realized prediction \\

RH
& Prediction refreshed every period
& Augmented value on \(\mathcal S\times\mathcal Q_K\)
& NP-hard for \(K\ge2\) \citep{pla2026hardness} \\
\bottomrule
\end{tabularx}
\caption{Comparison of prediction protocols. The last column reports the cost of exact planning or tabular value storage.}
\label{tab:complexity_comparison}
\end{table}

This comparison motivates our focus on block feedback: it preserves within-block state feedback while avoiding both the open-loop search burden of FH and the augmented-state representation burden of RH.

\section{Prediction-Aware Bellman Theory}
\label{sec:prediction-aware-bellman}

We develop a Bellman equation for block-feedback decision making.

\subsection{Direct Augmented-State Formulation}
\label{subsec:direct-augmentation}

A direct Markov formulation would use \((s,\boldsymbol\sigma,j)\) as the state, where \(\boldsymbol\sigma\in\mathcal Q_K\) is the current prediction and \(j\) is the within-block phase. This representation is exact but high-dimensional. Since $\mathcal Q_K
=
\left(\Delta(\mathcal S)^{\mathcal S\times\mathcal A^-}\right)^K$,
the prediction component has continuous dimension \(K|\mathcal S||\mathcal A^-|(|\mathcal S|-1)\). If each predictable transition row is approximated by one of \(M\) possible values, the number of \(K\)-step predictions scales as \(M^{K|\mathcal S||\mathcal A^-|}\). Thus, even for a finite physical MDP, the augmented formulation can suffer from the curse of dimensionality.

\subsection{Bayesian Block-Feedback Value}
\label{subsec:bf-bayesian-value}

We evaluate a policy at the beginning of a block, before its prediction is observed. The policy can then use that prediction to choose actions throughout the block.

Throughout this section, we fix \(K\), \(\mathcal A^-\), and \(\mu\), and suppress these parameters in the value notation. For a block-feedback policy \(\pi\in\Pi^{\BF}\), define its Bayesian value by
\begin{align}
V^\pi(s)
:=
\mathbb E^{\pi,\mu}\!\left[
\sum_{t=0}^{\infty}\gamma^t r(S_t,A_t)
\,\middle|\,
S_0=s
\right],
\qquad s\in\mathcal S.
\label{def_Bayesian_bellman_bf}
\end{align}
The expectation averages over predictions before they are observed, state transitions, and policy randomization. The policy can use each prediction after it is observed. The optimal Bayesian value is
\[
V^*(s)
:=
\sup\nolimits_{\pi\in\Pi^{\BF}}
V^\pi(s),
\qquad s\in\mathcal S.
\]
A policy is optimal if it attains this value for every initial state. We restore the prediction parameters when comparing information regimes in Section~\ref{sec:value-predictions}.

\subsection{Block-Feedback Bellman Equation}
\label{subsec:bf-bellman-equation}
We now construct the Bellman operator for the Bayesian block-boundary value. The key step is to separate the Bayesian value carried across blocks from the finite-horizon optimization inside the current block.

\begin{definition}[Block-feedback Bellman operator]
\label{def:bf-bellman-operator}
Let \(V:\mathcal S\to\mathbb R\) be a candidate value function for the next block boundary. For a realized prediction \(\boldsymbol\sigma=(\sigma_0,\ldots,\sigma_{K-1})\), define the within-block values \(J_j^{\BF}(s,\boldsymbol\sigma;V)\) by backward recursion:
\begin{align}
J^{\BF}_K(s,\boldsymbol\sigma;V)
&:=V(s),
\label{eq:bf-terminal}
\\
J^{\BF}_j(s,\boldsymbol\sigma;V)
&:=
\max_{a\in\mathcal A}
\left\{
r(s,a)
+
\gamma
\sum_{s'\in\mathcal S}
\bar P_j(s'\mid s,a,\boldsymbol\sigma)
J^{\BF}_{j+1}(s',\boldsymbol\sigma;V)
\right\},
\qquad
j=K-1,\ldots,0.
\label{eq:Jj-recursion}
\end{align}

The corresponding block-feedback Bayesian Bellman operator \(\mathcal T^{\BF}:\mathbb R^{|\mathcal S|}\to\mathbb R^{|\mathcal S|}\) is
\begin{align}
(\mathcal T^{\BF}V)(s)
:=
\mathbb E_{\boldsymbol\Sigma\sim\mu}
\left[
J^{\BF}_0(s,\boldsymbol\Sigma;V)
\right],
\qquad
s\in\mathcal S.
\label{eq:bf-operator}
\end{align}
\end{definition}

Once \(\boldsymbol\sigma\) is observed, Assumption~\ref{assump:prediction-law} gives the transition row \(\bar P_j\) for each state, action, and phase. Earlier observations do not change these rows. Ordinary finite-horizon dynamic programming therefore computes \(J_j^{\BF}\), optimizing over all decision rules that use the information available within the block. The operator \(\mathcal T^{\BF}\) first optimizes for each observed prediction and then averages over predictions.

\begin{theorem}[Bellman Optimality Equation for Block Feedback]
\label{thm:bf-bellman}
The optimal block-feedback Bayesian value function \(V^*\) is the unique fixed point of \(\mathcal T^{\BF}\):
\begin{align}
V^*(s)
=
\left(
\mathcal T^{\BF}
V^*
\right)(s),
\qquad
s\in\mathcal S.
\label{eq:bf-bellman}
\end{align}
Moreover, \(\mathcal T^{\BF}\) is a \(\gamma^K\)-contraction under the infinity norm.
\end{theorem}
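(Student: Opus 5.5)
We split the argument into a contraction part and an identification part. For the contraction, fix a prediction \(\boldsymbol\sigma\) and two candidate values \(V,W\). We show by backward induction on \(j=K,K-1,\ldots,0\) that
\[
\sup_{s}\bigl|J_j^{\BF}(s,\boldsymbol\sigma;V)-J_j^{\BF}(s,\boldsymbol\sigma;W)\bigr|\le\gamma^{K-j}\|V-W\|_\infty .
\]
The base case \(j=K\) is the terminal condition~\eqref{eq:bf-terminal}. For the inductive step we use two facts: \(|\max_a f(a)-\max_a g(a)|\le\max_a|f(a)-g(a)|\), and \(\bar P_j(\cdot\mid s,a,\boldsymbol\sigma)\) is a probability vector for every \(a\), covered or not. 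Taking \(j=0\), the bound holds pointwise in \(\boldsymbol\sigma\), so averaging over \(\mu\) in~\eqref{eq:bf-operator} gives \(\|\mathcal T^{\BF}V-\mathcal T^{\BF}W\|_\infty\le\gamma^K\|V-W\|_\infty\). One technical point needs checking first: \(\boldsymbol\sigma\mapsto J_0^{\BF}(s,\boldsymbol\sigma;V)\) must be measurable and bounded. It is continuous, being built from finite maxima of affine expressions in \(\boldsymbol\sigma\), and bounded by \(\frac{1-\gamma^K}{1-\gamma}+\gamma^K\|V\|_\infty\). Banach's fixed-point theorem then gives a unique fixed point \(\bar V\).

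It remains to show \(\bar V=V^*\). We first prove \(V^\pi\le\bar V\) for every \(\pi\in\Pi^{\BF}\).

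\emph{Per-block inequality.} Fix a bounded \(V\) and any \(\pi\). Condition on \(S_{\tau_b}=s\) and \(\boldsymbol\Sigma_b=\boldsymbol\sigma\). Assumption~\ref{assump:prediction-law}(3) says the within-block transitions are \(\bar P_j(\cdot\mid\cdot,\cdot,\boldsymbol\sigma)\), which do not depend on earlier observations. By the standard finite-horizon DP verification argument, the conditional expected \(K\)-step discounted reward plus \(\gamma^K V(S_{\tau_b+K})\) is at most \(J_0^{\BF}(s,\boldsymbol\sigma;V)\). This holds for arbitrary history-dependent, randomized decision rules within the block, not only Markov ones.

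\emph{Chaining blocks.} Assumption~\ref{assump:prediction-law}(1) makes \(\boldsymbol\Sigma_{b+1}\) independent of everything up to \(\tau_{b+1}\). Taking expectation over \(\boldsymbol\Sigma_b\sim\mu\) then shows that the expected reward over the first \(n\) blocks plus \(\gamma^{nK}\mathbb E[\bar V(S_{nK})]\) is at most \(\bar V(s)\). This uses \(\mathcal T^{\BF}\bar V=\bar V\) iterated \(n\) times via the tower property. Letting \(n\to\infty\), the tail term vanishes because rewards lie in \([0,1]\) and \(\bar V\) is bounded, so \(V^\pi\le\bar V\) and hence \(V^*\le\bar V\).

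For the reverse inequality, we construct \(\pi^\star_j(s,\boldsymbol\sigma)\) as a deterministic maximizer in~\eqref{eq:Jj-recursion} with terminal value \(\bar V\), breaking ties by a fixed order of \(\mathcal A\). This rule is measurable in \(\boldsymbol\sigma\) and lies in \(\Pi^{\BF}\). Every inequality in the verification argument then becomes an equality, so the same telescoping gives \(V^{\pi^\star}=\bar V\). Therefore \(V^*=\bar V\), and \(\pi^\star\) is optimal.

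The main obstacle is the per-block verification step. There one has to argue carefully that conditioning on \(\boldsymbol\Sigma_b\) and the within-block history leaves next-state laws equal to \(\bar P_j\), so that Assumption~\ref{assump:prediction-law}(3) is applied correctly. One also has to handle the measurability of \(\boldsymbol\sigma\mapsto J_j^{\BF}\) and of the maximizer selection when integrating against \(\mu\) on the continuous space \(\mathcal Q_K\). I would address this by noting continuity in \(\boldsymbol\sigma\) and using the finite action set with lexicographic tie-breaking, which makes the maximizer a Borel function.
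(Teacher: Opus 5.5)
Your proof is correct and follows essentially the same route as the paper: a backward-induction bound $\gamma^{K-j}\|V-W\|_\infty$ plus Banach gives the contraction, the within-block dynamic-programming inequality bounds every policy's value by the fixed point, and a greedy fixed-order maximizer attains it. The only real difference is that you telescope over $n$ blocks directly rather than writing $V^\pi=\mathcal T^\pi V^\pi$ for a block-evaluation operator and iterating its monotonicity, which avoids using stationarity of $\pi$ but is otherwise equivalent; one minor wording fix is that for $j<K-1$, $J_j^{\BF}$ is not piecewise affine in $\boldsymbol\sigma$ because $\sigma_j$ multiplies $J_{j+1}^{\BF}$, but it is still continuous, which is all your measurability argument needs.
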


{
The proof is given in Appendix~\ref{app:bf-bellman-proof}.
}

The Bayesian value has only \(|\mathcal S|\) entries. For an observed prediction, the within-block optimization is a \(K\)-stage dynamic program.

\subsection{Optimal Block-Feedback Policies}
\label{subsec:bf-optimal-policy}

The fixed point in Theorem~\ref{thm:bf-bellman} gives the Bayesian value used at every block boundary. Once a prediction is observed, the decision maker can plug this Bayesian value into the within-block recursion and act greedily at each phase. Write \(J_j^*(s,\boldsymbol\sigma):=J_j^{\BF}(s,\boldsymbol\sigma;V^*)\) for the resulting within-block values.

\begin{corollary}[Optimal Block-Feedback Policy]
\label{coro:bf-opt-policy}
For every \(j=0,\ldots,K-1\), \(s\in\mathcal S\), and \(\boldsymbol\sigma\in\mathcal Q_K\), choose a maximizer according to a fixed rule:
\begin{align}
a_j^*(s,\boldsymbol\sigma)
\in
\arg\max\nolimits_{a\in\mathcal A}
\left\{
r(s,a)
+
\gamma
\sum_{s'\in\mathcal S}
\bar P_j(s'\mid s,a,\boldsymbol\sigma)
J^*_{j+1}(s',\boldsymbol\sigma)
\right\},
\label{eq:bf-greedy-policy}
\end{align}
The block-feedback policy that puts probability one on \(a_j^*(s,\boldsymbol\sigma)\) at phase \(j\) is optimal.
\end{corollary}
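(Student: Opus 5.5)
We prove Corollary~\ref{coro:bf-opt-policy} by showing that the greedy policy \(\pi^*=(a_0^*,\ldots,a_{K-1}^*)\) has Bayesian value \(V^{\pi^*}=V^*\). Since \(V^*\) is the supremum over \(\Pi^{\BF}\), this gives optimality. The argument has three steps: a policy-evaluation operator, a verification that \(V^*\) is its fixed point, and uniqueness via contraction.

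\emph{Step 1: policy-evaluation operator.} For any deterministic stationary block-feedback policy \(\pi=(\pi_0,\ldots,\pi_{K-1})\), define within-block evaluation values. Set \(J^{\pi}_K(s,\boldsymbol\sigma;V):=V(s)\) and, for \(j=K-1,\ldots,0\),
\[
J^{\pi}_j(s,\boldsymbol\sigma;V):=r(s,\pi_j(s,\boldsymbol\sigma))+\gamma\sum_{s'}\bar P_j(s'\mid s,\pi_j(s,\boldsymbol\sigma),\boldsymbol\sigma)\,J^{\pi}_{j+1}(s',\boldsymbol\sigma;V).
\]
Then let \((\mathcal T^{\pi}V)(s):=\mathbb E_{\boldsymbol\Sigma\sim\mu}[J^{\pi}_0(s,\boldsymbol\Sigma;V)]\).

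We show that \(V^\pi\) is a fixed point of \(\mathcal T^\pi\). Split the discounted sum in \eqref{def_Bayesian_bellman_bf} at \(t=K\). By Assumption~\ref{assump:prediction-law}(3), conditional on \(\boldsymbol\Sigma_0=\boldsymbol\sigma\) the within-block states form a time-inhomogeneous Markov chain with kernels \(\bar P_j(\cdot\mid\cdot,\pi_j(\cdot,\boldsymbol\sigma),\boldsymbol\sigma)\). By Assumption~\ref{assump:prediction-law}(1), \(\boldsymbol\Sigma_1\) is independent of \((S_0,\ldots,S_K,\boldsymbol\Sigma_0)\). Because the policy is stationary, the process restarted at time \(K\) from state \(S_K\) has the same law as the original process started at \(S_K\). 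Hence
\[
\mathbb E\Bigl[\sum_{t\ge K}\gamma^t r\,\Big|\,S_K,\boldsymbol\Sigma_0\Bigr]=\gamma^K V^\pi(S_K),
\]
and a backward induction over \(j\) yields \(V^\pi=\mathcal T^\pi V^\pi\).

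We also show that \(\mathcal T^\pi\) is a \(\gamma^K\)-contraction in the infinity norm. The recursion is affine in \(V\), and each step multiplies differences by \(\gamma\) through a stochastic kernel; this mirrors the contraction part of Theorem~\ref{thm:bf-bellman}. Consequently \(V^\pi\) is the unique fixed point of \(\mathcal T^\pi\). Boundedness of \(V^\pi\) by \(1/(1-\gamma)\) ensures that it lies in the space on which the contraction argument applies.

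\emph{Step 2: \(V^*\) is a fixed point of \(\mathcal T^{\pi^*}\).} Take \(V=V^*\) and \(\pi=\pi^*\). Backward induction on \(j\) gives \(J^{\pi^*}_j(\cdot,\boldsymbol\sigma;V^*)=J^*_j(\cdot,\boldsymbol\sigma)\) for every \(\boldsymbol\sigma\). At \(j=K\) both equal \(V^*\). If equality holds at \(j+1\), then at \(j\) the greedy action \(a_j^*\) from \eqref{eq:bf-greedy-policy} attains the maximum in \eqref{eq:Jj-recursion}, so the two recursions coincide. Averaging over \(\mu\) gives \(\mathcal T^{\pi^*}V^*=\mathcal T^{\BF}V^*=V^*\), where the last equality is Theorem~\ref{thm:bf-bellman}.

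\emph{Step 3: conclusion.} By uniqueness of the fixed point of \(\mathcal T^{\pi^*}\) from Step 1, \(V^{\pi^*}=V^*\), so \(\pi^*\) is optimal for every initial state.

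We need \(\pi^*\) to be a legitimate member of \(\Pi^{\BF}\), so measurability in \(\boldsymbol\sigma\) must be checked. This follows from choosing the argmax by a fixed tie-breaking rule over the finite set \(\mathcal A\): each \(J^*_j\) is continuous in \(\boldsymbol\sigma\), and each action set \(\{\boldsymbol\sigma: a_j^*(s,\boldsymbol\sigma)=a\}\) is then Borel.

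\emph{Main obstacle.} The delicate step is the renewal argument in Step 1. One must verify that the continuation after time \(K\) depends on the past only through \(S_K\). Independence of successive \(\boldsymbol\Sigma_b\) together with the fixed within-block rows of Assumption~\ref{assump:prediction-law}(3) gives exactly this, so the tower property over the block filtration goes through.
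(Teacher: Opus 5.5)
Your proposal is correct and follows essentially the same route as the paper, which (inside the proof of Theorem~\ref{thm:bf-bellman}) also defines the block-evaluation operator \(\mathcal T^\pi\), obtains \(V^\pi=\mathcal T^\pi V^\pi\) from the independent prediction redraw at the block boundary, and concludes \(V^{\pi^*}=V^*\) from \(\mathcal T^{\pi^*}V^*=\mathcal T^{\BF}V^*=V^*\) together with uniqueness of the fixed point of the \(\gamma^K\)-contraction \(\mathcal T^{\pi^*}\). Your measurability check is slightly more explicit than the paper's bare appeal to a fixed ordering of \(\mathcal A\): you use continuity of \(J_j^*\) in \(\boldsymbol\sigma\) plus fixed tie-breaking to get Borel action sets.
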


\section{The Value of Predictions}
\label{sec:value-predictions}
Building on the Bellman equation for the optimal policy, we now quantify the value created by transition predictions. We show that this value arises from a Bellman--Jensen gap induced by the recursive reordering of expectation and maximization in the Bellman expansion.

\subsection{Bellman--Jensen Effect and Value Ordering}
\label{subsec:bf-prediction-gaps}

{Let \(V^*_{\mathrm{MDP}}\) denote the optimal discounted value without predictions, and let \(\mu^0\) and \(\mu^\varepsilon\) denote the exact and imperfect prediction laws, respectively. When comparing values, we display the horizon, action coverage, and prediction law as subscripts.}

\paragraph{Bellman reordering.}
{
Consider \(K=1\) with exact predictions for all actions. Write \(\sigma^0_t\) for the exact one-step prediction observed at time \(t\). It specifies the next state for every state-action pair. The ordinary and prediction-aware Bellman equations have the recursive expansions
\begin{align}
V_{\mathrm{MDP}}^*(s_0)
&=
\max_{a_0}
\left\{
r(s_0,a_0)
+
\gamma
\mathbb E_{\sigma^0_0}
\left[
\max_{a_1}
\left\{
r(s_1,a_1)
+
\gamma\mathbb E_{\sigma^0_1}[\cdots]
\right\}
\right]
\right\},
\label{eq:mdp-nested-bellman}
\\
V^{\Bayes,\BF,*}_{1,\mathcal A,\mu^0}(s_0)
&=
\mathbb E_{\sigma^0_0}
\left[
\max_{a_0}
\left\{
r(s_0,a_0)
+
\gamma
\mathbb E_{\sigma^0_1}
\left[
\max_{a_1}
\left\{
r(s_1,a_1)
+
\gamma\cdots
\right\}
\right]
\right\}
\right].
\label{eq:prediction-nested-bellman}
\end{align}

Without predictions, each action is chosen before the transition information is observed. Exact one-step predictions reverse this order, allowing the action to depend on that information. Since
$\mathbb E\!\left[\max_a X_a\right]
\ge
\max_a\mathbb E[X_a]$, this reordering creates a local Jensen gain. The same reordering recurs at every step of the Bellman expansion, giving the \emph{Bellman--Jensen effect}.
}

\paragraph{Information ordering.} We next turn the local Bellman--Jensen intuition into a global comparison of information regimes. The comparison keeps the transition process fixed and varies only what information is revealed before actions are selected. Theorem~\ref{thm:value-ordering-standard-mdp} shows that adding prediction information cannot reduce value, exact prediction dominates imperfect prediction, full action coverage dominates partial coverage, and a full-information oracle provides an upper benchmark.

The upper benchmark in this comparison allows the decision maker to observe the complete future array of potential next states before the initial decision. Let \(\Pi^{\mathrm{off}}\) denote the corresponding policy class, and define the full-information oracle value by
\begin{align}
V^{\Bayes,*}_{\mathrm{off}}(s)
:=
\sup_{\pi\in\Pi^{\mathrm{off}}}
\mathbb E^\pi
\left[
\sum_{t=0}^{\infty}
\gamma^t r(S_t,A_t)
\,\middle|\,
S_0=s
\right].
\label{eq:offline-value}
\end{align}
\begin{theorem}[Value Ordering across Information Regimes]
\label{thm:value-ordering-standard-mdp}
Under Assumption~\ref{assump:prediction-law}, {with exact and offline information defined on the common probability space in Appendix~\ref{app:prediction-model-proofs},} for every prediction horizon \(K\), predictable action set \(\mathcal A^-\), and state \(s\),
\[
V^*_{\mathrm{MDP}}(s)
=
V^{\Bayes,\BF,*}_{K,\varnothing}(s)
\le
V^{\Bayes,\BF,*}_{K,\mathcal A^-,\mu^\varepsilon}(s)
\le
V^{\Bayes,\BF,*}_{K,\mathcal A^-,\mu^0}(s)
\le
V^{\Bayes,\BF,*}_{K,\mathcal A,\mu^0}(s)
\le
V^{\Bayes,*}_{\mathrm{off}}(s).
\]
\end{theorem}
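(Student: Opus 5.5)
We prove the chain link by link. Most links are either policy-class inclusions or Jensen/Blackwell-type comparisons pushed through the Bayesian Bellman operator of Theorem~\ref{thm:bf-bellman}, which is a \(\gamma^K\)-contraction and monotone. The generic tool is a comparison principle. Let \(\mathcal T_1,\mathcal T_2\) be monotone contractions with \(\mathcal T_1V\le\mathcal T_2V\) for all \(V\), and let \(V_1,V_2\) be their fixed points. Then
\[
V_1=\mathcal T_1V_1\le\mathcal T_2V_1,
\]
and iterating \(\mathcal T_2\) gives \(V_1\le\lim_n\mathcal T_2^nV_1=V_2\). Monotonicity of \(\mathcal T^{\BF}\) is immediate from the recursion \eqref{eq:Jj-recursion}, since each step is a max of nonnegative combinations of \(J_{j+1}\). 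So each link reduces to a pointwise comparison of one-block operators.

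\textbf{First equality.} With \(\mathcal A^-=\varnothing\), \(\bar P_j=P\) for all \(j\). Then \(J^{\BF}_0\) does not depend on \(\boldsymbol\sigma\), and \(\mathcal T^{\BF}=(\mathcal T_{\mathrm{MDP}})^K\), where \(\mathcal T_{\mathrm{MDP}}\) is the usual Bellman operator. The fixed point of \(\mathcal T_{\mathrm{MDP}}^K\) is \(V^*_{\mathrm{MDP}}\).

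\textbf{Second link (no prediction \(\le\) imperfect prediction).} For each \(j\), the function
\[
\sigma_j\mapsto \max_a\Bigl\{r+\gamma\textstyle\sum_{s'}\bar P_j\,J_{j+1}\Bigr\}
\]
is convex in the rows, being a maximum of affine functions. The difficulty is that \(J_{j+1}\) itself depends on \(\boldsymbol\sigma\). I would instead argue by policy inclusion. Any prediction-ignoring policy is admissible in the \(\mu^\varepsilon\)-block-feedback class, and by Assumption~\ref{assump:prediction-law}(4) it induces the base-MDP dynamics. Hence its value equals its MDP value, and taking the optimal MDP policy gives \(V^*_{\mathrm{MDP}}\le V^{\Bayes,\BF,*}_{K,\mathcal A^-,\mu^\varepsilon}\). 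The equivalent operator route is to show by backward induction on \(j\) that
\[
\mathbb E_\mu\bigl[J^{\BF}_j(s,\boldsymbol\Sigma;V)\bigr]\ \ge\ (\mathcal T_{\mathrm{MDP}}^{K-j}V)(s).
\]
This uses that fixing the actions of the MDP-greedy plan and averaging the prediction rows recovers \(P\) by consistency (Assumption~\ref{assump:prediction-law}(2)).

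\textbf{Third link (imperfect \(\le\) exact).} This is the main obstacle. It is a Blackwell-type statement, and it needs the common probability space of Appendix~\ref{app:prediction-model-proofs}. There the imperfect prediction \(\boldsymbol\Sigma^\varepsilon\) is a garbling of the exact one: \(\boldsymbol\Sigma^\varepsilon=\mathbb E[\boldsymbol\Sigma^0\mid\boldsymbol\Sigma^\varepsilon]\), with the physical transitions generated from \(\boldsymbol\Sigma^0\). Given this coupling, any \(\mu^\varepsilon\)-block-feedback policy can be run as a randomized \(\mu^0\)-policy: it samples \(\boldsymbol\Sigma^\varepsilon\) from its conditional law given \(\boldsymbol\Sigma^0\) at each block start and then acts as before. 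The resulting trajectory law is identical, because transitions are driven by the exact realization in both cases. Hence the value is preserved and the inequality follows by taking suprema. The subtlety is checking that this randomized policy lies in \(\Pi^{\BF}\), which allows randomization, and that the within-block conditional transitions match under the coupling. I expect the latter to use Assumption~\ref{assump:prediction-law}(3) together with the martingale structure of the coupling.

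\textbf{Fourth link (partial \(\le\) full coverage).} Apply the same randomization argument. The full-coverage exact prediction reveals the realized rows for uncovered actions too, and these rows average to \(P\) by consistency. A partial-coverage policy simply ignores the extra rows, so its dynamics are unchanged and it is admissible in the full-coverage class.

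\textbf{Last link (full coverage \(\le\) offline).} The offline oracle observes the entire future array of potential next states, which refines every block prediction. Every block-feedback policy is therefore an element of \(\Pi^{\mathrm{off}}\) with the same trajectory law, so \(V^{\Bayes,\BF,*}_{K,\mathcal A,\mu^0}\le V^{\Bayes,*}_{\mathrm{off}}\).
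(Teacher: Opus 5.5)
Your proposal is correct, and apart from the imperfect-versus-exact link it follows the paper's proof. The paper also gets the second link by running a prediction-ignoring policy under base-MDP compatibility, the coverage link by ignoring the extra rows, and the offline link by admissibility of BF policies. For the first equality, your identity $\mathcal T^{\BF}=\mathcal T_{\mathrm{MDP}}^K$ when $\mathcal A^-=\varnothing$, combined with Theorem~\ref{thm:bf-bellman}, is an equally valid substitute for the paper's one-line appeal to base-MDP compatibility.

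For imperfect $\le$ exact, you and the paper prove the same Blackwell sufficiency fact in different ways. The paper \emph{enlarges} rather than \emph{simulates}. It introduces an enriched regime observing $(\boldsymbol\Sigma^\varepsilon_b,\boldsymbol\Sigma^0_b)$ and shows by backward induction that every $J_j$ is independent of the imperfect component: the one-hot exact rows fix all covered transitions, and uncovered rows stay $P$. So the enriched and exact operators, and hence their fixed points, coincide, and imperfect $\le$ enriched follows by ignoring $\boldsymbol\Sigma^0_b$. That route never leaves stationary Markov BF policies, and it gives the sharper statement that the imperfect signal is worthless once the exact one is known. Your garbling argument is more generic.

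The trajectory-law match you were unsure about is automatic. In the construction of Appendix~\ref{app:prediction-construction}, the potential next states $Z_b:=\{Z_{b,j}(s,a)\}_{j,s,a}$ are conditionally independent given $\boldsymbol\Sigma^\varepsilon_b$, with rows $\bar P_j$. The uncovered ones are independent of both signals. Hence $(\widetilde{\boldsymbol\Sigma}^\varepsilon_b,Z_b)$ and $(\boldsymbol\Sigma^\varepsilon_b,Z_b)$ have the same law, and the physical path is the same measurable function of the signal, $Z_b$, and the policy's randomization in both regimes. Nothing further from Assumption~\ref{assump:prediction-law}(3) or the martingale identity is needed.

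The step that needs repair is your claim that the simulated policy lies in $\Pi^{\BF}$. It draws $\widetilde{\boldsymbol\Sigma}^\varepsilon_b$ once per block and reuses it at every phase. So, given $(S_{bK+j},\boldsymbol\Sigma^0_b)$, its phase-$j$ action is still correlated with earlier within-block actions through the shared draw. Independent per-phase randomization $\pi_j(\cdot\mid s,\boldsymbol\sigma^0)$ cannot reproduce this, so you cannot finish by taking a supremum over $\Pi^{\BF}$ directly.

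Your own comparison principle closes the gap:
\begin{itemize}
\item For each fixed draw, the rule is a Markov block rule in the exact-regime block problem. Its one-block return with terminal value $U$ is therefore at most $J^{\BF}_0(s,\boldsymbol\Sigma^0_b;U)$ computed with exact rows.
\item Averaging over the draw and the prediction, the block-evaluation operator of the simulated policy satisfies $\mathcal T^{\widetilde\pi}U\le\mathcal T^{\BF}_{K,\mathcal A^-,\boldsymbol 0}U$.
\item The draw and the prediction are renewed each block, so $V^{\widetilde\pi}=\mathcal T^{\widetilde\pi}V^{\widetilde\pi}$.
\item Monotone iteration of $\mathcal T^{\BF}_{K,\mathcal A^-,\boldsymbol 0}$ from $V^{\widetilde\pi}$ then gives $V^{\widetilde\pi}\le V^{\Bayes,\BF,*}_{K,\mathcal A^-,\mu^0}$.
\end{itemize}

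Separately, drop the claim that the operator route in your second link follows from consistency alone. The rows $\sigma_j$ and $J_{j+1}(\cdot,\boldsymbol\sigma)$ may be correlated across phases. It is base-MDP compatibility, not Assumption~\ref{assump:prediction-law}(2), that makes the averaged block return of a fixed prediction-ignoring plan equal its MDP return. Your policy-inclusion argument for that link is the correct one.
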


The proof is given in Appendix~\ref{app:value-ordering-proof}.

The difference between the first and last values in Theorem~\ref{thm:value-ordering-standard-mdp} measures the value of full future information. If \(|\mathcal A|=1\), prediction cannot change any decision and all values in the theorem coincide. Likewise, if the imperfect prediction row equals \(P(\cdot\mid s,a)\) almost surely for every covered state-action pair, the signal reveals no additional information and its block-feedback value equals \(V^*_{\mathrm{MDP}}\).

Theorem~\ref{thm:value-ordering-standard-mdp} gives a qualitative information ordering. We next quantify the corresponding value losses.

\subsection{Quantitative Value-of-Information Bounds}
\label{subsec:value-bound}

We bound the gap between imperfect, partially covered block feedback and the full-information benchmark. The comparison passes through two intermediate regimes: exact prediction with full action coverage and exact prediction with the original partial coverage. This separates the losses caused by a finite prediction horizon, imperfect predictions, and incomplete action coverage.

To measure the uncertainty left by an imperfect prediction, we compare it with exact prediction of the same transition outcomes. Let \(\boldsymbol\Sigma_0^0\) and \(\boldsymbol\Sigma_0^\varepsilon\) denote these exact and imperfect predictions.

\begin{definition}[Phase-wise prediction-error profile]
\label{def:prediction-error-profile}
For an imperfect calibrated prediction law \(\mu^\varepsilon\), define the phase-wise error profile \(\boldsymbol\varepsilon=(\epsilon_0,\ldots,\epsilon_{K-1})\) by
\[
\epsilon_j
:=
\mathbb E\!\left[
\max_{(s,a)\in\mathcal S\times\mathcal A^-}
\left\|
\Sigma^0_{0,j}(\cdot\mid s,a)
-
\Sigma^\varepsilon_{0,j}(\cdot\mid s,a)
\right\|_1
\right],
\qquad
j=0,\ldots,K-1.
\]
If \(\mathcal A^-=\varnothing\), set \(\epsilon_j=0\) for every \(j\).
\end{definition}

For the fixed prediction law \(\mu^\varepsilon\), we use the shorthand
\[
V^{\Bayes,\BF,*}_{K,\mathcal A^-,\boldsymbol\varepsilon}
:=
V^{\Bayes,\BF,*}_{K,\mathcal A^-,\mu^\varepsilon}.
\]

\begin{definition}[One-block coverage gap]
\label{def:partial-coverage-width}
Let \(\mathcal T^{\BF}_{K,\mathcal A,\boldsymbol 0}\) denote the exact block-feedback operator when all action rows are revealed, and let \(\mathcal T^{\BF}_{K,\mathcal A^-,\boldsymbol 0}\) denote the corresponding operator when only rows in \(\mathcal A^-\) are revealed. The two operators are constructed from the same exact next-state array. They differ only in whether the rows associated with actions in \(\mathcal A\setminus\mathcal A^-\) are observed or averaged through their base transition kernels. Define
\[
\omega_{K,\mathrm{cov}}
:=
\sup_{0\le V\le (1-\gamma)^{-1}\mathbf 1}
\left\|
\mathcal T^{\BF}_{K,\mathcal A,\boldsymbol 0}V
-
\mathcal T^{\BF}_{K,\mathcal A^-,\boldsymbol 0}V
\right\|_\infty.
\]
\end{definition}

{The coverage gap \(\omega_{K,\mathrm{cov}}\) is zero when \(\mathcal A^-=\mathcal A\). It is the largest one-block value gained by revealing exact transition information for uncovered transition rows. {The coverage loss depends on whether uncovered transitions can change the optimal action, not merely on their number.}}

\begin{theorem}[Value-of-Information Bound for Block Feedback]
\label{thm:bf-value-information-bound}
Consider a discounted finite MDP with rewards in \([0,1]\), discount factor \(\gamma\in(0,1)\), prediction horizon \(K\ge1\), and predictable action set \(\mathcal A^-\subseteq\mathcal A\). Suppose Assumption~\ref{assump:prediction-law} holds, and let \(\mu^\varepsilon\) be an imperfect calibrated prediction law with error profile \(\boldsymbol\varepsilon=(\epsilon_0,\ldots,\epsilon_{K-1})\). Then
\[
\left\|
V^{\Bayes,*}_{\mathrm{off}}
-
V^{\Bayes,\BF,*}_{K,\mathcal A^-,\boldsymbol\varepsilon}
\right\|_\infty
\le
\min
\left\{
\frac{1}{1-\gamma},
A_1+A_2+A_3
\right\},
\]
where
\begin{align}
A_1
&:=
\frac{\gamma^K}{1-\gamma},
&&\text{finite-horizon loss},
\label{eq:finite-horizon-term}
\\
A_2
&:=
\sum_{j=0}^{K-1}
\frac{\gamma^{j+1}}{(1-\gamma)(1-\gamma^K)}
\epsilon_j,
&&\text{prediction-error loss},
\label{eq:prediction-error-term}
\\
A_3
&:=
\frac{\omega_{K,\mathrm{cov}}}{1-\gamma^K},
&&\text{partial-coverage loss}.
\label{eq:partial-coverage-term}
\end{align}
\end{theorem}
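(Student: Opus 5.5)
Write \(V^{\mathrm{off}}:=V^{\Bayes,*}_{\mathrm{off}}\), \(V^{0}_{\mathcal A}:=V^{\Bayes,\BF,*}_{K,\mathcal A,\mu^0}\), \(V^{0}_{\mathcal A^-}:=V^{\Bayes,\BF,*}_{K,\mathcal A^-,\mu^0}\) and \(V^{\varepsilon}:=V^{\Bayes,\BF,*}_{K,\mathcal A^-,\boldsymbol\varepsilon}\). We plan to telescope through the two intermediate regimes named before the theorem:
\[
V^{\mathrm{off}}-V^{\varepsilon}
=
\bigl(V^{\mathrm{off}}-V^{0}_{\mathcal A}\bigr)
+\bigl(V^{0}_{\mathcal A}-V^{0}_{\mathcal A^-}\bigr)
+\bigl(V^{0}_{\mathcal A^-}-V^{\varepsilon}\bigr).
\]
By Theorem~\ref{thm:value-ordering-standard-mdp} each bracket is nonnegative, so it suffices to bound each one from above by \(A_1\), \(A_3\) and \(A_2\), respectively. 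The trivial bound \(1/(1-\gamma)\) follows because all values lie in \([0,(1-\gamma)^{-1}]\). For the two brackets that compare Bellman operators, we will use the standard fixed-point perturbation argument supplied by Theorem~\ref{thm:bf-bellman}. If \(V=\mathcal T V\) and \(W=\mathcal T' W\), where \(\mathcal T'\) is a \(\gamma^K\)-contraction, then
\[
\|V-W\|_\infty\le \|\mathcal T V-\mathcal T' V\|_\infty/(1-\gamma^K).
\]

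\textbf{Coverage term.} Apply the perturbation argument with \(\mathcal T=\mathcal T^{\BF}_{K,\mathcal A,\boldsymbol 0}\), \(\mathcal T'=\mathcal T^{\BF}_{K,\mathcal A^-,\boldsymbol 0}\) and \(V=V^{0}_{\mathcal A}\in[0,(1-\gamma)^{-1}]\). Definition~\ref{def:partial-coverage-width} then gives \(A_3\) immediately.

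\textbf{Error term.} Fix \(V=V^{0}_{\mathcal A^-}\) and couple \(\boldsymbol\Sigma^0\) with \(\boldsymbol\Sigma^\varepsilon\) on the common space. We compare the within-block recursions \(J_j\) run under \(\sigma^0\) and under \(\sigma^\varepsilon\), both with terminal value \(V\). Use \(|\max f-\max g|\le\max|f-g|\), and write each covered row difference as a product term plus a propagation term:
\[
\sigma^0_j J^0_{j+1}-\sigma^\varepsilon_j J^\varepsilon_{j+1}
=(\sigma^0_j-\sigma^\varepsilon_j)J^0_{j+1}+\sigma^\varepsilon_j\bigl(J^0_{j+1}-J^\varepsilon_{j+1}\bigr).
\]
The product term is at most \(\|\sigma^0_j-\sigma^\varepsilon_j\|_1\,\|J^0_{j+1}\|_\infty\). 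Uncovered rows coincide and contribute nothing. Since \(J^0_{j+1}\in[0,(1-\gamma)^{-1}]\), and we may subtract a constant to halve the span if needed, induction gives
\[
\|J^0_0-J^\varepsilon_0\|_\infty\le \sum_j \gamma^{j+1}\|\sigma^0_j-\sigma^\varepsilon_j\|_{1,\max}/(1-\gamma).
\]
Taking expectations produces the \(\epsilon_j\), and dividing by \(1-\gamma^K\) gives \(A_2\). One subtlety is that \(\mathcal T^{\BF}_{\mu^\varepsilon}\) averages the maximum over \(\sigma^\varepsilon\), and the coupling makes this a pointwise comparison under a joint expectation. The marginal of \(\boldsymbol\Sigma^\varepsilon\) is \(\mu^\varepsilon\), so the operator value is unchanged.

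\textbf{Finite-horizon term (main obstacle).} This step is genuinely different, because the offline oracle is not a fixed point of a block operator. The plan is to bound it directly by a policy construction. Take an optimal offline policy on a given realization of the full next-state array. Let the block-feedback decision maker with exact full-coverage predictions imitate it within each block, since within the current block it sees exactly the same realized next-state arrays. The mismatch enters through decisions that depend on information beyond the current block.

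To make this rigorous, we would instead compare operators. The offline value satisfies \(V^{\mathrm{off}}\le \mathcal T^{\BF}_{K,\mathcal A,\boldsymbol0}\)-type recursions run with the future-informed continuation. A cleaner route is to note that the oracle's first \(K\) rewards are matched by the block-feedback optimizer conditioned on the same array, and that all subsequent rewards lie in \([0,\gamma^K/(1-\gamma)]\). If the naive argument only gives the recursive bound \(\gamma^K/(1-\gamma)\cdot 1/(1-\gamma^K)\), we will instead compare with the \(K\)-step truncated oracle value. We would then argue that \(V^{0}_{\mathcal A}\) dominates the value of any policy that is optimal on the first block with knowledge of that block's array and thereafter follows the block-feedback optimum. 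The offline value exceeds this by at most the discounted tail \(\gamma^K\|V^{\mathrm{off}}-0\|_\infty\le\gamma^K/(1-\gamma)\). I expect this step to require the most care.
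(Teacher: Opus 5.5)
Your proposal is correct. For the coverage and prediction-error terms it coincides with the paper's proof. It uses the same telescoping through $V^0_{\mathcal A}$ and $V^0_{\mathcal A^-}$ and the same fixed-point perturbation via the $\gamma^K$-contraction. You evaluate the coverage gap at $V^0_{\mathcal A}$ rather than at $V^0_{\mathcal A^-}$, which is immaterial since both lie in $[0,(1-\gamma)^{-1}]$. The error term uses the same coupled backward recursion $\Delta_j\le\gamma\Delta_{j+1}+\gamma d_j/(1-\gamma)$.

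The finite-horizon term is where the routes differ. The paper invokes its current-prediction policy-matching lemma. For each offline policy it builds a randomized block-feedback rule that reproduces the oracle's conditional state--action marginals over the first block. It then repeats that rule in later blocks and uses nonnegative later rewards plus the tail bound $\gamma^K/(1-\gamma)$.

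Your ``cleaner route'' replaces this simulation by optimization. With exact full-coverage predictions, the block dynamics are deterministic given $\boldsymbol\Sigma^0_0$. So $J^{\BF}_0(s,\boldsymbol\Sigma^0_0;0)$ is a maximum over finitely many action sequences. It therefore dominates the oracle's first-$K$ return realization by realization, even though the oracle's actions use post-block information. Hence $V^{\mathrm{off}}\le\mathcal T^{\BF}_{K,\mathcal A,\boldsymbol0}0+\frac{\gamma^K}{1-\gamma}\mathbf 1\le\mathcal T^{\BF}_{K,\mathcal A,\boldsymbol0}V^0_{\mathcal A}+\frac{\gamma^K}{1-\gamma}\mathbf 1=V^0_{\mathcal A}+\frac{\gamma^K}{1-\gamma}\mathbf 1$. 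This chain uses monotonicity of the operator, $V^0_{\mathcal A}\ge0$, and Theorem~\ref{thm:bf-bellman}.

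Your route is shorter and needs no conditional-action construction. The paper's lemma instead gives an exact matching statement, valid for every offline policy, without any maximization.

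When you write it up, commit to this argument and state the pathwise domination explicitly; ``matched'' should be ``dominated''. Drop the worry about an extra $(1-\gamma^K)^{-1}$ factor and the vague ``future-informed continuation'' comparison. The chain above already gives $A_1$ with no amplification.
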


{
The proof is given in Appendix~\ref{app:voi-bound-proof}.
}

{The horizon loss is bounded by the discounted reward after the first \(K\) periods. Prediction error and incomplete coverage, in contrast, affect every block, so their one-block effects are amplified by \((1-\gamma^K)^{-1}\).} {Errors at earlier phases receive larger weights because their effects are discounted less.} In particular, if \(\epsilon_j\le\epsilon\) for every \(j\), then
$A_2
\le
\frac{\gamma\epsilon}{(1-\gamma)^2}$,
so prediction error does not accumulate linearly with the prediction horizon. The coverage term remains model dependent because the value of an uncovered transition row depends on whether that row can change an optimal action.

\subsection{Matching Lower Bounds}
\label{subsec:value-bound-tightness}

We next show that the horizon and prediction-error dependences in Theorem~\ref{thm:bf-value-information-bound} cannot be improved in general.

\begin{theorem}[Finite-Horizon Lower Bound]
\label{thm:finite-horizon-lower-bound}
There exists an absolute constant \(c>0\) such that, for every prediction horizon \(K\ge1\) and every \(\gamma\in[1/2,1)\), there exists a finite MDP with exact full-coverage predictions for which
\[
\left\|
V^{\Bayes,*}_{\mathrm{off}}
-
V^{\Bayes,\BF,*}_{K,\mathcal A,\boldsymbol 0}
\right\|_\infty
\ge
\frac{c\gamma^K}{1-\gamma}.
\]
\end{theorem}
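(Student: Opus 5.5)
The plan is to build an explicit MDP in which the only decision-relevant randomness is realized exactly one step after the current prediction block ends. The offline oracle then exploits it, while every block-feedback policy is blind to it at the moment the decision is made.

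\emph{Construction.} Fix \(K\ge1\) and \(\gamma\in[1/2,1)\). Take a start state \(s_0\), two deterministic chains \(u^L_1,\ldots,u^L_K\) and \(u^R_1,\ldots,u^R_K\), and two absorbing states \(G\) and \(B\). Every state has the same two actions \(\{L,R\}\), and only \(s_0\) distinguishes between them.
\begin{itemize}
\item From \(s_0\), action \(L\) leads deterministically to \(u^L_1\), and action \(R\) leads to \(u^R_1\).
\item Each chain moves deterministically, \(u^x_i\to u^x_{i+1}\), under either action.
\item From \(u^x_K\), under either action, the next state is \(G\) or \(B\) with probability \(1/2\) each.
\item In the exact next-state array, the realizations at \(u^L_K\) and at \(u^R_K\) are independent, and they are drawn afresh at each time.
\item Rewards are \(1\) at \(G\) and \(0\) everywhere else.
\end{itemize}
Predictions are exact with full coverage, \(\mathcal A^-=\mathcal A\), generated from this array on the common probability space of Appendix~\ref{app:prediction-model-proofs}. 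The prediction revealed at block \(b\) thus reveals exactly the array entries for times \(bK,\ldots,bK+K-1\).

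\emph{Block-feedback value at \(s_0\).} Starting at \(s_0\) at time \(0\), the agent reaches \(u^x_K\) at time \(K\), and the random transition out of \(u^x_K\) happens at time \(K\). That is phase \(0\) of block \(1\). The only decision that matters is the action at time \(0\), and it is measurable with respect to \(\boldsymbol\Sigma_0\) and the policy's internal randomization. By the across-block independence in Assumption~\ref{assump:prediction-law}, the time-\(K\) entries of the array (those in \(\boldsymbol\Sigma_1\)) are independent of that action. Whichever chain is chosen, the agent therefore reaches \(G\) with probability exactly \(1/2\), which gives
\[
V^{\Bayes,\BF,*}_{K,\mathcal A,\boldsymbol 0}(s_0)=\tfrac12\,\frac{\gamma^{K+1}}{1-\gamma}.
\]
The same value can be confirmed by one application of \(\mathcal T^{\BF}\) at its fixed point (Theorem~\ref{thm:bf-bellman}), since the inner maximization compares \(\gamma^K V^*(u^L_K)\) with \(\gamma^K V^*(u^R_K)\), and these are equal.

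\emph{Offline value and the gap.} The offline oracle sees the whole array before acting. It picks the chain whose time-\(K\) transition goes to \(G\) whenever at least one of the two does, which happens with probability \(3/4\). Hence
\[
V^{\Bayes,*}_{\mathrm{off}}(s_0)\ge\tfrac34\,\frac{\gamma^{K+1}}{1-\gamma}.
\]
The gap is at least \(\frac14\frac{\gamma^{K+1}}{1-\gamma}\ge\frac18\frac{\gamma^K}{1-\gamma}\), using \(\gamma\ge1/2\). So \(c=1/8\) works, uniformly in \(K\).

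\emph{Main obstacle.} The main care point is formal consistency with the model, namely the common probability space and Assumption~\ref{assump:prediction-law}. Three things must be checked.
\begin{itemize}
\item The exact predictions are i.i.d. across blocks. This holds because the array entries are independent across times.
\item Consistency holds: each exact row averages to \(P\). At \(u^x_K\) a point mass on \(G\) or on \(B\), each with probability \(1/2\), averages to \((1/2,1/2)\).
\item The time-\(0\) action cannot depend on the time-\(K\) entries. This is the measurability argument above; it should be stated carefully using the phase-indexed definition of \(\Pi^{\BF}\), so that the block-\(1\) prediction is not yet revealed at time \(0\).
\end{itemize}
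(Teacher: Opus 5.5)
Your plan follows the paper's construction: two deterministic branches of length $K$, a fair coin realized at time $K$ just past the first prediction window, $\tfrac34$ versus $\tfrac12$, and $c=1/8$. However, giving every state the same two actions opens a gap at the step ``the only decision that matters is the action at time $0$.''

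\textbf{The gap.} The agent is at $u^x_K$ at time $K$, which is phase $0$ of block $1$. Its action there is therefore chosen \emph{after} $\boldsymbol\Sigma_1$ is revealed, and with $\mathcal A^-=\mathcal A$ that prediction reveals both rows $(u^x_K,L)$ and $(u^x_K,R)$. Prediction rows, the exact next-state array, and the offline information are all indexed by state--action pairs, whereas you describe one realization per state. On the common probability space you invoke, the potential next states $Z_{1,0}(u^x_K,L)$ and $Z_{1,0}(u^x_K,R)$ are drawn independently given an uninformative base prediction.

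\textbf{Consequence.} A BF policy then takes whichever action at $u^x_K$ is revealed to lead to $G$, so it reaches $G$ with probability $\tfrac34$ from either branch. This gives $V^{\Bayes,\BF,*}_{K,\mathcal A,\boldsymbol 0}(s_0)=\tfrac34\gamma^{K+1}/(1-\gamma)$, and your bound $V^{\Bayes,*}_{\mathrm{off}}(s_0)\ge\tfrac34\gamma^{K+1}/(1-\gamma)$ then yields no gap.

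Your check through $\mathcal T^{\BF}$ does not catch this. It only shows that the two branches tie at $s_0$. The level $V^*(u^x_K)=\frac{\gamma}{1-\gamma}\,\mathbb P\bigl(Z_{1,0}(u^x_K,a)=G\text{ for some }a\bigr)$ is exactly where the coupling of the two action rows enters. Likewise, your third care point addresses the time-$0$ action, but the problematic decision is the one at time $K$.

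\textbf{Repairs.} Any one of the following closes the gap.
\begin{enumerate}
\item Do what the paper builds in: make only one action at $u^x_K$ stochastic and send the other deterministically to $B$. The paper uses a separate advance action $c$ and sends every deviation to $b$. Each branch end then has a single nontrivial row, and your $\tfrac12$ and $\tfrac34$ computations go through verbatim.
\item Define the exact prediction law directly with $\Sigma^0_{b,j}(\cdot\mid u^x_K,L)=\Sigma^0_{b,j}(\cdot\mid u^x_K,R)$. The model allows dependent rows within a prediction, and consistency and base-MDP compatibility survive because realizations stay independent across times. You must state this explicitly and let the offline array carry the same shared realizations.
\item Keep independent rows and recount. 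BF obtains $\tfrac34$ and offline obtains $\tfrac{15}{16}$, so the gap is $\tfrac{3}{16}\gamma^{K+1}/(1-\gamma)\ge\tfrac{3}{32}\gamma^K/(1-\gamma)$. This still proves the theorem, with the smaller constant $c=3/32$.
\end{enumerate}
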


Theorem~\ref{thm:finite-horizon-lower-bound} establishes \(\gamma^K/(1-\gamma)\) as the exact worst-case order of the finite-horizon loss. The hard instance delays the transition information that determines the better branch until immediately after the current \(K\)-step prediction. The offline benchmark observes this information before choosing the branch, whereas a block-feedback policy with only the current \(K\)-step prediction does not. The discounted value of this unavailable information gives the lower bound. The proof is provided in Appendix~\ref{proof_finite_window_lower_bound}.

\begin{theorem}[Prediction-Error Lower Bound under a Uniform Budget]
\label{thm:prediction-error-lower-bound}
There exists an absolute constant \(c>0\) such that, for every prediction horizon \(K\ge1\), every \(\gamma\in[1/2,1)\), and every \(\epsilon\in(0,2]\), there exists a finite MDP with \(\mathcal A^-=\mathcal A\) and a calibrated imperfect prediction law whose error profile satisfies \(\epsilon_j\le\epsilon\) for all \(0\le j<K\), and for which
\[
\left\|
V^{\Bayes,\BF,*}_{K,\mathcal A,\boldsymbol 0}
-
V^{\Bayes,\BF,*}_{K,\mathcal A,\boldsymbol\varepsilon}
\right\|_\infty
\ge
c
\min\!\left\{
\frac{\epsilon}{(1-\gamma)^2},
\frac{1}{1-\gamma}
\right\}.
\]
\end{theorem}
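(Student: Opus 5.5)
We prove the bound with an explicit hard instance: a one-step ``guess the safe action'' MDP that is repeated at every period. We then compute both Bayesian values in closed form, using the fixed-point characterization in Theorem~\ref{thm:bf-bellman}.

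\textbf{Instance.} Let \(\mathcal S=\{s,z\}\) and \(\mathcal A=\mathcal A^-=\{1,2\}\), with rewards \(r(s,\cdot)=1\) and \(r(z,\cdot)=0\). State \(z\) is absorbing under every action and every prediction.
\begin{itemize}
\item \emph{Physical next-state array.} At each period \(t\), independently across periods, draw a safe action \(U_t\) uniformly from \(\{1,2\}\). From \(s\), action \(U_t\) leads to \(s\) and the other action leads to \(z\). The base kernel is therefore \(P(\cdot\mid s,a)=\tfrac12\delta_s+\tfrac12\delta_z\).
\item \emph{Exact prediction.} \(\Sigma^0_{b,j}\) reveals \(U_{bK+j}\); its rows are point masses.
\item \emph{Imperfect prediction.} Set \(\delta:=\min\{\epsilon,1\}\). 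At each phase, draw an independent coin that equals \(1\) with probability \(\delta\). If the coin is \(1\), both rows of \(\Sigma^\varepsilon_{b,j}(\cdot\mid s,\cdot)\) are set to \(\tfrac12\delta_s+\tfrac12\delta_z\). Otherwise \(\Sigma^\varepsilon_{b,j}=\Sigma^0_{b,j}\).
\end{itemize}

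\textbf{Assumption~\ref{assump:prediction-law} and the error profile.} We check the four conditions in turn.
\begin{itemize}
\item Across-block independence holds because the \(U_t\) and the coins are i.i.d. across periods.
\item Calibration holds: the average of a point mass at a uniform \(U_t\) is \(\tfrac12\delta_s+\tfrac12\delta_z\), and this is also the uninformative row.
\item The imperfect row is a garbling of the exact one. Given the revealed rows, the realized transition is drawn from them, so the fixed within-block prediction condition holds.
\item A policy that ignores predictions sees i.i.d. fair transitions, which gives base-MDP compatibility.
\end{itemize}
For the error profile, when the coin is \(1\) the \(\ell_1\) distance between a point mass and the half-half row is exactly \(1\); otherwise it is \(0\). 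Hence \(\epsilon_j=\delta\le\epsilon\) for every \(j\).

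\textbf{Value computation.} Because phases are i.i.d., the within-block recursion \eqref{eq:Jj-recursion} has the same form at every phase. From \(z\) the value is \(0\). From \(s\) with value-to-go \(W\):
\begin{itemize}
\item Under exact prediction, choosing the revealed safe action gives \(1+\gamma W\).
\item Under an uninformative imperfect signal, both actions are equivalent and give \(1+\gamma W/2\). An informative imperfect signal gives \(1+\gamma W\).
\end{itemize}
By Theorem~\ref{thm:bf-bellman}, each Bayesian value is the unique fixed point of the composed \(K\)-phase operator. These fixed points coincide with the fixed points of the one-step maps \(W\mapsto 1+\gamma W\) and \(W\mapsto 1+\gamma(1-\delta/2)W\). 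Therefore
\[
V^{\Bayes,\BF,*}_{K,\mathcal A,\boldsymbol 0}(s)=\frac1{1-\gamma},\qquad
V^{\Bayes,\BF,*}_{K,\mathcal A,\boldsymbol\varepsilon}(s)=\frac1{1-\gamma+\gamma\delta/2},
\]
and the gap between them is
\[
\frac{\gamma\delta/2}{(1-\gamma)\,(1-\gamma+\gamma\delta/2)}.
\]

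\textbf{Case split.} Use \(\gamma\ge1/2\).
\begin{itemize}
\item If \(\delta\le 1-\gamma\), the denominator is at most \(2(1-\gamma)^2\), so the gap is at least \(\delta/(8(1-\gamma)^2)\).
\item Otherwise the denominator is at most \((1-\gamma)\cdot 2\delta\), so the gap is at least \(1/(8(1-\gamma))\).
\end{itemize}
Since \(\delta=\min\{\epsilon,1\}\ge\epsilon/2\) for \(\epsilon\le2\), this yields the claim with \(c=1/16\).

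\textbf{Main obstacle.} The only delicate step is the formal bookkeeping. We must place the exact and imperfect predictions on the common probability space used in the paper, and check that the block-feedback optimum is attained by phase-wise greedy guessing. The latter follows from Corollary~\ref{coro:bf-opt-policy} together with the i.i.d. phase structure. The value computation itself is elementary.
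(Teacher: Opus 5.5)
Your value computations are right for the coupling you built, but that coupling is not the one the theorem is stated on. Under the paper's definitions, the step $V^{\Bayes,\BF,*}_{K,\mathcal A,\boldsymbol 0}(s)=1/(1-\gamma)$ fails.

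In Appendix~\ref{app:prediction-construction} the exact regime is \emph{induced} from the imperfect law. One draws $\boldsymbol\Sigma^\varepsilon_b\sim\mu^\varepsilon$, then draws the potential next states $Z_{b,j}(s,a)$ \emph{independently across rows} from the reported rows, and sets $\Sigma^0_{b,j}(\cdot\mid s,a)=\delta_{Z_{b,j}(s,a)}$. The paper states that all regime comparisons use this coupling. Its own proof of this theorem ends by checking that the hard instance uses independent next-state variables at every phase and state-action row.

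Your instance violates this requirement. When the coin equals $1$, both reported rows at $s$ are $\tfrac12\delta_s+\tfrac12\delta_z$, yet the underlying outcomes are perfectly anti-correlated: exactly one action is safe. That hidden correlation is invisible along trajectories, so it leaves $V^{\Bayes,\BF,*}_{K,\mathcal A,\boldsymbol\varepsilon}$ unchanged. It is, however, exactly what makes full-coverage exact prediction perfect in your model.

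Under the paper's coupling, an uninformative phase leaves $Z(s,1)$ and $Z(s,2)$ independent, so with probability $1/4$ neither action is safe. Hence $V^{\Bayes,\BF,*}_{K,\mathcal A,\boldsymbol 0}(s)=1/(1-\gamma+\gamma\delta/4)$, and the gap is
\[
\frac{\gamma\delta/4}{(1-\gamma+\gamma\delta/4)(1-\gamma+\gamma\delta/2)}\le\frac{2}{\gamma\delta}.
\]
With $\delta=\min\{\epsilon,1\}$ and $\epsilon$ of constant order, this stays bounded; for example, it tends to $2$ as $\gamma\to1$ when $\delta=1$. So your second case, which claims a gap of at least $1/(8(1-\gamma))$, fails. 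Consequently your instance no longer certifies the bound once $\epsilon$ is large compared with $1-\gamma$.

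The repair is cheap: do not spend more error than the target needs. Take $\delta:=\min\{\epsilon,1-\gamma\}$, so you still have $\epsilon_j=\delta\le\epsilon$. Because $\delta\le1-\gamma$, the two denominator factors are at most $\tfrac54(1-\gamma)$ and $\tfrac32(1-\gamma)$. Together with $\gamma\ge1/2$, this gives a gap of at least $\delta/(15(1-\gamma)^2)=\tfrac1{15}\min\{\epsilon/(1-\gamma)^2,1/(1-\gamma)\}$. Your own anti-correlated coupling gives at least $\delta/(6(1-\gamma)^2)$, so the conclusion no longer depends on how the two action rows are jointly coupled.

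This is the same saturation the paper builds in through $q=\min\{\epsilon/8,(1-\gamma)/2\}$. The paper also avoids the joint-law question altogether by randomizing a single row: a risky action against a deterministic safe action whose reward $\rho$ is tuned so the two tie under the uninformative signal. Conditional independence across rows then holds trivially. Your symmetric two-random-row instance is a legitimate alternative once the error is capped and the exact value is computed with independent rows.
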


Under the uniform budget \(\epsilon_j\le\epsilon\), the prediction-error term in Theorem~\ref{thm:bf-value-information-bound} satisfies
\[
A_2
\le
\frac{\gamma\epsilon}{(1-\gamma)^2}.
\]
Because \(\gamma\ge1/2\), Theorem~\ref{thm:prediction-error-lower-bound} matches this dependence up to absolute constants until the gap reaches the natural value range \(1/(1-\gamma)\). In the hard instance, exact prediction reveals whether a risky transition will succeed, whereas the imperfect calibrated signal leaves this outcome unresolved. The reward of a safe action is chosen so that its value is nearly tied with that of the risky action under the imperfect signal. Resolving the remaining uncertainty can therefore change the optimal action and produce a value gain of order \(\epsilon/(1-\gamma)^2\), with saturation at the total discounted reward range. The proof is provided in Appendix~\ref{proof_prediction_error_lower_bound}.

\section{Sample-Efficient Learning under Block Feedback}
\label{sec:learning}

We now design a learning algorithm to realize the value of block feedback. Specifically, block-feedback Bayesian offline learning with online adaptation (BF--BOLA) learns a Bayesian value function offline from transition and prediction samples, and then adapts online to each prediction through the block-feedback recursion. We provide minimax-optimal sample-complexity guarantees for the resulting policy.

We write \(V^\pi\) and \(V^*\) for the Bayesian values of a block-feedback policy and an optimal block-feedback policy. We omit the prediction parameters when they are not being compared.

\subsection{Sampling Model}
\label{subsec:statistical-resources}

We use two independent data sources. One supplies samples from uncovered transition rows. The other supplies samples from the prediction law \(\mu\).
\begin{itemize}
\item \emph{Generative-model samples.} A generative model can be queried at any state-action pair and returns an independent next state from the corresponding transition row \citep{kearns2002sparse,kakade2003sample}. BF--BOLA uses this interface only for uncovered rows. Let $\mathcal U :=\mathcal S\times(\mathcal A\setminus\mathcal A^-)$ denote these rows. For every \((s,a)\in\mathcal U\), the learner draws \(N_1\) independent samples \(Y_i(s,a)\sim P(\cdot\mid s,a)\), \(i=1,\ldots,N_1\), independently across state-action pairs.
\item \emph{Prediction samples.} A prediction oracle returns independent draws from the same law \(\mu\) that generates the prediction observed online at a block boundary. The learner receives \(N_2\) samples \(\boldsymbol\sigma^{\,i}=(\sigma_0^i,\ldots,\sigma_{K-1}^i)\sim\mu\), \(i=1,\ldots,N_2\), with \(\boldsymbol\sigma^{\,i}\in\mathcal Q_K\). Each sample contains the complete \(K\)-phase prediction table for all covered state-action rows.
\end{itemize}
Thus \(N_1\) counts samples per uncovered transition row, while \(N_2\) counts prediction samples. The additive \(|\mathcal S||\mathcal A|\) reward observations are counted separately, and the scalar size of one prediction table is accounted for in Appendix~\ref{proof_prop_bfbola_complexity}.

\subsection{BF--BOLA: Offline Bellman Learning and Online Policy Adaptation}
\label{subsec:bf-bola}

BF--BOLA turns the two data sources above into an empirical Bayesian Bellman operator, computes the empirical Bayesian value offline, and then adapts online to the prediction observed in each block.

\subsubsection{Model Estimation}

We estimate \(P\) and \(\mu\) from their respective samples:
\begin{align}
\widehat P(s'\mid s,a)
&:=
\frac{1}{N_1}
\sum_{i=1}^{N_1}
\mathbf 1\!\left\{Y_i(s,a)=s'\right\},
\qquad
(s,a)\in\mathcal U,
\label{est_psd}
\\
\widehat\mu_{N_2}
&:=
\frac{1}{N_2}
\sum_{i=1}^{N_2}
\delta_{\boldsymbol\sigma^{\,i}}.
\label{est_psigm}
\end{align}
The estimator \(\widehat P\) is used only for uncovered actions. The empirical law \(\widehat\mu_{N_2}\) approximates the block-boundary expectation over future predictions. For a realized prediction \(\boldsymbol\sigma\), define the empirical conditional transition row by
\[
\widehat{\bar P}_j(s'\mid s,a,\boldsymbol\sigma)
:=
\begin{cases}
\sigma_j(s'\mid s,a), & a\in\mathcal A^-,\\
\widehat P(s'\mid s,a), & a\notin\mathcal A^-.
\end{cases}
\]
Thus, covered actions use the prediction rows observed in the current prediction, while uncovered actions use the transition rows estimated from environment samples.

\subsubsection{Offline Bayesian Value Estimation}

For a terminal value \(V:\mathcal S\to\mathbb R\), define the empirical within-block recursion by
\begin{align}
\widehat J_K^{\BF}(s,\boldsymbol\sigma;V)
&:=V(s),
\\
\widehat J_j^{\BF}(s,\boldsymbol\sigma;V)
&:=
\max_{a\in\mathcal A}
\left\{
r(s,a)
+
\gamma
\sum_{s'\in\mathcal S}
\widehat{\bar P}_j(s'\mid s,a,\boldsymbol\sigma)
\widehat J_{j+1}^{\BF}(s',\boldsymbol\sigma;V)
\right\},
\qquad
j=K-1,\ldots,0.
\end{align}
The empirical block-feedback Bellman operator is
\begin{align}
\left(
\widehat{\mathcal T}^{\BF}V
\right)(s)
&:=
\int_{\mathcal Q_K}
\widehat J_0^{\BF}(s,\boldsymbol\sigma;V)
\,\widehat\mu_{N_2}(d\boldsymbol\sigma)
=
\frac{1}{N_2}
\sum_{i=1}^{N_2}
\widehat J_0^{\BF}
\left(
s,\boldsymbol\sigma^{\,i};V
\right).
\label{eq:empirical-bf-operator}
\end{align}

As in the true model, \(\widehat{\mathcal T}^{\BF}\) is a \(\gamma^K\)-contraction under the infinity norm. It therefore has a unique fixed point, denoted by $\widehat V
=
\widehat{\mathcal T}^{\BF}
\widehat V$.
BF--BOLA computes this empirical Bayesian value by value iteration offline. The resulting value remains a function on \(\mathcal S\). The \(N_2\) training predictions enter only through the empirical average in \(\widehat{\mathcal T}^{\BF}\).

\subsubsection{Online Policy Adaptation}

For a fresh prediction \(\boldsymbol\sigma\) observed during operation, define the empirical greedy action at each phase \(j\) and state \(s\), using a fixed tie-breaking rule, by
\begin{align}
\widehat a_j(s,\boldsymbol\sigma)
\in
\arg\max_{a\in\mathcal A}
\left\{
r(s,a)
+
\gamma
\sum_{s'\in\mathcal S}
\widehat{\bar P}_j(s'\mid s,a,\boldsymbol\sigma)
\widehat J_{j+1}^{\BF}
\left(
s',
\boldsymbol\sigma;
\widehat V
\right)
\right\}.
\label{eq:empirical-bf-greedy-action}
\end{align}
The learned block-feedback policy is $\widehat\pi_j^{\BF}
\left(
\cdot\mid s,\boldsymbol\sigma
\right)
:=
\delta_{\widehat a_j(s,\boldsymbol\sigma)}.$
At execution time, BF--BOLA can use a newly observed prediction directly. This prediction does not need to appear in the training samples. The training predictions are used to estimate the Bayesian value that enters the terminal term of the within-block recursion.

\begin{algorithm}[t]
\caption{BF--BOLA: Offline Bellman Learning with Online Policy Adaptation}
\label{alg:bf-bola}
\begin{algorithmic}[1]
\State Draw \(N_1\) next-state samples from every \((s,a)\in\mathcal U\).
\State Draw \(N_2\) independent predictions \(\{\boldsymbol\sigma^{\,i}\}_{i=1}^{N_2}\) with \(\boldsymbol\sigma^{\,i}\sim\mu\).
\State Construct \(\widehat P\), \(\widehat\mu_{N_2}\), and \(\widehat{\mathcal T}^{\BF}\).
\State Compute the empirical Bayesian value \(\widehat V\) as the fixed point of \(\widehat{\mathcal T}^{\BF}\).
\For{each block \(b=0,1,\ldots\)}
    \State Set \(\tau_b=bK\) and observe \(s_{\tau_b}\) and the fresh prediction \(\boldsymbol\sigma_b\).
    \State Compute \(\widehat J_j^{\BF}(\cdot,\boldsymbol\sigma_b;\widehat V)\) and \(\widehat a_j(\cdot,\boldsymbol\sigma_b)\) by backward induction.
    \For{\(j=0,\ldots,K-1\)}
        \State Select \(a_{\tau_b+j}=\widehat a_j(s_{\tau_b+j},\boldsymbol\sigma_b)\).
        \State Observe \(s_{\tau_b+j+1}\).
    \EndFor
\EndFor
\end{algorithmic}
\end{algorithm}

{We next bound the computational cost of offline value estimation and online policy adaptation.}

\begin{proposition}[{Computational Cost}]
\label{prop:bfbola-complexity}
Assume that transition rows and predictions are stored densely. Each offline Bellman update \(\widehat{\mathcal T}^{\BF}V\) requires \(O(N_2K|\mathcal S|^2|\mathcal A|)\) arithmetic operations. For each new online prediction, the within-block dynamic program requires \(O(K|\mathcal S|^2|\mathcal A|)\) arithmetic operations.
\end{proposition}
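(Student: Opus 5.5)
The plan is to count arithmetic operations directly in the empirical within-block recursion for a single prediction, and then add up over the training predictions for the offline update. Two conventions are assumed: arithmetic on stored probabilities is unit cost, and a dense row $\widehat{\bar P}_j(\cdot\mid s,a,\boldsymbol\sigma)$ can be accessed in $O(|\mathcal S|)$ time, whether it comes from $\sigma_j$ (covered actions) or from $\widehat P$ (uncovered actions). Building this row needs no copying. We simply branch on whether $a\in\mathcal A^-$ and read the appropriate stored table.

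First I would bound the cost of one phase. Fix $\boldsymbol\sigma$, a terminal value $V$, and a phase $j$, and assume $\widehat J_{j+1}^{\BF}(\cdot,\boldsymbol\sigma;V)$ is already stored as a vector of length $|\mathcal S|$. For each pair $(s,a)$, the quantity $r(s,a)+\gamma\sum_{s'}\widehat{\bar P}_j(s'\mid s,a,\boldsymbol\sigma)\widehat J_{j+1}^{\BF}(s',\boldsymbol\sigma;V)$ is an inner product of length $|\mathcal S|$ followed by $O(1)$ extra operations, so it costs $O(|\mathcal S|)$. Over all $|\mathcal S||\mathcal A|$ pairs this gives $O(|\mathcal S|^2|\mathcal A|)$. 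Taking the maximum over $a$ for each $s$ adds only $O(|\mathcal S||\mathcal A|)$. Recording the argmax $\widehat a_j(s,\boldsymbol\sigma)$ with the fixed tie-breaking rule also fits within this bound.

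Next I would run the backward induction. Starting from $\widehat J_K^{\BF}=V$, which costs $O(|\mathcal S|)$ to initialize, we apply the phase computation for $j=K-1,\dots,0$. The total is $O(K|\mathcal S|^2|\mathcal A|)$. This covers the online claim: for a fresh prediction $\boldsymbol\sigma_b$ with $V=\widehat V$, Algorithm~\ref{alg:bf-bola} computes every $\widehat J_j^{\BF}$ and every greedy action in this time. Executing the actions within the block then only requires table lookups.

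For the offline claim, \eqref{eq:empirical-bf-operator} requires $\widehat J_0^{\BF}(\cdot,\boldsymbol\sigma^{\,i};V)$ for each of the $N_2$ training predictions. Each of these costs $O(K|\mathcal S|^2|\mathcal A|)$ by the count above. Averaging the $N_2$ resulting vectors costs $O(N_2|\mathcal S|)$, which is dominated by the main term. The total is therefore $O(N_2K|\mathcal S|^2|\mathcal A|)$.

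There is no real obstacle here. The only point needing care is the storage convention, namely that accessing a covered row $\sigma_j(\cdot\mid s,a)$ inside a prediction table costs $O(|\mathcal S|)$ and not time proportional to the size of the whole table. Dense indexed storage gives exactly this.
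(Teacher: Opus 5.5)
Your proposal is correct and takes essentially the same approach as the paper. Both count $K|\mathcal S||\mathcal A|$ state--phase--action updates, each a dense $O(|\mathcal S|)$ sum over successors; summing over the $N_2$ stored predictions gives the offline sweep cost, and running the recursion once gives the online cost.
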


The key point is that the learned Bayesian value has only \(|\mathcal S|\) coordinates. Predictions enter through the empirical average over the \(N_2\) training samples and through a finite within-block recursion for each fresh online prediction.

\subsection{Sample Complexity Guarantees}
\label{subsec:bf-sample-upper-bound}

We now present the sample-complexity guarantees for BF--BOLA. We first state the main guarantee for learning a near-optimal block-feedback policy. We then give a separate guarantee for estimating the Bayesian value function.

\subsubsection{Policy Learning Guarantees}

Let \(D_1:=|\mathcal S|(|\mathcal A|-|\mathcal A^-|)N_1+|\mathcal S||\mathcal A|\) denote the total number of generative-model samples, including one reward observation from each state-action pair, and let \(D_2:=N_2\) denote the number of prediction samples. If \(\mathcal A=\mathcal A^-\), no transition samples are needed for uncovered rows.

\begin{theorem}[Near-Optimal Policy Learning with BF--BOLA]
\label{thm:bf-bola-policy-sample}
For any MDP satisfying Assumption~\ref{assump:prediction-law}, any confidence level \(\delta\in(0,1)\), policy accuracy \(\varepsilon_{\rm pol}\in(0,1]\), and prediction horizon \(K\ge1\), let \(D_1\) and \(D_2\) be the sample budgets used by BF--BOLA. If the sample budgets are chosen at the following orders, up to logarithmic factors,
\[
D_1
=
\widetilde O\!\left(
\frac{|\mathcal S|(|\mathcal A|-|\mathcal A^-|)}
{(1-\gamma)^3\varepsilon_{\rm pol}^2}
 \right)
+
|\mathcal S||\mathcal A|,
\qquad
D_2
=
\widetilde O\!\left(
1+
\frac{\gamma^{2K}}
{(1-\gamma)^2(1-\gamma^K)\varepsilon_{\rm pol}^2}
\right)
\]
then the deployed block-feedback policy \(\widehat\pi^{\BF}\) produced by Algorithm~\ref{alg:bf-bola} satisfies, with probability at least \(1-\delta\),
\[
\left\|
V^*
-
V^{\widehat\pi^{\BF}}
\right\|_\infty
\le
\varepsilon_{\rm pol}.
\]
\end{theorem}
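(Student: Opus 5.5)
We decompose the policy suboptimality by the usual three-term split. Write $\widehat V^{\widehat\pi}$ for the value of $\widehat\pi^{\BF}$ in the \emph{empirical} model (empirical $\widehat P$ on uncovered rows, predictions drawn from the true $\mu$ online) and consider the hybrid operators obtained by swapping one ingredient at a time. The key structural observation, which we would establish first, is a \emph{current-block cancellation}. The deployed policy $\widehat\pi^{\BF}$ and the optimal policy of Corollary~\ref{coro:bf-opt-policy} both condition on the same realized prediction $\boldsymbol\Sigma_b$ during execution. Hence, for the purpose of comparing $V^*$ and $V^{\widehat\pi^{\BF}}$, the prediction-law error enters only through the terminal value $\widehat V$ plugged into the within-block recursion, and that terminal value is discounted by $\gamma^K$.

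Concretely, we first isolate the transition-row error. Define the intermediate operator $\widetilde{\mathcal T}$ that uses $\widehat P$ on uncovered rows but the true law $\mu$, with fixed point $\widetilde V$. By contraction (Theorem~\ref{thm:bf-bellman}), we have
\[
\|V^*-V^{\widehat\pi}\|_\infty\le \|V^*-\widetilde V\|_\infty+\|\widetilde V-\widetilde V^{\widehat\pi}\|_\infty+\|\widetilde V^{\widehat\pi}-V^{\widehat\pi}\|_\infty .
\]
The first and third terms concern estimation of only the uncovered rows $\mathcal U$. We would control them with the Agarwal et al.\ / Li et al.\ machinery: a Bernstein bound on $(\widehat P-P)V$ for fixed $V$, a leave-one-out or absorbing-MDP decoupling to handle the data dependence of the value, and the total-variance lemma. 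Applied to the block recursion, the cumulative variance along a trajectory still sums to $O((1-\gamma)^{-3})$ because the within-block DP is just a phase-dependent MDP. Covered rows contribute no error since they are read off the prediction exactly. This gives the $|\mathcal S|(|\mathcal A|-|\mathcal A^-|)/((1-\gamma)^3\varepsilon^2)$ rate, with the reward observations being additive.

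Second, the middle term captures the prediction-law error, where we use one-sided Bellman comparisons. Because $\widehat\pi$ is greedy with respect to $\widehat J(\cdot,\boldsymbol\sigma;\widehat V)$ for \emph{every} $\boldsymbol\sigma$, the comparison for each realized prediction yields
\[
J_0(s,\boldsymbol\sigma;V)-J_0^{\widehat\pi}(s,\boldsymbol\sigma;V^{\widehat\pi})\le 2\gamma^K\|V-\widehat V\|\text{-type terms}.
\]
More precisely, unrolling the block recursion, the suboptimality is bounded by $\gamma^K$ times averages of $(\mathcal T-\widehat{\mathcal T})$ applied to the fixed points. Iterating over blocks produces a factor $(1-\gamma^K)^{-1}$. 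We would then bound $(\mu-\widehat\mu_{N_2})J_0(s,\cdot;V)$ at both $V=V^*$ (a fixed function, so plain Bernstein applies) and at $V=\widehat V$. For the latter we would use an $\varepsilon$-net over $V$ in $|\mathcal S|$ dimensions, which only costs logarithmic factors, or alternatively a leave-one-out argument. Variance-sensitive concentration is essential here: the variance of $J_0(s,\boldsymbol\Sigma;V)$ over predictions, summed across blocks with weights $\gamma^{2Kb}$, should be $O(1/((1-\gamma)^2(1-\gamma^K)))$, because each block's value has range at most $(1-\gamma)^{-1}$ and the block-level total variance identity gives $\sum_b\gamma^{2Kb}\mathrm{Var}_b\lesssim (1-\gamma)^{-2}/(1-\gamma^{K})$. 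Combined with the $\gamma^K$ prefactor, and squaring, this yields $N_2\gtrsim \gamma^{2K}/((1-\gamma)^2(1-\gamma^K)\varepsilon^2)$. The additive $1$ in the bound covers the logarithmic burn-in.

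The main obstacle is making the current-block cancellation rigorous while retaining variance-sensitivity. The data dependence between $\widehat V$ and $\widehat\mu_{N_2}$ breaks naive Bernstein. Moreover, the $\gamma^K$ gain requires that the discrepancy be measured only on next-block values, which means carefully writing the difference of policy values as a sum over blocks $b\ge1$ of $\gamma^{Kb}$ times prediction-law deviations. We would first attempt a sample-splitting-free decoupling via an auxiliary family of operators indexed by a scalar shift, as in Agarwal et al.\ Only if that failed would we fall back to a covering argument, at the cost of extra logarithmic factors.
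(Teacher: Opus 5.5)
Your architecture is right. Your $\widetilde V$ is the paper's comparison value $\overline V=V^*_{\widehat P,\mu}$, and your treatment of the middle term is the paper's current-block cancellation (Lemma~\ref{lem:bf-policy-main-cancellation}). The genuine gap is how you decouple the data-dependent targets from the prediction samples, and neither of your tools works. The Agarwal et al.\ scalar-shift/absorbing construction needs each state to own an independent batch of samples that can be frozen out. Here every prediction sample is a complete $K$-phase table that enters the Bellman operator at every state, so there is nothing to leave out state by state. Covering does not rescue this: with $H=(1-\gamma)^{-1}$, an $\epsilon'$-net over $V\in[0,H]^{|\mathcal S|}$ has log-cardinality $|\mathcal S|\log(H/\epsilon')$. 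The union bound then multiplies $D_2$ by $|\mathcal S|$, which is not a logarithmic factor and contradicts the $|\mathcal S|$-free rate.

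Nor can you simply sidestep decoupling when $\gamma^K\ge1/2$. Suppose you replace the target $\widehat J^{\BF}_0(\cdot,\boldsymbol\sigma;\widehat V)$ in $b_{\rm pred}$ by the fixed target $\widehat J^{\BF}_0(\cdot,\boldsymbol\sigma;\overline V)$ and bound the difference by its range $2\gamma^K\eta$, where $\eta=\|\widehat V-\overline V\|_\infty$. After propagation this costs $\gamma^{2K}\eta/(1-\gamma^K)$. Since $\eta$ can genuinely be of order $H/\sqrt{N_2(1-\gamma^K)}$, this route requires $N_2\gtrsim H^2/((1-\gamma^K)^3\varepsilon_{\rm pol}^2)$.

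The missing idea is replace-one stability. Changing one prediction moves $\widehat V$ by at most $CH/[N_2(1-\gamma^K)]$ and each one-block target by at most $C\gamma^KH/[N_2(1-\gamma^K)]$ (Lemma~\ref{lem:bf-policy-replace-one-stability}). A stability-based Bernstein inequality built on the weak-interaction moment bound (Lemma~\ref{lem:bf-policy-stable-bernstein}) then keeps the variance of the data-dependent target and adds only $\kappa\log(2N_2)t$.

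Your third term has a related problem. $\widetilde V^{\widehat\pi}$ evaluates a policy that is greedy for $V^*_{\widehat P,\widehat\mu}$, not for $\widetilde V$. So its empirical value is not the fixed point that the Agarwal/Li decoupling is built for, and it is not replace-one stable, because the argmax can jump. In fact, with $W=V^{\widehat\pi}_{P,\mu}$, $\mathcal R=(I-\gamma^KM^{\widehat\pi}_{P,\mu})^{-1}$ and $\widehat{\mathcal R}=(I-\gamma^KM^{\widehat\pi}_{\widehat P,\mu})^{-1}$, it equals $\mathcal R b_{\rm tr}+(\mathcal R-\widehat{\mathcal R})b_{\rm pred}$, so it is not a transition-only term. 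The paper avoids this object by pivoting at $\widehat V$ itself: $\widehat V-W=\mathcal R(b_{\rm tr}+b_{\rm pred})$. The within-block targets there are $\widehat J^{\BF}_{j+1}(\cdot,\boldsymbol\sigma;\widehat V)$, which are max-recursions and hence stable. Integrated scores (Lemma~\ref{lem:bf-policy-integrated-bernstein}) then replace union bounds over the possibly continuous set of fresh predictions, a point your ``phase-dependent MDP'' remark also glosses over.

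Second, your variance accounting would not give the rate if made rigorous. The same $\widehat\mu$ is used in every block, so $b_{\rm pred}(s)=(\widehat\mu-\mu)\Psi^{\widehat\pi}_{\widehat P,\widehat V}(s,\cdot)$ is one fixed function revisited along the trajectory. Its contributions therefore add linearly, not in quadrature with weights $\gamma^{2Kb}$. With $M=M^{\widehat\pi}_{P,\mu}$, what you need is $\|(I-\gamma^KM)^{-1}\sqrt v\|_\infty\lesssim H/\sqrt{1-\gamma^K}$. This follows by Cauchy--Schwarz from the total-variance bound $\|(I-\gamma^KM)^{-1}v\|_\infty\le2H^2$ in \eqref{eq:discounted-return-variance}.

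The bound $\sum_b\gamma^{2Kb}\mathrm{Var}_b\lesssim H^2/(1-\gamma^K)$ that you attribute to the total-variance identity is only the trivial range bound. Used in the correct linear accumulation, it yields $N_2\gtrsim\gamma^{2K}H^2/((1-\gamma^K)^2\varepsilon_{\rm pol}^2)$, too large by $1/(1-\gamma^K)$ exactly when $\gamma^K\ge1/2$.

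Moreover, the identity controls the variance of the learned policy's own block returns $\Psi^{\widehat\pi}_{P,W}$, not of the target built from $\widehat V$. Transferring it (Lemma~\ref{lem:bf-policy-macro-envelope-transfer}) costs terms proportional to $E_\pi=\|\widehat V-W\|_\infty$ and to the transition residuals. This is why the paper closes a self-bounding inequality in $E_\pi$ for $\gamma^K\ge1/2$, together with Theorem~\ref{thm:bf-bola-sample} at $\alpha=1/2$. For $\gamma^K<1/2$, crude range bounds suffice after cancellation.
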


Here \(\widetilde O(\cdot)\) suppresses logarithmic factors in the problem dimensions, confidence level, and sample sizes.

\paragraph{Environment samples.}
The \(D_1\) requirement comes from direct interaction with the MDP through a generative model. Its leading term scales with \(|\mathcal S|(|\mathcal A|-|\mathcal A^-|)\), the number of transition rows not covered by predictions, rather than with the classical \(|\mathcal S||\mathcal A|\) transition-learning dimension for tabular model-based RL \citep{azar2012sample,gheshlaghi2013minimax,sidford2018near,agarwal2020model,li2023breaking}. Thus predictability reduces the environment-sampling burden by removing covered transition rows from the model-estimation problem. In the fully covered case \(\mathcal A^-=\mathcal A\), the transition-learning part of \(D_1\) disappears and only the additive reward-observation cost \(|\mathcal S||\mathcal A|\) remains.

\paragraph{Prediction samples.}
Prediction samples are used to estimate the law \(\mu\). For policy learning, prediction-law error affects future blocks, whose contribution is discounted by \(\gamma^K\). The prediction-sample requirement therefore decreases with \(K\) through the factor \(\gamma^{2K}/(1-\gamma^K)\). When \(K=1\), this term has the same effective-horizon order as classical discounted model-based learning. If \(\gamma^{2K}\le(1-\gamma)^2(1-\gamma^K)\varepsilon_{\rm pol}^2\), the bound in Theorem~\ref{thm:bf-bola-policy-sample} reduces to \(\widetilde O(1)\) prediction samples.

\subsubsection{Bayesian Value Estimation Guarantees}
\label{subsec:bf-bayesian-value-estimation}

We next bound the error in estimating the Bayesian value function, a stronger requirement than learning a near-optimal policy.

\begin{theorem}[Bayesian Value Estimation for BF--BOLA]
\label{thm:bf-bola-sample}
For any MDP satisfying Assumption~\ref{assump:prediction-law}, any confidence level \(\delta\in(0,1)\), estimation accuracy \(\varepsilon_{\rm est}\in(0,1]\), tradeoff parameter \(\alpha\in(0,1)\), and prediction horizon \(K\ge1\), let \(D_1\) and \(D_2\) be the sample budgets used by BF--BOLA. If the sample budgets are chosen at the following orders, up to logarithmic factors,
\[
D_1
=
\widetilde O\!\left(
\frac{|\mathcal S|(|\mathcal A|-|\mathcal A^-|)}
{(1-\gamma)^3(1-\alpha)^2\varepsilon_{\rm est}^2}
\right)
+
|\mathcal S||\mathcal A|,
\qquad
D_2
=
\widetilde O\!\left(
\frac{1}
{(1-\gamma)^2(1-\gamma^K)\alpha^2\varepsilon_{\rm est}^2}
\right)
\]
then the empirical Bayesian value computed by BF--BOLA satisfies, with probability at least \(1-\delta\),
\[
\left\|
\widehat V
-
V^*
\right\|_\infty
\le
\varepsilon_{\rm est}.
\label{eq:bf-bola-accuracy}
\]
\end{theorem}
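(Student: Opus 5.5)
The plan is to split the error into a transition-estimation part and a prediction-law part through an intermediate operator. Define the hybrid operator $\widetilde{\mathcal T}^{\BF}$ that uses the empirical rows $\widehat P$ for uncovered actions but the true law $\mu$, and let $\widetilde V$ be its fixed point; like $\mathcal T^{\BF}$ and $\widehat{\mathcal T}^{\BF}$, it is a $\gamma^K$-contraction. The triangle inequality gives
\[
\|\widehat V-V^*\|_\infty\le\|\widehat V-\widetilde V\|_\infty+\|\widetilde V-V^*\|_\infty .
\]
We allot accuracy $\alpha\varepsilon_{\rm est}$ to the first term and $(1-\alpha)\varepsilon_{\rm est}$ to the second. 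The sample-splitting parameter $\alpha$ in the statement suggests exactly this allocation.

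\textbf{Prediction-law term.} By the fixed-point identities and contraction,
\[
\|\widehat V-\widetilde V\|_\infty\le (1-\gamma^K)^{-1}\|\widehat{\mathcal T}^{\BF}\widetilde V-\widetilde{\mathcal T}^{\BF}\widetilde V\|_\infty .
\]
For each $s$, the quantity $(\widehat{\mathcal T}^{\BF}\widetilde V)(s)-(\widetilde{\mathcal T}^{\BF}\widetilde V)(s)$ is an empirical mean of $N_2$ i.i.d.\ centered variables $\widehat J_0^{\BF}(s,\boldsymbol\sigma^i;\widetilde V)-\mathbb E_\mu[\cdot]$. To keep the function fixed, we condition on the independent environment data, so that $\widetilde V$ is independent of the $\boldsymbol\sigma^i$. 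A crude range bound $1/(1-\gamma)$ with Hoeffding would only give $D_2\sim 1/((1-\gamma)^4\varepsilon^2)$, which is too weak. The needed step is a variance bound: $\sum_s$-free control of $\mathrm{Var}_\mu(J_0(s,\boldsymbol\Sigma;\widetilde V))\lesssim (1-\gamma^K)/(1-\gamma)^2$.

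We obtain this from a block-level total-variance argument. The value $V^*(s)$ equals $\mathbb E[\text{one-block return}+\gamma^K V^*(S_K)]$. The variance of $J_0$ over $\boldsymbol\Sigma$ is at most the variance of the discounted return accumulated over the block together with the terminal value. Summing the law of total variance over blocks gives
\[
\sum_{b}\gamma^{2Kb}\,\mathrm{Var}_b\le \frac{1}{(1-\gamma)^2},
\]
so each per-block variance weighted by $(1-\gamma^{2K})^{-1}$-type factors yields the claimed order. In the worst case we only use $\mathrm{Var}\le(1-\gamma^K)^{-1}(1-\gamma)^{-2}\cdot(1-\gamma^K)^2$, namely the range of the within-block return plus the fluctuation of the terminal term.

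Bernstein plus a union bound over $s$ then gives
\[
\|\widehat V-\widetilde V\|_\infty\lesssim \frac{1}{1-\gamma^K}\left(\sqrt{\frac{(1-\gamma^K)\log(|\mathcal S|/\delta)}{(1-\gamma)^2N_2}}+\frac{\log}{(1-\gamma)N_2}\right).
\]
Setting this to $\alpha\varepsilon_{\rm est}$ gives the stated $D_2$, with the lower-order term absorbed since $\varepsilon_{\rm est}\le1$. The main obstacle is making the variance bound rigorous. The $J_0$ here is an optimized quantity, not a policy return, so we pass to the fixed greedy within-block policy for each $\boldsymbol\sigma$ and use $|J_0-\mathbb E J_0|$ bounds along that policy.

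\textbf{Transition term.} The term $\|\widetilde V-V^*\|_\infty$ is a standard generative-model estimation problem restricted to the uncovered rows $\mathcal U$. We write one-sided comparisons. First, $\widetilde V-V^*\le\widetilde V^{\tilde\pi}-V^{\tilde\pi}$ for the optimal policy $\tilde\pi$ of the estimated model. Second, $V^*-\widetilde V\le V^{\pi^*}-\widetilde V^{\pi^*}$ for the true optimal block-feedback policy $\pi^*$.

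For a fixed (prediction-dependent) policy, the simulation lemma expresses each difference as
\[
\mathbb E\sum_t\gamma^{t+1}(\widehat P-P)(\cdot\mid S_t,A_t)V^{\pi}\,\mathbf 1\{A_t\notin\mathcal A^-\}.
\]
Covered actions contribute zero because both models use the same $\sigma_j$. We then apply the Agarwal et al.\ / Azar et al.\ machinery: Bernstein on $(\widehat P-P)V$ per row, and the total-variance bound $\sum_t\gamma^t\sqrt{\mathrm{Var}}\lesssim(1-\gamma)^{-3/2}$ under the induced state-action occupancy (which here averages over predictions). The data dependence of $\tilde\pi$ is handled by the absorbing-MDP / $s$-leave-one-out construction, or more simply by the range restriction $\varepsilon\le1$ with a union over an $\varepsilon$-net of values on $\mathcal S$. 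This yields
\[
\|\widetilde V-V^*\|_\infty\lesssim\sqrt{\frac{\log}{(1-\gamma)^3N_1}},
\]
so $N_1\asymp 1/((1-\gamma)^3(1-\alpha)^2\varepsilon_{\rm est}^2)$ per uncovered row. Multiplying by $|\mathcal S|(|\mathcal A|-|\mathcal A^-|)$ and adding the reward observations gives $D_1$.
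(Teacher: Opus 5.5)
\textbf{Prediction-law step.} Your decomposition through $V^*_{\widehat P,\mu}$ and the $\alpha/(1-\alpha)$ split are fine, but this step rests on a false claim. After the reduction $\|\widehat V-\widetilde V\|_\infty\le(1-\gamma^K)^{-1}\|(\widehat{\mathcal T}^{\BF}-\widetilde{\mathcal T}^{\BF})\widetilde V\|_\infty$, you need $\operatorname{Var}_\mu(\widehat J_0^{\BF}(s,\boldsymbol\Sigma;\widetilde V))\lesssim(1-\gamma^K)/(1-\gamma)^2$ at every state. This fails because the terminal term $\gamma^K\,\mathbb E[\widetilde V(S_K)\mid\boldsymbol\sigma]$ can fluctuate by order $\gamma^K/(1-\gamma)$.

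For example, take $K=1$ and a state $s$ whose only (covered) action has zero reward and leads to a unit-reward or a zero-reward absorbing state with probability $1/2$ each. Let the exact prediction reveal which. Then $\widehat J_0^{\BF}(s,\boldsymbol\Sigma;\widetilde V)=J_0^{\BF}(s,\boldsymbol\Sigma;V^*)\in\{\gamma/(1-\gamma),0\}$ with equal probability, so its variance is $\gamma^2/(4(1-\gamma)^2)$, not $O(1/(1-\gamma))$.

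The total-variance identity only bounds the discounted \emph{sum} of block variances along trajectories, so no per-state bound follows from it. With the true worst case $\Theta((1-\gamma)^{-2})$, your chain requires $N_2\gtrsim 1/((1-\gamma)^2(1-\gamma^K)^2\alpha^2\varepsilon_{\rm est}^2)$, which is a factor $(1-\gamma^K)^{-1}$ too large. The loss comes from the contraction step, not from the problem.

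The missing idea is to keep the block resolvent. Let $\rho_s=(\widehat\mu-\mu)\widehat J_0^{\BF}(s,\cdot;\widetilde V)$, and let $\widehat M$ be the block-end matrix under $\widehat\mu$ of a rule greedy for $\widehat V$. Greediness gives $(\widehat V-\widetilde V)_+\le(I-\gamma^K\widehat M)^{-1}|\rho|$. Evaluating a rule greedy for $\widetilde V$ under $\widehat\mu$ gives the reverse direction.

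Cauchy--Schwarz then reduces the problem to bounding $\|(I-\gamma^K\widehat M)^{-1}v\|_\infty$. The total-variance identity controls this only for the empirical block process. However, Bernstein's $v_s$ is the $\mu$-variance of the fixed target. You therefore also need an empirical-variance bound and a variance transfer. The transfer produces a term proportional to $\|\widehat V-\widetilde V\|_\infty$, which is absorbed once $N_2\gtrsim\log(|\mathcal S|/\delta)/(1-\gamma^K)^2$. This is Lemma~\ref{lem:bf-prediction-perturbation-tight}, which applies with $Q=\widehat P$ conditionally on the environment data.

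\textbf{Transition step.} This step is also incomplete, and your ordering makes it harder than necessary. Since $\widetilde V$ uses the true law $\mu$, the next-state targets $J_{j+1}(\cdot,\boldsymbol\sigma)$ are indexed by the possibly continuous support of $\mu$ and depend on the row samples.

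The absorbing-MDP trick does not transfer directly. One uncovered row feeds the values at its state for every phase and every prediction, so no single scalar absorbing reward reproduces them. An $\varepsilon$-net over $[0,(1-\gamma)^{-1}]^{\mathcal S}$ does cover all targets, but its log-cardinality is of order $|\mathcal S|$. That inflates $D_1$ by a polynomial factor $|\mathcal S|$, not a logarithm.

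The paper avoids this by decomposing through $V^*_{P,\widehat\mu}$ instead. Conditioning on the finitely supported $\widehat\mu$ makes the transition comparison a finite augmented MDP. Bernstein is then applied to fixed targets built from the true augmented optimal value, with a union bound over uncovered rows, sampled predictions and phases that costs only $\log(KN_2)$. The one-sided empirical-resolvent comparison with variance transfer replaces any leave-one-out argument. If you keep your ordering, you need the integrated-score and replace-one-stability argument of Lemma~\ref{lem:true-law-transition-bridge}.
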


The proofs of Theorems~\ref{thm:bf-bola-sample} and~\ref{thm:bf-bola-policy-sample} are given in Appendices~\ref{app:bf-bola-value-estimation-proofs} and~\ref{app:bf-bola-policy-learning-proofs}, respectively.

\subsection{Minimax Lower Bounds and Policy--Value Separation}
\label{subsec:learning-lower-bounds}

We establish lower bounds for both environment and prediction samples. These bounds also distinguish the prediction-sample requirements of policy learning and Bayesian value estimation.

\begin{theorem}[Minimax Lower Bounds for Block-Feedback Policy Learning]
\label{thm:bf-bola-policy-lower-bound}
There exist universal constants \(c,c_0,C_0>0\) such that, for every \(\gamma\in[1/2,1)\) under Assumption~\ref{assump:prediction-law}, the following lower bounds hold on corresponding hard subclasses for any algorithm that returns an \(\varepsilon_{\rm pol}\)-optimal block-feedback policy with probability at least \(2/3\):
\[
D_1
\ge
c\,
\frac{|\mathcal S|(|\mathcal A|-|\mathcal A^-|)}
{(1-\gamma)^3\varepsilon_{\rm pol}^2},
\qquad
D_2
\ge
c\,
\frac{\gamma^{2K}}
{(1-\gamma)^2(1-\gamma^K)\varepsilon_{\rm pol}^2}.
\]
The \(D_1\) bound isolates uncovered-transition uncertainty. It applies for every \(K\ge1\) and \(\varepsilon_{\rm pol}\in(0,c_0]\) whenever \(|\mathcal A|\ge2\) and \(\mathcal A\setminus\mathcal A^-\ne\varnothing\), even if the prediction law is deterministic and known. The \(D_2\) bound isolates prediction-law uncertainty. It applies for every \(K\ge1\) on a hard subclass with at most \(C_0K\) states and two covered actions, whenever \(0<\varepsilon_{\rm pol}\le c_0\min\{1,\gamma^K/(1-\gamma)\}\), even if the physical transition kernel and rewards are known and all action rows are covered by predictions. The two bounds are proved on separate hard subclasses.
\end{theorem}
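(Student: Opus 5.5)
The two bounds are proved separately. In each case I would build a hard subclass in which one statistical resource is irrelevant, and reduce to a binary or multiple hypothesis test. The key fact I would use throughout is that any algorithm returning an \(\varepsilon_{\rm pol}\)-optimal policy with probability at least \(2/3\) can be turned into a test that identifies the hypothesis. Le Cam's method or Fano's inequality then lower-bounds the sample size through the KL divergence between the induced data distributions.

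\emph{Environment samples \(D_1\).} First I would neutralize the predictions. Take \(\mu\) to be the point mass at the table \(\sigma_j(\cdot\mid s,a)=P(\cdot\mid s,a)\) for \(a\in\mathcal A^-\), which is calibrated by Assumption~\ref{assump:prediction-law}. The prediction then carries no information, and \(\mathcal T^{\BF}\) equals the \(K\)-fold ordinary Bellman operator. By Theorem~\ref{thm:bf-bellman}, block-feedback optimality therefore coincides with ordinary MDP optimality.

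Next I would make the covered actions useless by sending them to a zero-reward absorbing state, so they are never \(\varepsilon_{\rm pol}\)-optimal. On the uncovered rows \(\mathcal S\times(\mathcal A\setminus\mathcal A^-)\) I would embed the classical hard instance of \citet{gheshlaghi2013minimax} and \citet{li2023breaking}. In that instance each uncovered pair self-loops with probability \(p\) or \(p+\Delta\), where \(p\approx\gamma\) and \(\Delta\asymp(1-\gamma)^2\varepsilon_{\rm pol}\). A Fano argument over perturbations of individual rows then requires about \((1-\gamma)^{-3}\varepsilon_{\rm pol}^{-2}\) samples for each of the \(|\mathcal S|(|\mathcal A|-|\mathcal A^-|)\) rows, giving the stated \(D_1\) bound. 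The only new check is that phase-dependent policies using the constant prediction gain nothing, and this holds because their value equals that of an ordinary nonstationary policy.

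\emph{Prediction samples \(D_2\).} Here I would fix \(P\) and \(r\) as known and cover every action. The state space is a chain of \(O(K)\) states tracking the block phase, plus a small number of absorbing branches. At the end of each block, at phase \(K-1\), the decision maker chooses between a safe branch of known value \(v_0\) and a branch that re-enters the game. In the re-entry branch, the next block's prediction reveals a Bernoulli(\(q\)) outcome that determines a reward stream.

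The current block's prediction is observed, and by the current-block cancellation it cannot reveal \(q\). The optimal choice at the block boundary therefore depends on \(q\) only through the future Bayesian value \(W(q)\), which is discounted by \(\gamma^K\). I would use two hypotheses, \(q_\pm=q_0\pm\Delta\), and set \(v_0\) so that the two branches tie at \(q_0\). Under this design, choosing wrongly costs about \(\gamma^K|W'(q_0)|\Delta\), while distinguishing the hypotheses needs \(N_2\gtrsim q_0(1-q_0)/\Delta^2\) samples.

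\emph{Main obstacle.} The step I expect to be hardest is calibrating \(q_0\), the reward stream, and \(\Delta\). The aim is that the ratio \(\mathrm{Var}/(\text{sensitivity})^2\) produces exactly \(\gamma^{2K}/((1-\gamma)^2(1-\gamma^K)\varepsilon_{\rm pol}^2)\). My first attempt would be \(q_0=1-\Theta(1-\gamma^K)\), so the Bernoulli variance is of order \(1-\gamma^K\) and the KL divergence is about \(\Delta^2/(1-\gamma^K)\), with the successful outcome continuing the game. If that produces the wrong powers, I would switch to a reward of scale \((1-\gamma)^{-1}\) resolved once per block with \(q_0=1/2\). The sensitivity would then come from compounding over an expected \((1-\gamma^K)^{-1}\) blocks. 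The restriction \(\varepsilon_{\rm pol}\le c_0\gamma^K/(1-\gamma)\) is exactly what keeps \(\Delta\) a valid probability perturbation. Finally, I would verify the KL computation over \(N_2\) i.i.d. prediction tables and apply Le Cam's two-point bound.
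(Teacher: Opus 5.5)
\textbf{There is a genuine gap in the prediction-sample part.}

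\emph{What goes wrong.} Your two hypotheses $q_\pm=q_0\pm\Delta$ change the probability of the Bernoulli outcome that the prediction reveals. A prediction that reveals a transition outcome $Z\sim\mathrm{Bernoulli}(q)$ reports the row $\delta_Z$. The consistency condition in Assumption~\ref{assump:prediction-law} forces $\mathbb E[\delta_Z]$ to equal the physical row, so $q_+\neq q_-$ are two different physical kernels. The theorem must hold even when the kernel is known. In your model, knowing $P$ fixes $q$ and hence the entire prediction law, so a learner given $P$ needs no prediction samples at all.

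\emph{The missing idea.} With $P$ fixed, only the \emph{informativeness} of a calibrated channel can be uncertain. The paper keeps $q$ fixed, reveals $Z$ with probability $r$, and otherwise reports the prior $q$. The reported value is then $1,0,q$ with probabilities $rq,\,r(1-q),\,1-r$, which is calibrated for every $r$. The hypotheses are $r_{0,1}=r_\star\mp\Delta$, with per-sample divergence $\mathrm{kl}(r_0\Vert r_1)$.

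\emph{How the rate arises.} Write $H=(1-\gamma)^{-1}$ and $h=1-\gamma^K$.
\begin{itemize}
\item For $\gamma^K\le3/4$, a single risky-versus-safe choice at $r_\star=1/2$ gives $v'(r)\asymp H$.
\item For $\gamma^K>3/4$, the paper adds a waiting action that returns to $x$ after one block. With $q=1/4$ this gives $v(r)=\gamma Hrq/(h+\gamma^K rq)$. Near $r_\star=2h/\gamma^K$ one has $v'\asymp H/h$, while $r_\star(1-r_\star)\asymp h$ gives $\mathrm{kl}\asymp\Delta^2/h$. This is the source of the $1/(1-\gamma^K)$ factor.
\end{itemize}
Your placement of the decision one block before re-entry correctly supplies the $\gamma^K$ factor.

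\emph{A second problem.} Current-block cancellation is an upper-bound device and does not justify the test. The returned policy may depend on the current prediction. The paper handles this by testing the policy on the erasure value $\Sigma=q$, which has probability at least $1/4$ under both laws.

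\textbf{The environment-sample part has a hole when $|\mathcal A\setminus\mathcal A^-|=1$.} The theorem covers this case, for example $|\mathcal A|=2$ and $|\mathcal A^-|=1$. If every covered action leads to a zero-reward absorbing state, the single uncovered action is always weakly optimal. Then every policy choosing it is optimal, and policy learning needs no samples. The classical policy-learning family needs at least two non-dominated actions, and your embedding leaves only one.

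The paper instead uses the known covered action as a reference. In each of $n=\Theta(|\mathcal S|)$ gadgets, a decision state chooses between two options:
\begin{itemize}
\item the uncovered action, leading to a recurrent state that succeeds with probability $1-\gamma\pm\Delta$, where $\Delta\asymp\varepsilon_{\rm pol}(1-\gamma)^2$;
\item a covered action, leading to an absorbing state whose value is the midpoint of the two possibilities.
\end{itemize}
Single-coordinate flips with per-sample divergence $\lesssim\varepsilon_{\rm pol}^2(1-\gamma)^3$, summed by the adaptive KL chain rule, give the bound. For $|\mathcal A\setminus\mathcal A^-|\ge2$ your embedding agrees with the paper's.
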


The environment-sample lower bound matches BF--BOLA up to logarithmic factors. Within the stated accuracy range, the prediction-sample lower bound matches the \(D_2\) upper bound in Theorem~\ref{thm:bf-bola-policy-sample} apart from its additive constant. BF--BOLA is therefore minimax optimal in these ranges. The additive constant covers the case where uncertainty about future predictions is already below the target policy accuracy. The prediction-sample lower bound applies to model classes containing the \(O(K)\)-state hard family, for example when \(|\mathcal S|\ge C_0K\).

\begin{theorem}[Lower Bounds for Bayesian Value Estimation]
\label{thm:bayesian-value-estimation-lower-bounds}
\label{thm:prediction-oracle-lower-bound}
\label{thm:predictability-dimension-lower-bound}
There exist universal constants \(c,c_0,C_0>0\) such that, for every prediction horizon \(K\ge1\), every \(\gamma\in[1/2,1)\), and every \(\varepsilon_{\rm est}\in(0,c_0/(1-\gamma)]\), the following lower bounds hold on corresponding hard subclasses for any algorithm that returns an estimate \(\widetilde V\) satisfying \(\|\widetilde V-V^*\|_\infty\le\varepsilon_{\rm est}\) with probability at least \(2/3\):
\[
D_2
\ge
c\,
\frac{1}
{(1-\gamma)^2(1-\gamma^K)\varepsilon_{\rm est}^2},
\qquad
D_1
\ge
c\,
\frac{|\mathcal S|(|\mathcal A|-|\mathcal A^-|)}
{(1-\gamma)^3\varepsilon_{\rm est}^2}.
\]
The \(D_2\) bound isolates prediction-law uncertainty and holds even when the physical transition kernel and rewards are known. The \(D_1\) bound isolates uncovered-transition uncertainty and holds even when predictions on covered rows are exact and known. As above, the two bounds are proved on separate hard subclasses.
\end{theorem}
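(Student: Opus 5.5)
Both lower bounds will be proved by Le Cam two-point (for \(D_2\)) and Fano/Assouad-type (for \(D_1\)) reductions, on two separate hard subclasses. In each case we find a family of instances whose optimal Bayesian values are separated by more than \(2\varepsilon_{\rm est}\) at some state, while the data distributions are statistically close. Then no estimator can be \(\varepsilon_{\rm est}\)-accurate on all of them with probability \(2/3\).

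\textbf{Prediction-sample bound.} Take a known physical kernel and rewards, with all rows covered. Use a small chain with two actions at a decision state. Each block's prediction is a Bernoulli signal \(Z\in\{0,1\}\) with parameter \(p\in\{1/2-\Delta,1/2+\Delta\}\). When \(Z=1\), the prediction reveals that a rewarding transition (reward \(1\) for all \(K\) phases, returning to the start state) is available at phase \(0\); when \(Z=0\) it is not. Calibration fixes the base row \(P\) as the \(\mu\)-average, so the two hypotheses use different base kernels; since the transition kernel is known on this subclass, this is allowed. Under block feedback the decision maker exploits the signal.

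The per-block expected reward is then \(c_1 + p\,c_2(1-\gamma^K)/(1-\gamma)\). Through the fixed point of Theorem~\ref{thm:bf-bellman}, the value is \(V=\mathrm{(block\ reward)}/(1-\gamma^K)\). Hence the two hypotheses give
\[
|V_+^*-V_-^*|\asymp \frac{\Delta}{1-\gamma},
\]
which needs the sharper scaling \(\Delta(1-\gamma^K)^{-1/2}/(1-\gamma)\) after tuning. To obtain that factor, I would instead let the signal be the rare event with \(p\asymp 1-\gamma^K\). The Bernoulli KL is then \(\Delta^2/p\). The value difference is \(\Delta\cdot \frac{1-\gamma^K}{1-\gamma}\cdot\frac{1}{1-\gamma^K}\), so \(\Delta\asymp(1-\gamma)\varepsilon\), and the KL for \(N_2\) samples is \(N_2\Delta^2/(1-\gamma^K)\). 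Requiring this to be at least a constant gives
\[
N_2\gtrsim \frac{1}{(1-\gamma)^2(1-\gamma^K)\varepsilon^2}.
\]
This variance calibration, choosing \(p\) so that the signal variance and the value sensitivity combine to produce exactly \((1-\gamma^K)^{-1}\), is the main obstacle. The condition \(\varepsilon\le c_0/(1-\gamma)\) keeps \(\Delta\le p\).

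\textbf{Environment-sample bound.} Make predictions exact, deterministic and known on \(\mathcal A^-\), and use them only as harmless actions (for instance, self-loops with zero reward). This reduces the problem to the classical Azar et al. hard instance placed on the uncovered rows \(\mathcal U\). Each uncovered \((s,a)\) leads to an absorbing-like state with self-loop probability \(q\pm\Delta\), where \(q=(4\gamma-1)/(3\gamma)\). This yields the standard value perturbation \(\asymp\Delta/(1-\gamma)^2\) with KL \(\asymp\Delta^2/(1-\gamma)\) per sample.

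Since \(V^*\) at each \(s\) depends on the uncovered row through a max, I would make one uncovered action dominant per state, or use a separate state copy per pair. Then each of the \(|\mathcal S|(|\mathcal A|-|\mathcal A^-|)\) rows must be estimated individually. Block feedback does not change values here: the predictions are deterministic, so \(\mathcal T^{\BF}\) reduces to \(\mathcal T^K\) of the base MDP, and \(V^*=V^*_{\rm MDP}\). A Fano argument over the product of hypotheses then yields
\[
D_1\gtrsim \frac{|\mathcal S|(|\mathcal A|-|\mathcal A^-|)}{(1-\gamma)^3\varepsilon^2},
\]
valid for \(\varepsilon\le c_0/(1-\gamma)\).
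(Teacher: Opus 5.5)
\textbf{Gap 1: the $D_2$ construction changes the physical kernel.} By the consistency condition in Assumption~\ref{assump:prediction-law}, the $\mu$-average of a predicted row equals the physical row. Your hypotheses $p=1/2\pm\Delta$, and the rare-event variant, therefore have different physical kernels. The theorem asserts the $D_2$ bound with the kernel known. A known kernel reveals $p$, so the two models are separated with zero prediction samples. Any generative-model query of that row would also carry the same Bernoulli information.

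The missing idea is to vary how \emph{informative} the signal is while keeping its mean fixed. The paper fixes the physical success indicator $Z\sim\mathrm{Bernoulli}(q)$ and uses an erasure channel: with probability $r$ the prediction reveals $Z$, and otherwise it reports $q$. Then $\Sigma\in\{1,0,q\}$ with probabilities $rq,\,r(1-q),\,1-r$, and $\mathbb P(Z=1\mid\Sigma)=\Sigma$. The mean is $q$ for every $r$, so the kernel is unchanged, environment samples are uninformative, and each prediction sample contributes exactly $\mathrm{kl}(r_0\Vert r_1)$.

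\textbf{Gap 2: the rare-event scaling is inverted.} With $p\asymp 1-\gamma^K$, per-sample KL $\asymp\Delta^2/(1-\gamma^K)$, and $\Delta\asymp(1-\gamma)\varepsilon_{\rm est}$, the condition $N_2\Delta^2/(1-\gamma^K)\gtrsim1$ gives only $N_2\gtrsim(1-\gamma^K)/((1-\gamma)^2\varepsilon_{\rm est}^2)$. That is weaker than your $p=1/2$ bound, not stronger. A rare signal is easier to estimate to fixed absolute accuracy. It helps only if the value's sensitivity to the parameter grows like $1/((1-\gamma)(1-\gamma^K))$ at the same time.

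Your i.i.d.-block design cannot provide this. Its value is linear in $p$ with slope at most $1/(1-\gamma)$: a block's gain is at most $(1-\gamma^K)/(1-\gamma)$, summed with total weight $1/(1-\gamma^K)$. So it can never beat $1/((1-\gamma)^2\varepsilon_{\rm est}^2)$.

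The paper gets the amplification from a recurrent stopping structure when $\gamma^K>3/4$. At $x$ the decision maker either takes a risky action (success probability $q=1/4$ into a unit-reward absorbing state) or waits $K$ zero-reward steps and returns to $x$ with a fresh prediction. Write $H=(1-\gamma)^{-1}$, $\beta=\gamma^K$, and $h=1-\gamma^K$. Near $r_\star=2h/\beta$ it is optimal to act only on a revealed success, so $v(r)=\gamma Hrq/(h+\beta rq)$ and $v'(r)\asymp H/h$. Taking $\Delta\asymp\varepsilon_{\rm est}h/H$ then gives $\mathrm{kl}(r_0\Vert r_1)\asymp\Delta^2/h\asymp h\varepsilon_{\rm est}^2/H^2$, which is what yields the $(1-\gamma^K)^{-1}$ factor. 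For $\gamma^K\le3/4$ a one-shot risky/safe choice at $r_\star=1/2$ suffices, since $h\ge1/4$.

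\textbf{The $D_1$ part.} Your $D_1$ argument is the same reduction as the paper's: harmless covered actions with a known deterministic prediction, so $V^*=V^*_{\mathrm{MDP}}$, plus the classical hard family on the uncovered rows. One caution: making a single uncovered action dominant at each state exposes only $|\mathcal S|$ rows to a state-value estimator. To get the $|\mathcal A|-|\mathcal A^-|$ factor you need the state-value form of the classical bound, in which all uncovered actions compete in the maximum, as the paper invokes.
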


The two lower-bound constructions isolate the two statistical bottlenecks. Prediction-sample hardness comes from learning the informativeness of a calibrated prediction channel while the physical MDP is fixed. Environment-sample hardness comes from embedding a classical discounted-MDP hard instance only in the uncovered rows \citep{azar2012sample,gheshlaghi2013minimax,agarwal2020model,li2023breaking}. Prediction coverage removes covered rows from the environment-sampling burden, but it cannot make the remaining uncovered rows easier than ordinary MDP rows.

The proofs of both lower bounds are given in Appendix~\ref{app:learning-lower-bound-proofs}.

\FloatBarrier
\begin{table}[!htbp]
\centering
\small
\renewcommand{\arraystretch}{1.75}
\setlength{\tabcolsep}{6pt}
\begin{tabularx}{\textwidth}{@{}
>{\raggedright\arraybackslash}p{0.25\textwidth}
>{\centering\arraybackslash}X
>{\centering\arraybackslash}X
@{}}
\toprule
Target & Environment samples \(D_1\) & Prediction samples \(D_2\) \\
\midrule
Near-optimal policy
&
\(\displaystyle
\widetilde{\Theta}\!\left(
\frac{|\mathcal S|(|\mathcal A|-|\mathcal A^-|)}
{(1-\gamma)^3\varepsilon_{\rm pol}^2}
\right)\)
&
\(\displaystyle
\widetilde{\Theta}\!\left(
1+
\frac{\gamma^{2K}}
{(1-\gamma)^2(1-\gamma^K)\varepsilon_{\rm pol}^2}
\right)\)
\\
\addlinespace[0.75em]
Bayesian value estimation
&
\(\displaystyle
\widetilde{\Theta}\!\left(
\frac{|\mathcal S|(|\mathcal A|-|\mathcal A^-|)}
{(1-\gamma)^3\varepsilon_{\rm est}^2}
\right)\)
&
\(\displaystyle
\widetilde{\Theta}\!\left(
\frac{1}
{(1-\gamma)^2(1-\gamma^K)\varepsilon_{\rm est}^2}
\right)\)
\\
\bottomrule
\end{tabularx}
\caption{Statistical rates under block feedback, up to logarithmic factors, in the matching regimes of Theorems~\ref{thm:bf-bola-policy-lower-bound} and~\ref{thm:bayesian-value-estimation-lower-bounds}. The \(D_1\) rates omit the additive \(|\mathcal S||\mathcal A|\) reward observations.}
\label{tab:bf-learning-rates}
\end{table}

Table~\ref{tab:bf-learning-rates} summarizes the statistical rates. The environment-sample budget \(D_1\) scales with the number of uncovered transition rows, reducing the classical tabular dimension from \(|\mathcal S||\mathcal A|\) to \(|\mathcal S|(|\mathcal A|-|\mathcal A^-|)\), while preserving the usual per-row dependence \(1/((1-\gamma)^3\varepsilon^2)\). The prediction-sample budget \(D_2\) shows a different phenomenon. For policy learning, the prediction observed in the current block is used directly at deployment, so the learner only needs to estimate how prediction uncertainty affects future Bayesian values. The contribution of future blocks is discounted by \(\gamma^K\), yielding the \(\gamma^{2K}\) factor in the policy-learning sample complexity. By contrast, Bayesian value estimation targets the prediction-averaged value before the current prediction is observed and must also estimate the current-block prediction average, yielding the larger \(D_2\) rate.

\section{Numerical Experiments}
\label{sec:exp}

We use numerical experiments to evaluate the value of predictions and the learning performance of BF--BOLA. The experiments ask two questions. First, do transition predictions create value, and does block feedback recover the value lost by fixed-horizon commitment? Second, do the two data sources in BF--BOLA affect learning in the way predicted by the sample-complexity bounds? Figs.~\ref{fig:synthetic-value-information} and \ref{fig:synthetic-learning-rates} answer these questions in controlled finite-MDP instances. Fig.~\ref{fig:storage-control-case-study} further illustrates prediction-aware decision making in a wind-storage control case study.

\subsection{Experimental Design}
\label{subsec:experimental-design}

Figs.~\ref{fig:synthetic-value-information} and \ref{fig:synthetic-learning-rates} use a four-state production MDP with three actions and discount factor \(\gamma=0.92\). The states are unprepared, prepared for type 0, prepared for type 1, and completed output. Two operating actions prepare or execute their respective types, and the third action waits or ships completed output. Switching types requires reconfiguration, and failed execution preserves the existing preparation. Setup and execution costs are sampled uniformly from \([0.015,0.025]\) and \([0.045,0.055]\), respectively, and shipping revenue from \([0.90,0.93]\). Adding the same constant \(0.06\) to every net reward places rewards in \([0,1]\) without changing policy comparisons.

Each type has an independent, equally likely good or bad execution mode in every period. The corresponding success probabilities are sampled uniformly from \([0.83,0.89]\) and \([0.04,0.10]\). At channel-noise level \(\eta\in[0,1]\), each observed mode bit is flipped with probability \(\eta/2\), and the prediction reports the resulting posterior transition rows. The rows average to the same base kernel for every noise level and coverage set. Independence across periods ensures that ignoring predictions recovers the base MDP. Values and policies are computed by solving the corresponding Bellman equations. Fig.~\ref{fig:synthetic-value-information} uses 24 MDP instances, and Fig.~\ref{fig:synthetic-learning-rates} uses the first 10 of these instances with 12 independent training runs at each budget.

\subsection{Value of Information}
\label{subsec:experiment-value-information}

Fig.~\ref{fig:synthetic-value-information} separates the value of advance information from the value of state feedback. All values start from the unprepared state. In (a), FH, BF, and RH use the same prediction horizon and quality, while one-step BF observes only the current period's prediction. Advance information changes which type is prepared, and state feedback determines whether to execute, reconfigure, wait, or ship. RH updates its \(K\)-step prediction every period by retaining the overlapping \(K-1\) components and appending the prediction for one additional future period. Its value is computed on the augmented state space. In (b), BF and RH both gain over one-step prediction. The BF curve is not monotone because changing \(K\) also changes block boundaries. Panels (c) and (d) show how noisier signals and reduced coverage lower value. FH's negative gain at zero coverage reflects the cost of open-loop commitment.

\begin{figure}[!t]
  \centering
  \begin{subfigure}[t]{0.48\linewidth}
    \centering
    \includegraphics[width=\linewidth]{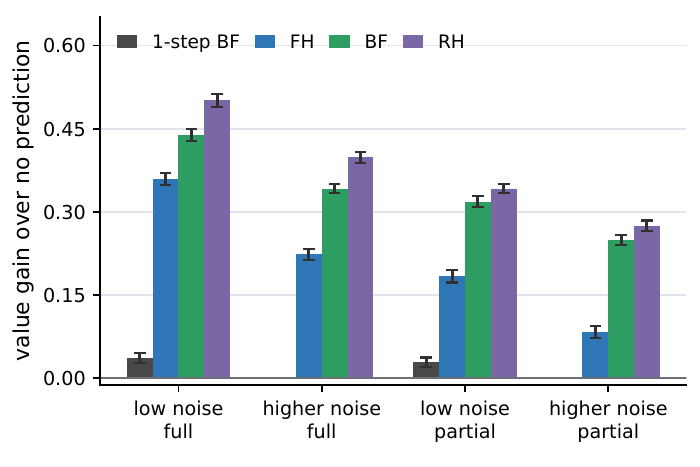}
    \caption{Protocol comparison}
  \end{subfigure}
  \hfill
  \begin{subfigure}[t]{0.48\linewidth}
    \centering
    \includegraphics[width=\linewidth]{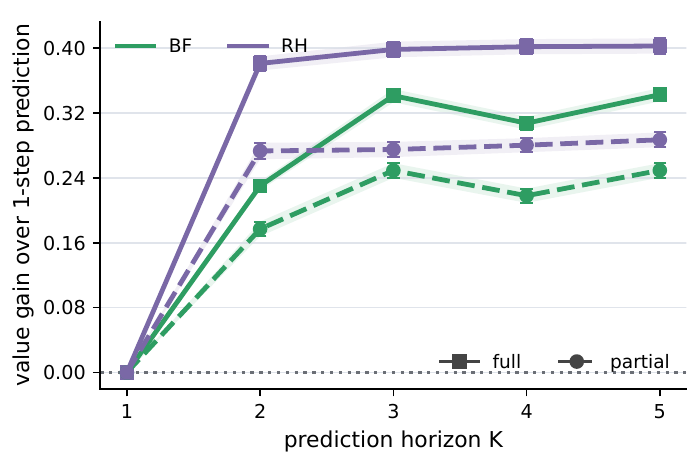}
    \caption{Value of advance prediction}
  \end{subfigure}

  \vspace{0.6em}
  \begin{subfigure}[t]{0.48\linewidth}
    \centering
    \includegraphics[width=\linewidth]{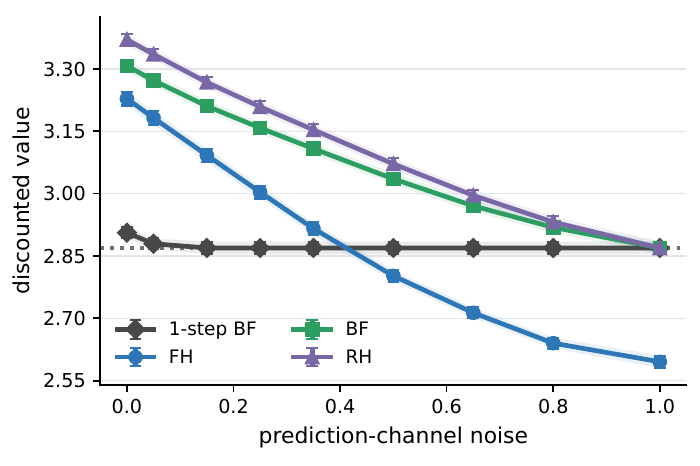}
    \caption{Prediction-channel noise}
  \end{subfigure}
  \hfill
  \begin{subfigure}[t]{0.48\linewidth}
    \centering
    \includegraphics[width=\linewidth]{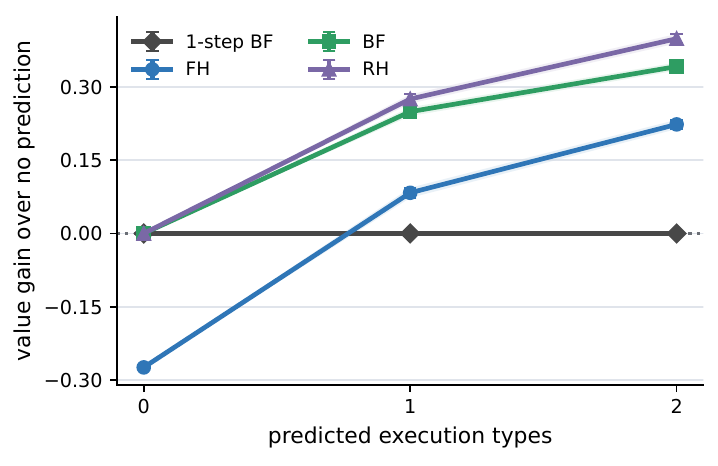}
    \caption{Action coverage}
  \end{subfigure}
  \caption{Value of transition predictions in the production MDP. The two noise levels in (a) are \(0\) and \(0.15\). In (b), solid curves have full coverage and dashed curves predict type 0 only. We use \(K=3\) in (a), (c), and (d), and channel noise \(0.15\) in (b) and (d). Each gain is measured against its baseline in the same MDP. Error bars and bands are \(1.96\) standard errors across 24 MDP instances.}
  \label{fig:synthetic-value-information}
\end{figure}
\subsection{Learning Efficiency}
\label{subsec:experiment-learning-efficiency}

Fig.~\ref{fig:synthetic-learning-rates} studies BF--BOLA on the same production MDPs with \(K=3\) and channel noise \(0.15\). The top row uses full coverage and varies the prediction-sample budget \(D_2\), isolating prediction-law estimation. The bottom row provides the true prediction law and varies the number of samples per uncovered transition row, comparing type-0 coverage with no prediction in the same MDPs. Value error and policy loss are measured in the maximum norm over states, and each learned policy is evaluated under the true transition model and prediction law. The value errors decrease roughly as the inverse square root of the sample size, as shown by the reference curves. At \(D_2=2048\), the empirical success rate for \(\varepsilon_{\rm pol}=10^{-3}\) is \(95\%\), while the mean Bayesian value error remains approximately \(1.2\times10^{-2}\).

\begin{figure}[!t]
  \centering
  \begin{subfigure}[t]{0.48\linewidth}
    \centering
    \includegraphics[width=\linewidth]{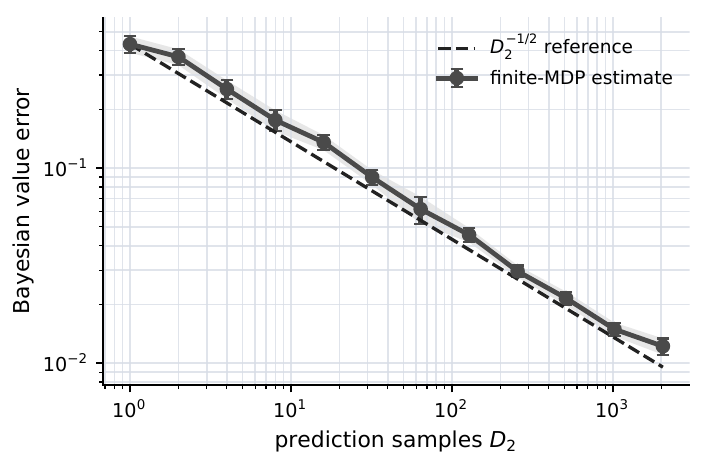}
    \caption{\(D_2\): value error}
  \end{subfigure}
  \hfill
  \begin{subfigure}[t]{0.48\linewidth}
    \centering
    \includegraphics[width=\linewidth]{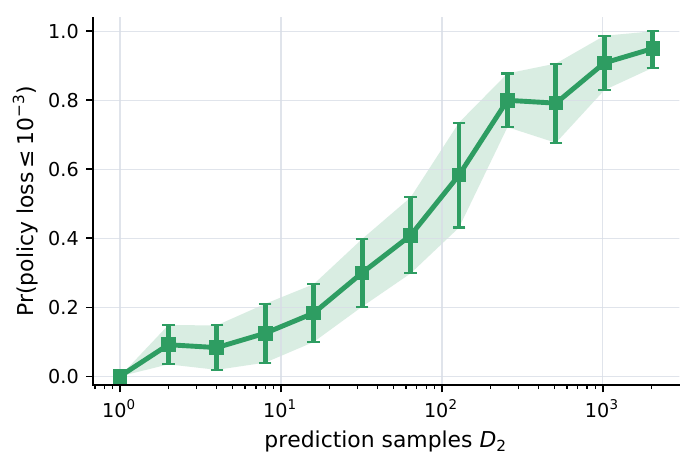}
    \caption{\(D_2\): policy success}
  \end{subfigure}

  \vspace{0.6em}
  \begin{subfigure}[t]{0.48\linewidth}
    \centering
    \includegraphics[width=\linewidth]{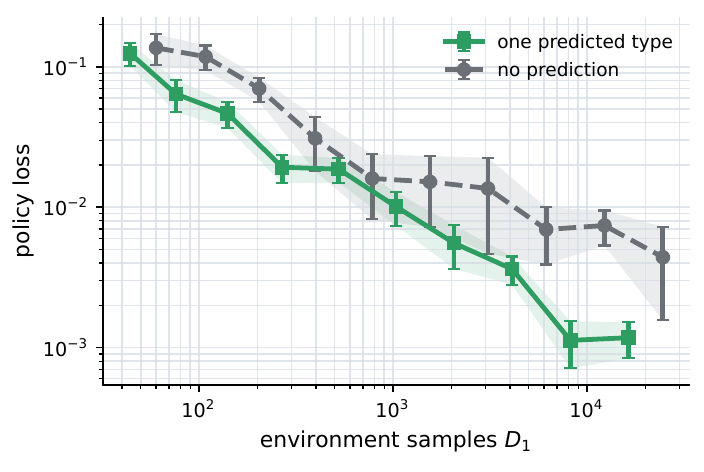}
    \caption{\(D_1\): policy loss}
  \end{subfigure}
  \hfill
  \begin{subfigure}[t]{0.48\linewidth}
    \centering
    \includegraphics[width=\linewidth]{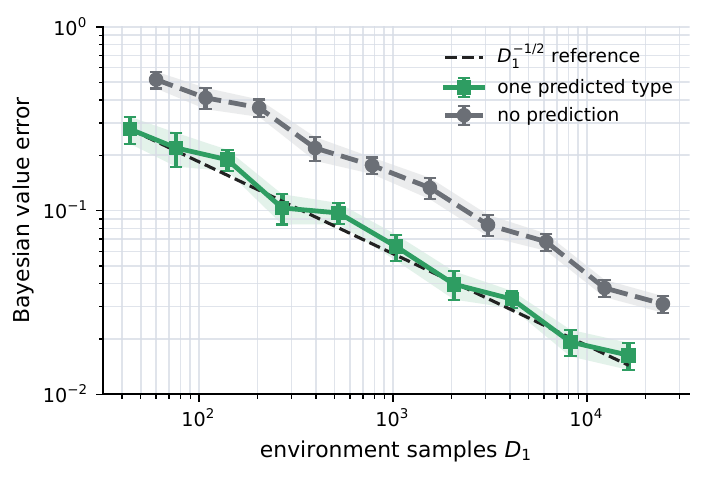}
    \caption{\(D_1\): value error}
  \end{subfigure}
  \caption{Two-source learning in the production MDP. The top row varies \(D_2\) under full coverage. The bottom row varies \(D_1\) under type-0 coverage or no prediction, with the prediction law provided. The budget \(D_1\) includes 12 reward observations. Policy success means loss at most \(10^{-3}\). Each budget uses 12 training runs on each of 10 MDP instances. Error bars and bands are \(1.96\) standard errors across instance means.}
  \label{fig:synthetic-learning-rates}
\end{figure}
\subsection{Wind-Storage Control Case Study}
\label{subsec:wind-storage-case-study}

Finally, we compare optimal FH, BF, and RH policies in a finite wind-storage MDP calibrated to historical data. The operator charges or discharges a battery to reduce wind-delivery penalties, following the storage setting of \citet{lu2024self}. Price and wind mismatch are exogenous, while state of charge is controlled.

We use 52,404 timestamp-aligned wind and CAISO-price observations from January--June 2020 \citep{caiso}. Prices are clipped in place to \([0.1,100]\), and wind mismatch is divided by 100. Crossing the median-price split with the deficit/surplus split gives four joint exogenous states. Their frequencies define \(q\), and their conditional means define prices \(p_e\) and mismatches \(\Delta_e\). The evaluation MDP draws \(e_t\) independently from \(q\) each period. We use an illustrative capacity-limited regime with state \((i_t,e_t)\), where \(i_t\in\{0,0.5,\ldots,4\}\) is stored energy. The per-period charge/discharge limit is 1.5, so rebuilding a depleted reserve requires multiple periods. Requested actions \(\{-1.5,-1,\ldots,1.5\}\) are clipped to \([-i_t,4-i_t]\). Positive actual action \(a_t\) charges the battery, so \(i_{t+1}=i_t+a_t\) and the cost is \(c_t=p_{e_t}(a_t-\Delta_{e_t})_+\). Energy and costs use the corresponding scaled units. All policies minimize \(\mathbb E[\sum_{t\ge0}0.95^t c_t]\), starting from \(i_0=2\) and \(e_0\sim q\). This is equivalent to maximizing bounded rewards \(r=1-c/\max c\).

Predictions reveal a noisy signal of each upcoming exogenous state. Given \(e'\sim q\), the signal is \(z=e'\) with probability \(1-\eta\) and is otherwise drawn independently from \(q\). Thus \(z\sim q\), and its posterior is \(\rho_z(e')=(1-\eta)\mathbf 1\{e'=z\}+\eta q(e')\). The predicted transition row combines the deterministic storage update with \(\rho_z\). It averages to the base transition row, and independent period signals satisfy Assumption~\ref{assump:prediction-law}. All actions are covered. FH and BF observe the next \(K\) signals at block boundaries. RH retains the overlapping \(K-1\) signals and observes one additional signal each period. We solve the FH and BF block Bellman equations and the augmented RH equation to value error at most \(10^{-7}\), without a terminal-value approximation.

Fig.~\ref{fig:storage-control-case-study}(a) compares expected discounted operating costs. At \(K=3\) and \(\eta=0.15\), the mean paired savings over no prediction are \(12.9\%\) for BF and \(16.2\%\) for RH across 12 MDPs fitted to day-level bootstrap resamples. One-step BF saves \(8.6\%\), so multi-step BF provides an additional 4.3 percentage points of savings. The no-prediction policy retains state feedback, whereas FH cannot adapt its block actions to residual wind and price uncertainty. BF has no higher optimal cost than FH in any tested model or setting. In (b), the prediction benefit vanishes as \(\eta\) reaches 1. In (c), increasing \(K\) from 1 to 5 raises the mean savings to \(15.0\%\) for BF and \(18.0\%\) for RH. The three protocols agree at \(K=1\), and FH and BF agree under exact prediction. Panel (d) shows numerical solution times.

\begin{figure}[!t]
  \centering
  \begin{subfigure}[t]{0.48\linewidth}
    \centering
    \includegraphics[width=\linewidth]{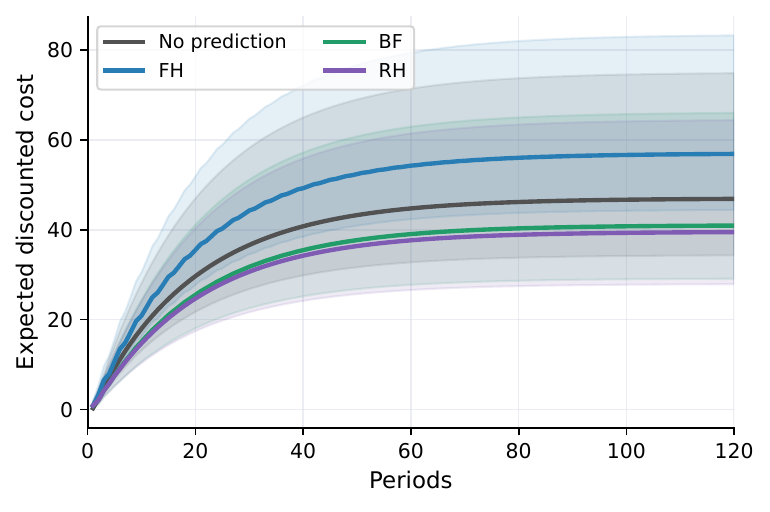}
    \caption{Discounted operating cost}
  \end{subfigure}
  \hfill
  \begin{subfigure}[t]{0.48\linewidth}
    \centering
    \includegraphics[width=\linewidth]{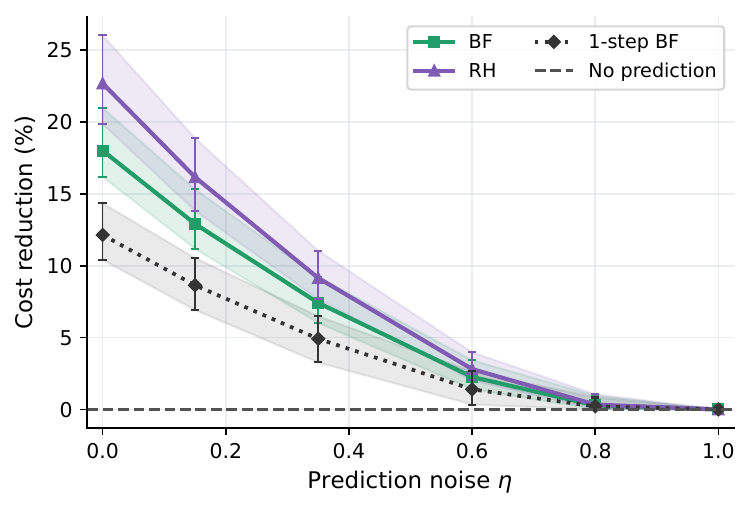}
    \caption{Prediction gain under noise}
  \end{subfigure}

  \vspace{0.6em}
  \begin{subfigure}[t]{0.48\linewidth}
    \centering
    \includegraphics[width=\linewidth]{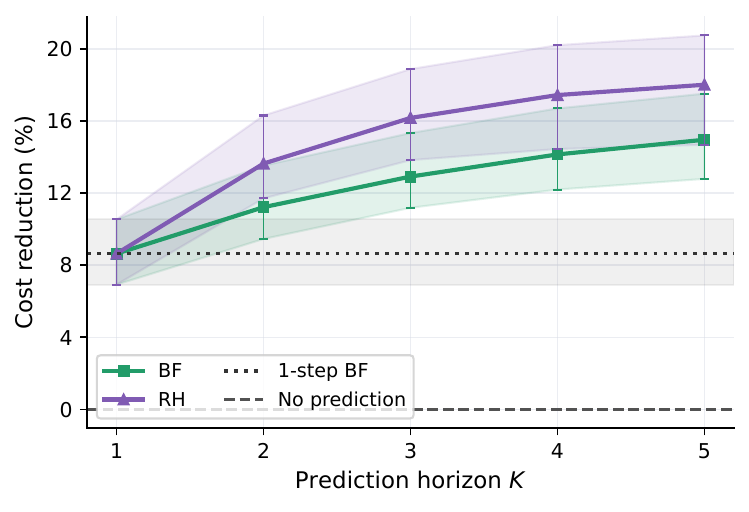}
    \caption{Prediction horizon}
  \end{subfigure}
  \hfill
  \begin{subfigure}[t]{0.48\linewidth}
    \centering
    \includegraphics[width=\linewidth]{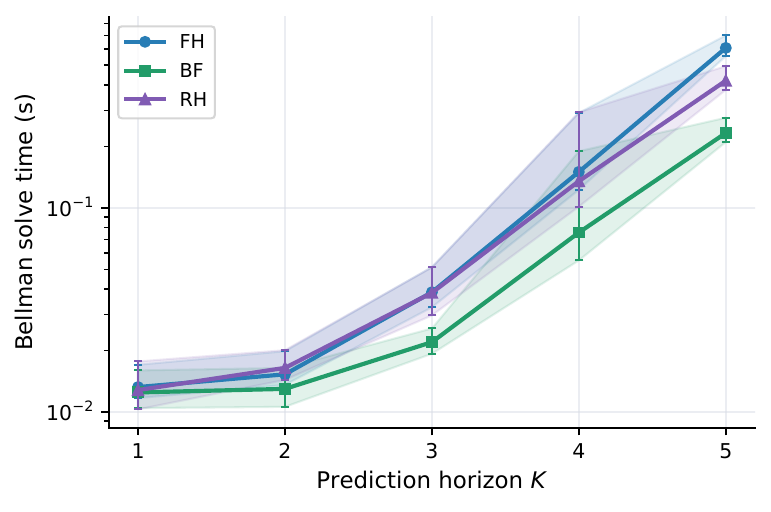}
    \caption{Bellman solution time}
  \end{subfigure}
  \caption{Optimal control in the data-calibrated wind-storage MDP, with capacity 4, per-period charge/discharge limit 1.5, and \(\gamma=0.95\). Panel (a) shows exact expected discounted partial sums under the infinite-horizon optimal policies. Panels (b,c) show paired percentage savings over the no-prediction policy for feedback protocols, including a one-step BF reference. Panels (a,b) use \(K=3\), except for that reference. Panels (a,c,d) use \(\eta=0.15\). Panel (d) times the model-specific Bellman solvers at the same value tolerance. Curves show means across 12 MDPs fitted to day-level bootstrap resamples. Bands and error bars show their 2.5th--97.5th percentiles.}
  \label{fig:storage-control-case-study}
\end{figure}

\section{Conclusion}
This paper develops a Bellman framework for reinforcement learning with multi-step transition predictions. Under block feedback, predictions are used as temporary information within each block, while the long-run Bayesian value remains defined on the original state space. This yields a low-dimensional Bellman equation, a value-of-information analysis based on Bellman-Jensen and within-block feedback gains, and the BF--BOLA learning algorithm. The sample-complexity results show that prediction coverage reduces the environment-sampling dimension and that near-optimal policy learning can require fewer prediction samples than estimating the corresponding Bayesian value function.

Several directions remain open. Receding-horizon protocols provide richer information but generally require a larger information state, and developing equally sharp low-dimensional representations for such protocols is an important direction. More broadly, the results suggest that prediction-aware RL should be analyzed not only through model accuracy, but also through the timing, coverage, and statistical learnability of the predictive information.

\newpage

\bibliographystyle{plainnat}
\bibliography{references}

\newpage

\clearpage
\appendix

\section*{Appendix Roadmap}
Appendix~\ref{app:bf-bellman-stack} proves Bellman optimality and the information ordering. Appendix~\ref{app:value-prediction-bounds} gives the value-of-information proofs and matching examples. Appendices~\ref{app:bf-bola-value-estimation-proofs} and~\ref{app:bf-bola-policy-learning-proofs} prove the learning upper bounds and supply their concentration and variance estimates. Appendix~\ref{app:learning-lower-bound-proofs} proves the learning lower bounds and verifies the hard constructions. Appendix~\ref{app:reference-protocol-baseline} contains the reference-protocol results.

\section{Bellman Theory and Information Ordering}
\label{app:bf-bellman-stack}
\label{app:prediction-model-proofs}
We first prove Bellman optimality. We then construct the prediction process and transition outcomes on a common probability space and prove the information ordering.

\subsection{Proof of Theorem~\ref{thm:bf-bellman}}
\label{app:bf-bellman-proof}
Backward induction in \eqref{eq:bf-terminal}--\eqref{eq:Jj-recursion} gives, for any terminal values \(V,W\),
\[
\sup_{s,\boldsymbol\sigma}
\left|J_j^{\BF}(s,\boldsymbol\sigma;V)-J_j^{\BF}(s,\boldsymbol\sigma;W)\right|
\le \gamma^{K-j}\|V-W\|_\infty,
\qquad j=0,\ldots,K.
\]
Averaging at \(j=0\) shows that \(\mathcal T^{\BF}\) is a \(\gamma^K\)-contraction. The Banach fixed-point theorem therefore gives a unique fixed point \(U\in\mathbb R^{|\mathcal S|}\) \citep{banach1922operations}.

To identify \(U\) with the optimal Bayesian value, fix \(\pi\in\Pi^{\BF}\) and define its block-evaluation operator by
\[
(\mathcal T^\pi V)(s)
:=
\mathbb E^{\pi,\mu}\!\left[
\sum_{j=0}^{K-1}\gamma^j r(S_j,A_j)+\gamma^K V(S_K)
\,\middle|\,S_0=s
\right].
\]
At the next block boundary \(t=K\), the prediction \(\boldsymbol\Sigma_1\) for block \(1\) is sampled independently of the completed block, including \(S_K\). Since \(\pi\) uses the same phase-dependent rules in every block, its expected remaining return conditional on \(S_K\) is \(V^\pi(S_K)\). Thus \(V^\pi=\mathcal T^\pi V^\pi\), and the value carried across blocks depends only on the state, not on the previous prediction.

The within-block dynamic program gives \(\mathcal T^\pi U\le\mathcal T^{\BF}U=U\). The operator \(\mathcal T^\pi\) is monotone and a \(\gamma^K\)-contraction, so iterating it from \(U\) yields \(V^\pi\le U\). Conversely, choosing a maximizing action in \eqref{eq:Jj-recursion} with terminal value \(U\) gives a stationary block-feedback policy \(\pi^*\). The maximizers are measurable when selected according to a fixed ordering of the finite action set \citep{puterman2014markov}. This policy satisfies \(\mathcal T^{\pi^*}U=\mathcal T^{\BF}U=U\), so uniqueness of its evaluation fixed point gives \(V^{\pi^*}=U\). Taking the supremum over \(\pi\) proves \(V^*=U\).
\qed

\subsection{Prediction Model and Coupling}
\label{app:prediction-construction}

\paragraph{Potential-next-state construction.}
We realize the operational model on a common probability space as follows. In each block, first draw \(\boldsymbol\Sigma_b\sim\mu\). Conditional on this prediction, draw the potential next states \(\{Z_{b,j}(s,a)\}_{j,s,a}\) independently, with
\[
\mathbb P\!\left(Z_{b,j}(s,a)=s'\mid\boldsymbol\Sigma_b\right)
=\bar P_j(s'\mid s,a,\boldsymbol\Sigma_b).
\]
Repeat this construction independently across blocks, and set \(S_{bK+j+1}=Z_{b,j}(S_{bK+j},A_{bK+j})\). Consistency gives each potential next state marginal law \(P(\cdot\mid s,a)\). Conditional independence is imposed only after observing the complete prediction, so the prediction rows themselves may be dependent. For every nonanticipative policy, this construction has the same joint law of predictions and realized trajectories as the operational model. Base-MDP compatibility in Assumption~\ref{assump:prediction-law} therefore also holds when predictions are ignored.

This construction gives the conditional transition rows in \eqref{eq:operational-transition}. Given the prediction, a current potential next state is independent of earlier draws and policy randomization, so selecting a state-action pair does not change its reported row. The same independence ensures that earlier observations do not update later rows. For a policy that ignores predictions, Base-MDP compatibility in Assumption~\ref{assump:prediction-law} gives the base transition law at every step, and hence the same trajectory distribution and return as in the standard MDP.

For the regime comparisons, write \(\boldsymbol\Sigma_b^\varepsilon=\boldsymbol\Sigma_b\) and \(\mu^\varepsilon=\mu\). Define exact prediction by \(\Sigma^0_{b,j}(\cdot\mid s,a)=\delta_{Z_{b,j}(s,a)}\) on covered rows, and let \(\mu^0\) be its induced law. Full-coverage exact prediction reveals all potential next states in the block, and offline information reveals them in every block. All comparisons use this same coupling, rather than independently chosen exact-prediction laws. The enriched exact-information regime observes both \(\boldsymbol\Sigma_b^\varepsilon\) and \(\boldsymbol\Sigma_b^0\).

\begin{proof}[Proof of the enriched exact-information reduction]
We show that the imperfect signal gives no further value once the exact prediction is known. The enriched value is at least the exact-prediction value because the decision maker can ignore \(\boldsymbol\Sigma_b^\varepsilon\). For the reverse inequality, fix a terminal value \(V\) and realizations \((\xi,\zeta)\) of \((\boldsymbol\Sigma_b^\varepsilon,\boldsymbol\Sigma_b^0)\). The one-hot rows in \(\zeta\) already determine every covered transition. Uncovered potential next states are independent of both signals and of all covered potential next states, so their conditional rows remain \(P\).

We can now use backward induction to remove the imperfect signal from the value recursion. Starting from \(J_K(s,\xi,\zeta;V)=V(s)\), every \(J_j(s,\xi,\zeta;V)\) is independent of \(\xi\), because neither the reward nor any transition row depends on it. Averaging over the coupled predictions gives the ordinary exact-prediction BF operator. The two regimes therefore have the same fixed point and the same value.
\end{proof}

\subsection{Proof of Theorem~\ref{thm:value-ordering-standard-mdp}}
\label{app:value-ordering-proof}
Base-MDP compatibility in Assumption~\ref{assump:prediction-law} gives the first equality, and ignoring predictions gives the first inequality. The coupling argument in Appendix~\ref{app:prediction-model-proofs} shows that exact prediction dominates imperfect prediction. Full-coverage exact prediction can ignore the additional rows. Finally, every BF policy is admissible under the offline information.

The oracle value is also the limit of its finite-horizon versions. Given the exact next-state tables for the first \(k\) periods, the best truncated return is the maximum over finitely many action sequences. Extending such a sequence gives an admissible offline policy with nonnegative later rewards. Conversely, any offline policy's first \(k\) rewards are bounded by this maximum, and its remaining return is at most \(\gamma^k/(1-\gamma)\). Thus the finite-horizon values converge to the oracle value with error at most this discounted tail.
\qed

\section{Value-of-Prediction Bounds}
\label{app:value-prediction-bounds}
We first prove the value-of-information bound. The remaining subsections give the open-loop comparison, the policy-matching argument, and the matching horizon and prediction-error examples.

\subsection{Proof of Theorem~\ref{thm:bf-value-information-bound}}
\label{app:voi-bound-proof}
Write \(V_{\rm full}\), \(V_{\rm part}\), and \(V_{\rm imp}\) for the optimal BF values under exact predictions on all actions, exact predictions on \(\mathcal A^-\), and imperfect predictions on \(\mathcal A^-\), respectively. We bound the three terms in
\[
\begin{aligned}
\|V^{\Bayes,*}_{\mathrm{off}}-V_{\rm imp}\|_\infty
\le{}&\|V^{\Bayes,*}_{\mathrm{off}}-V_{\rm full}\|_\infty
+\|V_{\rm full}-V_{\rm part}\|_\infty
+\|V_{\rm part}-V_{\rm imp}\|_\infty.
\end{aligned}
\]

For the horizon term, Lemma~\ref{lem:post-prediction-markovization} gives a BF rule that matches any offline policy's expected rewards over the first \(K\) periods. Repeating this rule in later blocks gives an admissible stationary BF policy with nonnegative subsequent rewards. The offline policy's remaining reward is at most \(\gamma^K/(1-\gamma)\). Taking suprema over offline policies and using the information ordering gives \(\|V^{\Bayes,*}_{\mathrm{off}}-V_{\rm full}\|_\infty\le A_1\).

We next bound the loss from partial coverage. We apply Definition~\ref{def:partial-coverage-width} at \(V_{\rm part}\) and compare the two fixed-point equations. Using the \(\gamma^K\)-contraction, we obtain
\[
\|V_{\rm full}-V_{\rm part}\|_\infty
\le \gamma^K\|V_{\rm full}-V_{\rm part}\|_\infty+\omega_{K,\mathrm{cov}},
\]
so \(\|V_{\rm full}-V_{\rm part}\|_\infty\le A_3\).

We now bound the effect of prediction error within a single block. Fix exact and imperfect predictions of the same transition outcomes, and let \(d_j\) be their largest \(\ell_1\) row distance over covered state-action pairs at phase \(j\), with \(d_j=0\) if \(\mathcal A^-=\varnothing\). We use the same terminal Bayesian value \(0\le V\le(1-\gamma)^{-1}\mathbf1\) in both recursions and denote the infinity-norm difference between their phase-\(j\) value vectors by \(\Delta_j\).

The terminal values agree, so \(\Delta_K=0\). At each earlier phase, we separate the propagated value error from the transition-row error. The next-phase values are bounded by \((1-\gamma)^{-1}\), and taking the maximum over actions does not increase the largest action-value difference. The backward recursion therefore gives
\[
\Delta_j
\le\gamma\Delta_{j+1}+\frac{\gamma d_j}{1-\gamma},
\qquad j=0,\ldots,K-1.
\]
We unroll this recursion and then average over predictions. Since \(\mathbb E d_j=\epsilon_j\), the Bellman-operator error can be bounded by
\[
\begin{aligned}
\left\|
\mathcal T^{\BF}_{K,\mathcal A^-,\boldsymbol0}V
-\mathcal T^{\BF}_{K,\mathcal A^-,\boldsymbol\varepsilon}V
\right\|_\infty
&\le\mathbb E\Delta_0\\
&\le\sum_{j=0}^{K-1}\frac{\gamma^{j+1}\epsilon_j}{1-\gamma}.
\end{aligned}
\]
We then use the \(\gamma^K\)-contraction to convert this one-block operator bound into the Bayesian value bound \(\|V_{\rm part}-V_{\rm imp}\|_\infty\le A_2\). Combining the three comparisons gives the \(A_1+A_2+A_3\) bound. Since rewards lie in \([0,1]\), the value gap is also at most \((1-\gamma)^{-1}\).
\qed

\subsection{Reference Comparison with Open-Loop Block Planning}
\label{app:fh-value-info}

We compare block feedback with open-loop planning to quantify the value of within-block state feedback. For a Bayesian value $V$ and prediction $\boldsymbol{\sigma}$, let $J^{\BF}_0(s,\boldsymbol{\sigma};V)$ be defined by \eqref{eq:bf-terminal}--\eqref{eq:Jj-recursion}. The corresponding open-loop block value is
\begin{align}
J^{\FH}_0(s,\boldsymbol{\sigma};V)
:=
\max_{\boldsymbol{a}\in\mathcal A^K}
\left\{
\sum_{t=0}^{K-1}\gamma^t
\sum_{s_t}P(s_t\mid s,\boldsymbol{a}_{0:t-1},\boldsymbol{\sigma}_{0:t-1})
r(s_t,a_t)
+\gamma^K
\sum_{s_K}P(s_K\mid s,\boldsymbol{a},\boldsymbol{\sigma})V(s_K)
\right\}.
\label{eq:fh-block-value}
\end{align}
We use \(V^{\Bayes,\FH,*}_{K,\mathcal A^-,\boldsymbol{\varepsilon}}\) for the open-loop Bayesian value defined in Appendix~\ref{app:fh-reference-results}. For any terminal value $V$, define the one-block feedback gain by
\begin{align}
  g_K(V)(s)
  &:=
  \mathbb{E}_{\boldsymbol{\sigma}}
  \left[
    J^{\BF}_0(s,\boldsymbol{\sigma};V)
    -
    J^{\FH}_0(s,\boldsymbol{\sigma};V)
  \right],
  \qquad \forall s\in\mathcal S.
  \label{eq:bf-local-gap}
\end{align}
This gain is nonnegative because BF can use any open-loop action sequence. Its largest value over states is \(\|g_K(V)\|_\infty\).

{
\begin{theorem}[Block Feedback Dominates Open-Loop Block Planning]
\label{thm:bf-dominates-fh}
Fix a prediction horizon $K\ge 1$, predictable action set $\mathcal A^-\subseteq\mathcal A$, and prediction error profile $\boldsymbol{\varepsilon}$. Let \(V_{\BF}\) and \(V_{\FH}\) be the optimal Bayesian values under BF and FH with these same predictions. Then, for every \(s\in\mathcal S\),
\begin{align}
0
\le
g_K(V_{\FH})(s)
\le
V_{\BF}(s)-V_{\FH}(s)
\le
\frac{\|g_K(V_{\BF})\|_\infty}{1-\gamma^K}.
\label{eq:bf-ge-fh}
\end{align}
\end{theorem}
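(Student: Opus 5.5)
Both value functions are fixed points of $\gamma^K$-contractions on $\mathbb R^{|\mathcal S|}$. The BF side is Theorem~\ref{thm:bf-bellman}, with $V_{\BF}=\mathcal T^{\BF}V_{\BF}$ and $(\mathcal T^{\BF}V)(s)=\mathbb E_{\boldsymbol\sigma}[J_0^{\BF}(s,\boldsymbol\sigma;V)]$. For the FH side I will use the analogous characterization from Appendix~\ref{app:fh-reference-results}: $V_{\FH}=\mathcal T^{\FH}V_{\FH}$ with $(\mathcal T^{\FH}V)(s)=\mathbb E_{\boldsymbol\sigma}[J_0^{\FH}(s,\boldsymbol\sigma;V)]$. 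It is proved exactly as in Theorem~\ref{thm:bf-bellman}: monotonicity, a $\gamma^K$-contraction, and block-start values that depend only on the state by across-block independence. With this notation, $g_K(V)=\mathcal T^{\BF}V-\mathcal T^{\FH}V$. The plan is to compare the two fixed points through the operator gap, using monotonicity for the lower bounds and contraction for the upper bound.

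First I will show $g_K(V)\ge0$ for every $V$. Fix $\boldsymbol\sigma$ and an open-loop sequence $\boldsymbol a$. The state-feedback rule that plays $a_j$ at phase $j$ regardless of state is admissible in the within-block recursion \eqref{eq:Jj-recursion}. Its value equals the bracket in \eqref{eq:fh-block-value}, because under a fixed $\boldsymbol\sigma$ the transitions at phase $j$ follow $\bar P_j(\cdot\mid\cdot,a_j,\boldsymbol\sigma)$ (Assumption~\ref{assump:prediction-law}, item 3). Backward induction shows that $J_j^{\BF}$ dominates the value of any decision rule in the block. Hence $J_0^{\BF}\ge J_0^{\FH}$ pointwise, and taking expectations gives the first inequality.

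For the middle inequality I write
\[
\mathcal T^{\BF}V_{\FH}=\mathcal T^{\FH}V_{\FH}+g_K(V_{\FH})=V_{\FH}+g_K(V_{\FH})\ge V_{\FH}.
\]
Iterating the monotone operator $\mathcal T^{\BF}$ from $V_{\FH}$ gives a nondecreasing sequence, which converges to $V_{\BF}$ by contraction. So $V_{\BF}\ge\mathcal T^{\BF}V_{\FH}=V_{\FH}+g_K(V_{\FH})$, which is the claim $V_{\BF}-V_{\FH}\ge g_K(V_{\FH})$. Monotonicity of $\mathcal T^{\BF}$ is immediate, since each $J_j^{\BF}$ is monotone in the terminal value through nonnegative transition weights and max.

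For the upper bound I write
\[
V_{\BF}-V_{\FH}=\bigl(\mathcal T^{\FH}V_{\BF}-\mathcal T^{\FH}V_{\FH}\bigr)+g_K(V_{\BF}).
\]
The first term is at most $\gamma^K\|V_{\BF}-V_{\FH}\|_\infty$ by the FH contraction. The first inequality together with the middle step gives $V_{\BF}-V_{\FH}\ge0$, so its sup-norm equals the maximum over states. Taking the maximum over $s$ and rearranging yields $\|V_{\BF}-V_{\FH}\|_\infty\le\|g_K(V_{\BF})\|_\infty/(1-\gamma^K)$, which bounds the pointwise difference. The contraction of $\mathcal T^{\FH}$ holds because the reward part of \eqref{eq:fh-block-value} does not depend on $V$, the terminal weight is $\gamma^K$ times a probability distribution, and a max of functions is $1$-Lipschitz in sup-norm.

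The main obstacle is justifying that $V_{\FH}$ is the fixed point of $\mathcal T^{\FH}$. This needs a block-start Bellman equation for FH analogous to Theorem~\ref{thm:bf-bellman}, together with attainment of the max over the finite set $\mathcal A^K$. I will take it from Appendix~\ref{app:fh-reference-results}, or reprove it by the same evaluation-operator argument. Everything else is routine monotone and contraction comparison.
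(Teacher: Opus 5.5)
Your proposal is correct and follows essentially the same route as the paper. You write $g_K=\mathcal T^{\BF}-\mathcal T^{\FH}$ and get both lower bounds from two facts: embedding open-loop sequences as state-independent BF rules, and monotone iteration of $\mathcal T^{\BF}$ from $V_{\FH}$. You get the upper bound from the decomposition $V_{\BF}-V_{\FH}=g_K(V_{\BF})+\mathcal T^{\FH}V_{\BF}-\mathcal T^{\FH}V_{\FH}$ and the $\gamma^K$-contraction of $\mathcal T^{\FH}$, relying (as the paper does) on Theorem~\ref{thm:fh-bayesian-bellman} for the FH fixed-point characterization.

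The one difference is cosmetic. You use $V_{\BF}\ge V_{\FH}$ to pass to the sup-norm, whereas the paper applies the triangle inequality directly, so that step is unnecessary but harmless.
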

}
The proof is provided in Appendix~\ref{proof_thm_bf_dominates_fh}. This comparison justifies fixed-horizon planning as the open-loop reference: block feedback can mimic any open-loop block sequence and can only gain from within-block state feedback.

\subsection{Proof of Theorem \ref{thm:bf-dominates-fh}}
\label{proof_thm_bf_dominates_fh}
{
We first show that BF can achieve every FH value and then bound how much it can improve on that value. Define \((\mathcal T^{\FH}U)(s):=\mathbb E_{\boldsymbol\Sigma\sim\mu}[J^{\FH}_0(s,\boldsymbol\Sigma;U)]\). By Theorems~\ref{thm:bf-bellman} and~\ref{thm:fh-bayesian-bellman}, the two values are fixed points of monotone \(\gamma^K\)-contractions. We compare these fixed points using the standard discounted Bellman argument \citep{puterman2014markov,bertsekas1996neuro}.

Every open-loop block sequence is a state-independent BF rule, so \(\mathcal T^{\BF}U\ge\mathcal T^{\FH}U\) and \(g_K(U)\ge0\) for every terminal value \(U\). Iterating \(\mathcal T^{\BF}\) from \(V_{\FH}\) gives an increasing sequence converging to \(V_{\BF}\). Consequently,
\[
  0\le g_K(V_{\FH})
  =\mathcal T^{\BF}V_{\FH}-V_{\FH}
  \le V_{\BF}-V_{\FH}
\]
pointwise, which proves the lower bound.

For the upper bound, we write \(D:=\|V_{\BF}-V_{\FH}\|_\infty\). We separate the one-block feedback gain from the difference in future values. Using the fixed-point equations and the definition of \(g_K\), we obtain
\[
\begin{aligned}
 D
 &=\left\|
   g_K(V_{\BF})
   +\mathcal T^{\FH}V_{\BF}-\mathcal T^{\FH}V_{\FH}
   \right\|_\infty\\
 &\le \|g_K(V_{\BF})\|_\infty+\gamma^K D.
\end{aligned}
\]
Rearranging and combining with the pointwise lower bound proves \eqref{eq:bf-ge-fh}.
}

\subsection{Current-Prediction Policy Matching}
\label{app:current-prediction-policy-matching}

\begin{lemma}[Current-prediction policy matching]
\label{lem:post-prediction-markovization}
Fix a block-start state \(s_0\) and condition on the exact next-state table revealed by the current prediction for all actions over the next \(K\) periods. For any offline policy that may depend on the full future next-state array, there exists a randomized block-feedback rule for the current block whose conditional state-action marginal at each phase \(j=0,\ldots,K-1\) matches that of the offline policy. Consequently, the two rules obtain the same expected discounted reward during the first \(K\) periods.
\end{lemma}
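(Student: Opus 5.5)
The plan is to build the block-feedback rule directly from the conditional action distributions of the offline policy. Let \(\zeta\) denote the exact table \(\{Z_{0,j}(s,a)\}_{j<K,\,s,a}\) revealed by the current full-coverage prediction, and condition on \(\zeta\) throughout. Under the potential-next-state construction of Appendix~\ref{app:prediction-construction}, within the current block the states evolve deterministically given \(\zeta\) and the actions, via \(S_{j+1}=Z_{0,j}(S_j,A_j)\).

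An offline policy may choose \(A_j\) using \(\zeta\), the future-block tables \(\{Z_{b,\cdot}\}_{b\ge1}\), the past trajectory and its own randomization. Let \(\mathbb P^{\rm off}(\cdot\mid\zeta)\) be the resulting conditional law. For each phase \(j\), define the randomized rule
\[
\pi_j(a\mid s,\zeta):=\mathbb P^{\rm off}\!\left(A_j=a\,\middle|\,S_j=s,\ \zeta\right)
\quad\text{whenever } \mathbb P^{\rm off}(S_j=s\mid\zeta)>0 .
\]
On the remaining states, \(\pi_j(\cdot\mid s,\zeta)\) is an arbitrary fixed action distribution. This rule is a function only of the phase, the current state and the current prediction. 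Because exact prediction is one-hot, \(\zeta\) is recovered from \(\boldsymbol\sigma\). The rule is therefore admissible in \(\Pi^{\BF}\) for this block. Since \(\zeta\) takes finitely many values, measurability is not an issue.

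Next, I will show by induction on \(j\) that \(\mathbb P^{\BF}(S_j=s,A_j=a\mid\zeta)=\mathbb P^{\rm off}(S_j=s,A_j=a\mid\zeta)\) for all \(s,a\). At \(j=0\), both processes start at \(s_0\), and the joint law matches by the definition of \(\pi_0\). For the inductive step, assume the state marginals at phase \(j\) agree. The block-feedback joint law is the state marginal times \(\pi_j\). By the definition of \(\pi_j\) as a conditional distribution, this equals the offline joint law, including on null states, which contribute zero to both.

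The state at phase \(j+1\) is the pushforward of the phase-\(j\) state-action pair under the fixed map \((s,a)\mapsto Z_{0,j}(s,a)\). Hence equal joint laws at phase \(j\) give equal state marginals at phase \(j+1\), which closes the induction. The expected discounted reward in the block is \(\sum_{j<K}\gamma^j\sum_{s,a}\mathbb P(S_j=s,A_j=a\mid\zeta)\,r(s,a)\). It is linear in these marginals, so the two rules earn the same value, and averaging over \(\zeta\) gives the unconditional statement.

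The main subtlety I expect is that the offline policy's actions are correlated with the future tables and with the whole within-block path, whereas the block-feedback rule is Markov in \(s\) and cannot see that information. The argument avoids this because only one-step marginals need to be matched. This works because, conditional on \(\zeta\), the next state is a deterministic function of the current state-action pair. Without that determinism, marginal matching would not propagate, and one would need the Markov transition structure instead. I would state this reliance on the exactness of \(\zeta\) explicitly. I would also note that the path-level joint law is generally not matched, and that this is harmless because rewards are additive across phases.
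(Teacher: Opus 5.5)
Your proposal is correct and follows the paper's proof essentially step for step. You define the block-feedback rule as the offline policy's conditional action law given the current state and the exact current-block table, arbitrary on null states, then close an induction on the phase using that, given the exact table, the next state is a fixed function of the current state--action pair, with the block reward linear in the phase marginals; your remarks on measurability (finitely many tables) and on the path-level law not being matched are accurate points the paper leaves implicit.
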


\begin{proof}
We construct a BF rule that chooses actions with the same conditional probabilities as the offline policy. Let \(\boldsymbol\Sigma^0_0\) denote the exact prediction for the initial block. For every phase \(j\) and every state \(s\) with positive conditional probability under the offline policy, we use the action distribution \(\mathbb P^{\rm off}(A_j=\cdot\mid S_j=s,S_0=s_0,\boldsymbol\Sigma^0_0)\). On null states, we choose any action distribution.

We verify the matching property by induction on \(j\). The state distributions agree initially. If they agree at phase \(j\), the definition of our rule gives the same joint distribution of state and action. The exact prediction then determines the next state for each state-action pair, so the next-state distributions also agree. Future next-state variables do not affect this transition. Thus the two policies have the same state-action distribution at every phase. Since rewards depend only on the current state and action, they receive the same expected discounted reward over the first \(K\) periods.
\end{proof}

\subsection{Proof of Theorem \ref{thm:finite-horizon-lower-bound}}
\label{proof_finite_window_lower_bound}

Fix \(K\ge1\) and \(\gamma\in[1/2,1)\). We construct two branches whose final transitions are not revealed until after the initial prediction horizon. The decision maker must choose a branch before seeing this information, whereas the offline oracle can use it to choose the better branch. Predictions are exact and all actions are covered, so the value gap comes only from the finite prediction horizon.

The state space consists of a start state \(s_0\), two deterministic branches \(\{x_{i,\ell}:i\in\{1,2\},\ell=1,\dots,K\}\), a good absorbing state \(g\), and a bad absorbing state \(b\). The action set contains two branch-selection actions \(a_1,a_2\) and an advance action \(c\). The rewards are
\[
  r(g,a)=1,\qquad
  r(s,a)=0\quad\text{for all }s\ne g,\ a\in\mathcal A .
\]
At \(s_0\), action \(a_i\) moves deterministically to \(x_{i,1}\), while action \(c\) moves to \(b\). Along each branch, action \(c\) moves deterministically from \(x_{i,\ell}\) to \(x_{i,\ell+1}\) for \(\ell<K\), while actions \(a_1,a_2\) move to \(b\). At \(x_{i,K}\), only action \(c\) is useful: it moves to \(g\) with probability \(1/2\) and to \(b\) otherwise. Actions \(a_1,a_2\) at \(x_{i,K}\) move deterministically to \(b\). Both \(g\) and \(b\) are absorbing under all actions.

The exact prediction process is the realization-level version of this transition kernel. Deterministic rows are predicted deterministically. The only nontrivial rows are the terminal branch rows \((x_{i,K},c)\). For each future time and branch \(i\), independently draw a Bernoulli random variable that determines whether this row maps to \(g\) or to \(b\). Thus the predictions are row-wise independent and consistent with the transition kernel. We set \(\mathcal A^-=\mathcal A\) and \(\boldsymbol\varepsilon=\boldsymbol 0\). In the calculation below, let \(U_i\) denote the Bernoulli variable governing action \(c\) from \(x_{i,K}\) at time \(K\).

In this instance, block feedback has no additional within-block decision to exploit after the initial branch is chosen: deviating from action \(c\) sends the process to \(b\), and the terminal random row is outside the initial \(K\)-step prediction. Hence the block-feedback \(K\)-step-prediction value coincides with the open-loop \(K\)-step-prediction value at \(s_0\).

At time \(0\), the actually reachable part of a \(K\)-step prediction reveals only the deterministic motion from \(s_0\) to the two states \(x_{1,K}\) and \(x_{2,K}\). The prediction may contain rows for \(x_{i,K}\) at earlier phases, but those rows correspond to hypothetical transitions at times before \(x_{i,K}\) is reached and are independent of the Bernoulli variables governing the actual transitions from \(x_{i,K}\) at time \(K\). Thus a policy with only the current \(K\)-step prediction has no information about which branch will lead to \(g\) after the branch has been selected. Since the two branches are symmetric, the \(K\)-step-prediction Bayesian policy obtains, from either branch,
\[
  V^{\Bayes,\BF,*}_{K,\mathcal A,\boldsymbol 0}(s_0)
  =
  \gamma^K\cdot
  \frac12\cdot
  \frac{\gamma}{1-\gamma}
  =
  \frac{\gamma^{K+1}}{2(1-\gamma)}.
\]
Here the factor \(\gamma^K\) is the discount to state \(x_{i,K}\), and the additional factor \(\gamma\) appears because the reward \(1\) starts only after the transition from \(x_{i,K}\) to \(g\).

In contrast, the offline oracle observes the future transition realizations on both branches before choosing \(a_1\) or \(a_2\). It selects a branch that leads to \(g\) whenever at least one branch is good. Since the two Bernoulli variables are independent, \(\mathbb P(\max\{U_1,U_2\}=1)=3/4\). Therefore,
\[
  V^{\Bayes,*}_{\mathrm{off}}(s_0)
  =
  \frac34\cdot
  \frac{\gamma^{K+1}}{1-\gamma}.
\]
Consequently, \(V^{\Bayes,*}_{\mathrm{off}}(s_0)-V^{\Bayes,\BF,*}_{K,\mathcal A,\boldsymbol 0}(s_0)=\gamma^{K+1}/[4(1-\gamma)]\ge \gamma^K/[8(1-\gamma)]\), where the last inequality uses \(\gamma\ge1/2\).
This proves the theorem with \(c=1/8\).

\subsection{Proof of Theorem \ref{thm:prediction-error-lower-bound}}
\label{proof_prediction_error_lower_bound}

We construct a risky action and a safe action that have the same value under an imperfect prediction. An exact prediction reveals when the risky action succeeds, allowing the decision maker to choose it only on success. Both prediction laws satisfy Assumption~\ref{assump:prediction-law}, so the value gap comes from the remaining uncertainty rather than a failure of calibration.

Fix \(K\ge1\), \(\gamma\in[1/2,1)\), and an error budget \(\epsilon\in(0,2]\). Set \(q:=\min\{\epsilon/8,(1-\gamma)/2\}\). Consider a finite MDP with a decision state \(x\) and a unit-reward absorbing state \(g\). The risky predictable action \(a\) has reward zero at \(x\) and moves to \(g\) with the fixed known probability \(q\), otherwise returning to \(x\). The safe action \(c\) stays at \(x\) deterministically and gives reward \(\rho:=\gamma q/(1-\gamma+\gamma q)\). State \(g\) gives reward one forever.

The exact prediction row reveals the Bernoulli next-state variable of the risky action. The imperfect prediction row is the uninformative calibrated posterior \(q\), so the physical transition kernel is the same under exact and imperfect predictions. The natural row error on the risky row is \(\bar\epsilon=\mathbb E\|\delta_Z-(q,1-q)\|_1=4q(1-q)\le\epsilon\).
The same construction at every phase gives \(\epsilon_j=\bar\epsilon\le\epsilon\) for all \(0\le j<K\), satisfying the prescribed uniform error budget.

Under the imperfect prediction, the risky action and the safe action are tied at the fixed point: the risky action has value \(\gamma[q(1-\gamma)^{-1}+(1-q)V_I(x)]\), while the safe action has value \(\rho+\gamma V_I(x)\), and the choice of \(\rho\) makes these two quantities equal. Since \(\rho<\gamma\), every trajectory starting from \(x\) has discounted return at most \(\gamma/(1-\gamma)\). Thus a revealed success favors the risky action, and a revealed failure favors the safe action, regardless of later predictions. This gives
\[
  V_E(x)
  =
  \frac{q\gamma(1-\gamma)^{-1}+(1-q)\rho}
  {1-\gamma(1-q)},
  \qquad
  V_I(x)=\frac{\rho}{1-\gamma}.
\]
A direct calculation using \(\rho=\gamma q/(1-\gamma+\gamma q)\) yields \(V_E(x)-V_I(x)=\gamma q(1-q)/(1-\gamma+\gamma q)^2\).
If \(\epsilon\le 4(1-\gamma)\), then \(q=\epsilon/8\) and the last display is at least \(c\epsilon/(1-\gamma)^2\). If \(\epsilon>4(1-\gamma)\), then \(q=(1-\gamma)/2\), and the same display is at least \(c/(1-\gamma)\). Combining the two cases gives
\[
  V^{\Bayes,\BF,*}_{K,\mathcal A,\boldsymbol 0}(x)
  -
  V^{\Bayes,\BF,*}_{K,\mathcal A,\boldsymbol\varepsilon}(x)
  \ge
  c
  \min\!\left\{
  \frac{\epsilon}{(1-\gamma)^2},
  \frac{1}{1-\gamma}
  \right\}.
\]
The construction uses independent next-state variables at every phase and state-action row, and its imperfect rows are calibrated posteriors of those variables. It therefore belongs to the standard-MDP calibrated-prediction model and satisfies the claimed uniform error budget.

\section{BF--BOLA Bayesian Value Estimation}
\label{app:bf-bola-value-estimation-proofs}
We first prove the Bayesian value-estimation guarantee by separating transition error from prediction-law error. We then give the perturbation and variance bounds used in the proof. The computational-cost proof follows at the end.

\subsection{Proof of Theorem~\ref{thm:bf-bola-sample}}
\label{app:value-learning-proof}
\label{subsec:bf-bola-proof-roadmap}
Let \(V^*_{Q,\nu}\) denote the optimal Bayesian value under transition model \(Q\) and prediction law \(\nu\), and write \(\widehat\mu=\widehat\mu_{N_2}\). Thus \(V^*=V^*_{P,\mu}\) is the true optimal value and \(\widehat V=V^*_{\widehat P,\widehat\mu}\) is the empirical value. All transition-error terms below are zero when \(\mathcal A^-=\mathcal A\).

We change one part of the empirical model at a time. We first replace \(\widehat P\) by \(P\) while keeping \(\widehat\mu\), and then replace \(\widehat\mu\) by \(\mu\). This gives
\begin{align}
\|\widehat V-V^*\|_\infty
\le
\|\widehat V-V^*_{P,\widehat\mu}\|_\infty
+\|V^*_{P,\widehat\mu}-V^*\|_\infty.
\label{eq:bf-tight-main-decomposition}
\end{align}
The first term is the error from estimating uncovered transition rows. The second is the error from estimating the prediction law. We bound the first term using Lemma~\ref{lem:bf-transition-perturbation-tight} conditional on \(\widehat\mu\), and the second using Lemma~\ref{lem:bf-prediction-perturbation-tight} with \(Q=P\). We apply each lemma with failure probability \(\delta/2\). Since the two data sources are independent, a union bound gives both estimates with probability at least \(1-\delta\):
\[
\begin{aligned}
\|\widehat V-V^*_{P,\widehat\mu}\|_\infty
&\le C\sqrt{\frac{L_{\rm tr}^{\BF}}{N_1(1-\gamma)^3}}
+\frac{C L_{\rm tr}^{\BF}}{N_1(1-\gamma)^2},\\
\|V^*_{P,\widehat\mu}-V^*\|_\infty
&\le\frac{C}{1-\gamma}
\sqrt{\frac{\log(8|\mathcal S|/\delta)}{N_2(1-\gamma^K)}}
+\frac{C\log(8|\mathcal S|/\delta)}{N_2(1-\gamma)(1-\gamma^K)},
\end{aligned}
\]
where \(L_{\rm tr}^{\BF}\) is the logarithmic factor in the transition lemma evaluated at failure probability \(\delta/2\). We choose the universal constants in the sample sizes large enough that these two bounds are at most \((1-\alpha)\varepsilon_{\rm est}\) and \(\alpha\varepsilon_{\rm est}\), respectively. These choices also satisfy the conditions of both lemmas and bound the linear terms, since \(\varepsilon_{\rm est}\le1\) and \(1-\gamma^K\ge1-\gamma\). We then substitute the two bounds into \eqref{eq:bf-tight-main-decomposition} to complete the proof.
\qed

\subsection{Supporting Bounds for Theorem~\ref{thm:bf-bola-sample}}
\label{proof_thm_bf_bola_sample}
The following identity converts a return-variance recursion into the discounted variance bounds used in both perturbation proofs.

\begin{lemma}[Discounted variance bound]
\label{lem:resolvent-variance-inequality}
Let \(M\) be a stochastic matrix, \(\beta\in(0,1)\), and \(W\ge0\). Then
\[
\begin{aligned}
  (I-\beta M)^{-1}(I-\beta^2M)
  &=
  I+\beta(1-\beta)(I-\beta M)^{-1}M,
  \\
  \left\|
  (I-\beta M)^{-1}(I-\beta^2M)W
  \right\|_\infty
  &\le
  (1+\beta)\|W\|_\infty
  \le
  2\|W\|_\infty .
\end{aligned}
\]
\end{lemma}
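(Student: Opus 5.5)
The identity is purely algebraic, and the norm bound follows from it together with a positivity argument. Write \(R:=(I-\beta M)^{-1}\). Since \(M\) is stochastic, \(\|\beta M\|_\infty=\beta<1\), so \(R\) exists and has the Neumann series \(R=\sum_{k\ge0}\beta^kM^k\). In particular \(R\) has nonnegative entries and \(R\mathbf 1=(1-\beta)^{-1}\mathbf 1\).

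For the identity, split \(I-\beta^2M\) as
\[
I-\beta^2M=(I-\beta M)+\beta(1-\beta)M .
\]
Multiplying on the left by \(R\) gives
\[
R(I-\beta^2M)=I+\beta(1-\beta)RM .
\]
This is exactly the first display.

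For the norm bound, apply the identity to \(W\). The key observation is that \(RM\) is entrywise nonnegative. It is a product of nonnegative matrices, and it satisfies
\[
RM\mathbf 1=R\mathbf 1=(1-\beta)^{-1}\mathbf 1 .
\]
Therefore \(\|RM\|_\infty=(1-\beta)^{-1}\), so
\[
\|\beta(1-\beta)RMW\|_\infty\le\beta\|W\|_\infty .
\]
The triangle inequality then gives
\[
\|R(I-\beta^2M)W\|_\infty\le(1+\beta)\|W\|_\infty\le 2\|W\|_\infty ,
\]
using \(\beta<1\).

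The hypothesis \(W\ge0\) is not needed for this bound. It matters only in applications, where \(W\) is a variance and one wants the stronger pointwise statement
\[
0\le R(I-\beta^2M)W\le(1+\beta)\|W\|_\infty\mathbf 1 ,
\]
which follows from the same identity and nonnegativity. No step is a real obstacle. The only point needing care is verifying \(\|RM\|_\infty=(1-\beta)^{-1}\) through row sums, which in turn relies on the entrywise nonnegativity of the Neumann series.
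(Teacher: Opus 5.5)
Your proposal is correct and follows essentially the same route as the paper. The paper also derives the identity from the splitting \(I-\beta^2M=(I-\beta M)+\beta(1-\beta)M\) and gets the norm bound from the Neumann-series estimate \(\|(I-\beta M)^{-1}MW\|_\infty\le\|W\|_\infty/(1-\beta)\); your row-sum and nonnegativity argument simply spells out that estimate.
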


\begin{proof}
The identity follows from \(I-\beta^2M=(I-\beta M)+\beta(1-\beta)M\). The Neumann series gives \(\|(I-\beta M)^{-1}MW\|_\infty\le\|W\|_\infty/(1-\beta)\), which proves the inequality.
\end{proof}

{
The same proof applies when \(M\) is a Markov transition kernel on a general state space. We will use the lemma for discounted-return variances. Write \(H=(1-\gamma)^{-1}\). For a Markov reward process with discount \(\beta\), let \(W\) be the variance of its return and \(q\) the variance of its one-step Bellman target. The standard variance identity is \(W=q+\beta^2MW\) \citep{sobel1982variance}. If \(0\le W\le H^2\mathbf1\) and \(0\le v\le q\), the lemma and Cauchy--Schwarz give
\begin{equation}
\left\|(I-\beta M)^{-1}v\right\|_\infty\le2H^2,
\qquad
\left\|(I-\beta M)^{-1}\sqrt v\right\|_\infty
\le\frac{\sqrt2 H}{\sqrt{1-\beta}}.
\label{eq:discounted-return-variance}
\end{equation}
In the applications below, conditional variance lets us compare the variance of a prediction-dependent target with the full Bellman-target variance \(q\).
}

The two terms in \eqref{eq:bf-tight-main-decomposition} require separate comparisons: changing the transition model at a fixed prediction law, and changing the prediction law at a fixed transition model. The following lemmas give these bounds, with proofs in Appendices~\ref{proof-bf-transition-perturbation-tight} and \ref{proof-bf-prediction-perturbation-tight}.

\begin{lemma}[BF transition perturbation]
\label{lem:bf-transition-perturbation-tight}
Condition on an empirical prediction law \(\widehat\mu=N_2^{-1}\sum_{i=1}^{N_2} \delta_{\boldsymbol\sigma^i}\). Let
\[
  L_{\mathrm{tr}}^{\BF}
  :=
  \log\!\left(
    \frac{
    1+cKN_2
    \bigl(|\mathcal S|(|\mathcal A|-|\mathcal A^-|)+1\bigr)
    |\mathcal A|}
    {\delta}
  \right).
\]
There is a universal constant \(C\) such that, with probability at least \(1-\delta\) over the transition samples, if \(N_1\ge C L_{\mathrm{tr}}^{\BF}/(1-\gamma)^2\), then
\begin{align}
  \|V^*_{\widehat P,\widehat\mu}
    -V^*_{P,\widehat\mu}\|_\infty
  \le
  C\sqrt{
    \frac{L_{\mathrm{tr}}^{\BF}}
    {N_1(1-\gamma)^3}
  }
  +
  C\frac{L_{\mathrm{tr}}^{\BF}}
  {N_1(1-\gamma)^2}.
  \label{eq:bf-transition-perturbation-tight}
\end{align}
\end{lemma}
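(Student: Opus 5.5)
We propose to follow the standard model-based argument of \citet{agarwal2020model,li2023breaking}, adapted to the block operator. We condition on \(\widehat\mu\) throughout and write \(\widehat{\mathcal T}:=\widehat{\mathcal T}^{\BF}\) for the operator built from \((\widehat P,\widehat\mu)\) and \(\mathcal T\) for the one built from \((P,\widehat\mu)\). Let \(\widehat V=V^*_{\widehat P,\widehat\mu}\) and \(V^\circ=V^*_{P,\widehat\mu}\) be their fixed points, and let \(\widehat\pi\) and \(\pi^\circ\) be the corresponding greedy block-feedback policies from Corollary~\ref{coro:bf-opt-policy}. Both operators are monotone \(\gamma^K\)-contractions, and Theorem~\ref{thm:bf-bellman} applies to each of them. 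Optimality of \(\widehat\pi\) under \(\widehat P\) and of \(\pi^\circ\) under \(P\) then gives the two-sided sandwich
\[
V^{\pi^\circ}_{\widehat P}-V^{\pi^\circ}_{P}\le \widehat V-V^\circ\le V^{\widehat\pi}_{\widehat P}-V^{\widehat\pi}_{P}.
\]
So it suffices to bound, for each of these two policies, the difference between its values under \(\widehat P\) and under \(P\), with the prediction law \(\widehat\mu\) held fixed.

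\textbf{Step 1: flatten to a Markov chain.} A fixed block-feedback policy under the finitely supported law \(\widehat\mu\) induces an ordinary Markov reward process on the augmented space \((s,i,j)\). Here \(i\) indexes the empirical prediction \(\boldsymbol\sigma^i\) and \(j\) is the phase; at the end of a block the index \(i\) is redrawn uniformly. This space has at most \(|\mathcal S|KN_2\) points, and the chain has ordinary per-step discount \(\gamma\). Only the transitions taken under uncovered actions differ between \(\widehat P\) and \(P\). The simulation lemma therefore gives
\[
V^\pi_{\widehat P}-V^\pi_P=\gamma(I-\gamma \widehat M_\pi)^{-1}(\widehat P-P)_\pi V^\pi_P,
\]
where \((\widehat P-P)_\pi\) is supported on uncovered rows. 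With Bernstein's inequality, each uncovered row contributes at most \(\sqrt{\mathrm{Var}_{P(\cdot\mid s,a)}(V)\,L/N_1}+L/((1-\gamma)N_1)\).

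\textbf{Step 2: variance summation.} We then apply Lemma~\ref{lem:resolvent-variance-inequality}, through \eqref{eq:discounted-return-variance}, with \(\beta=\gamma\) and with \(M\) the augmented kernel. The one-step variance on an uncovered row is a conditional variance of the full Bellman target, so it is dominated by \(q\). Summing the variances through the resolvent yields the leading \(\sqrt{L/(N_1(1-\gamma)^3)}\) term. For \(\widehat\pi\), the variance has to be taken under \(P\) while the resolvent is \(\widehat M\). We handle this with the usual bootstrap: compare \(\mathrm{Var}_P(V^{\widehat\pi}_P)\) with \(\mathrm{Var}_{\widehat P}(V^{\widehat\pi}_{\widehat P})\) and absorb the cross terms. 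This is where the burn-in condition \(N_1\ge CL/(1-\gamma)^2\) enters, and it also produces the lower-order \(L/(N_1(1-\gamma)^2)\) term.

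\textbf{Main obstacle: statistical dependence.} The policy \(\widehat\pi\) and the value \(V^{\widehat\pi}_P\) depend on \(\widehat P\), so Bernstein's inequality cannot be applied directly. We would use the leave-one-out / absorbing-MDP decoupling of \citet{agarwal2020model}. For each uncovered pair \((s,a)\), we replace its row by an absorbing state of value \(u\). We take \(u\) from a finite net of size about \(KN_2|\mathcal A|/(1-\gamma)\) and use Lipschitz continuity of the fixed point in \(u\). The block structure means the value used on row \((s,a)\) ranges over the within-block values \(J_{j+1}(\cdot,\boldsymbol\sigma^i)\) for all \(i,j\). A union bound over these finitely many functions, the uncovered rows, and the net explains the factor \(KN_2(|\mathcal S|(|\mathcal A|-|\mathcal A^-|)+1)|\mathcal A|\) inside \(L^{\BF}_{\rm tr}\). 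If the bootstrap becomes too lossy, we would instead settle for the two-sided sandwich applied with \(\pi^\circ\), which is data-independent, together with a one-sided Bellman comparison for the upper side.
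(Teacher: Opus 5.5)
Your augmented-chain reduction, the lower half of your sandwich (via $\pi^\circ$), and the variance summation all match the paper. Your main route, however, breaks at the decoupling step for $\widehat\pi$.

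In the augmented chain, a single uncovered row $z=(s,a)$ is reused at all $KN_2$ contexts $(s,i,j)$, and each use is paired with a different continuation value $\widehat J_{j+1}(\cdot,\boldsymbol\sigma^i)$. The absorbing construction of \citet{agarwal2020model} removes the dependence on a row by pinning one state's value to a scalar $u$. Here you would have to pin all $KN_2$ augmented values $\widehat V_{\rm aug}(s,i,j)$ at once. These genuinely differ across $(i,j)$, so no single $u$ makes the absorbing model reproduce $\widehat V$.

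The alternatives do not give the stated logarithmic factor either. A net over $[0,H]^{KN_2}$ adds $KN_2\log(\cdot)$ to the log factor, and parametrizing by the row $\widehat P_z\in\Delta(\mathcal S)$ adds $|\mathcal S|\log(\cdot)$. The lemma's $L^{\BF}_{\rm tr}$ is only logarithmic in $KN_2$. Union-bounding over the $KN_2$ functions $\widehat J_{j+1}(\cdot,\boldsymbol\sigma^i)$ does not help, because each still depends on row $z$ through every context. Relatedly, the upper half of your sandwich, $V^{\widehat\pi}_{\widehat P}-V^{\widehat\pi}_P$, is more than the lemma needs: every form of the simulation identity pairs $\widehat P-P$ with a data-dependent vector, either $V^{\widehat\pi}_P$ or $\widehat V$.

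Your fallback is the right fix, and it is exactly the paper's argument; it needs no decoupling at all. For the upper side, combine $\widehat V_{\rm aug}=r_{\widehat\pi}+\gamma\widehat P_{\widehat\pi}\widehat V_{\rm aug}$ with the optimality inequality $V^\circ_{\rm aug}\ge r_{\widehat\pi}+\gamma P_{\widehat\pi}V^\circ_{\rm aug}$. This gives
\[
\widehat V_{\rm aug}-V^\circ_{\rm aug}\le\gamma(I-\gamma\widehat M_{\widehat\pi})^{-1}(\widehat P-P)_{\widehat\pi}V^\circ_{\rm aug},
\]
and for the lower side you evaluate $\pi^\circ$ in the empirical model, as you wrote. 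In both directions $\widehat P_z-P_z$ acts only on targets $U_x$ built from the true augmented optimum: $V^\circ_{\rm aug}(\cdot,i,j+1)$, or its average over $i'$ at $j=K-1$. Once $\widehat\mu$ is fixed, these targets are independent of the transition samples.

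Bernstein's inequality for $U_x$ and $U_x^2$, with a union bound over uncovered rows, predictions, and phases, gives exactly $L^{\BF}_{\rm tr}$. It holds for every action simultaneously, so the data-dependent greedy action only selects which inequality applies. The $U_x^2$ bound lets you convert $\operatorname{Var}_{P_z}(U_x)$ into $2\operatorname{Var}_{\widehat P_z}(\widehat U_x)+2E^2$ plus Bernstein terms. Here $E=\|\widehat V_{\rm aug}-V^\circ_{\rm aug}\|_\infty$ and $\widehat U_x$ is the same target built from $\widehat V_{\rm aug}$. This conversion is what allows the total-variance bound \eqref{eq:discounted-return-variance} to be applied along the empirical resolvent. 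With $H=(1-\gamma)^{-1}$ and $\lambda=L^{\BF}_{\rm tr}/N_1$, what remains are self-bounding terms $CH\sqrt{\lambda}\,E$ (and the analogue for $\pi^\circ$). The burn-in $N_1\ge CL^{\BF}_{\rm tr}/(1-\gamma)^2$ absorbs them.
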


\begin{lemma}[BF prediction-oracle perturbation]
\label{lem:bf-prediction-perturbation-tight}
Fix a one-step model \(Q\). Let \(V=V^*_{Q,\mu}\) and \(\widehat V=V^*_{Q,\widehat\mu}\). There is a universal constant \(C\) such that, with probability at least \(1-\delta\) over the prediction samples, if \(N_2\ge C\log(4|\mathcal S|/\delta)/(1-\gamma^K)^2\), then
\begin{align}
  \|\widehat V-V\|_\infty
  \le
  \frac{C}{1-\gamma}
  \sqrt{\frac{\log(4|\mathcal S|/\delta)}{N_2(1-\gamma^K)}}
  +
  \frac{C\log(4|\mathcal S|/\delta)}{N_2(1-\gamma)(1-\gamma^K)}.
  \label{eq:bf-prediction-perturbation-tight}
\end{align}
\end{lemma}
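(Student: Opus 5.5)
We must prove Lemma~\ref{lem:bf-prediction-perturbation-tight}, the prediction-oracle perturbation bound. The plan is to write \(\mathcal T_\nu V := \mathbb E_{\boldsymbol\Sigma\sim\nu}[J_0^{\BF}(\cdot,\boldsymbol\Sigma;V)]\) under the fixed model \(Q\), and to compare the fixed points \(V=\mathcal T_\mu V\) and \(\widehat V=\mathcal T_{\widehat\mu}\widehat V\) through two one-sided Bellman comparisons. The crude route uses contraction plus Hoeffding on \(\mathcal T_{\widehat\mu}V-\mathcal T_\mu V\), whose summands lie in \([0,H]\) with \(H=(1-\gamma)^{-1}\). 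It gives \(H\sqrt{\log/N_2}/(1-\gamma^K)\), which loses a factor \((1-\gamma^K)^{-1/2}\). We instead follow the variance-sensitive template used for generative-model value estimation.

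\textbf{Upper direction.} Let \(\pi^\star\) be the optimal BF policy for \((Q,\mu)\) from Corollary~\ref{coro:bf-opt-policy}. Since \(\widehat V\ge V^{\pi^\star}_{\widehat\mu}\), we have
\[
V-\widehat V\le V^{\pi^\star}_{\mu}-V^{\pi^\star}_{\widehat\mu}.
\]
Both sides are fixed points of block-evaluation operators. Writing \(M^\pi_\nu\) for the \(K\)-step block-boundary transition matrix under law \(\nu\) and \(R^\pi_\nu\) for the block reward, we get
\[
V^{\pi}_\mu-V^{\pi}_{\widehat\mu}=(I-\gamma^K M^\pi_{\widehat\mu})^{-1}\bigl[(\mathcal T^\pi_\mu-\mathcal T^\pi_{\widehat\mu})V^\pi_\mu\bigr].
\]
For each \(s\), the bracket equals \(\mathbb E_\mu[Y_s]-\widehat{\mathbb E}[Y_s]\), where \(Y_s(\boldsymbol\sigma)\) is the conditional block return plus \(\gamma^K V^\pi_\mu(S_K)\) given the prediction. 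Bernstein with a union bound over \(\mathcal S\) gives
\[
|\cdot|\le \sqrt{2\,\mathrm{Var}_\mu(Y_s)\log(4|\mathcal S|/\delta)/N_2}+H\log(4|\mathcal S|/\delta)/N_2.
\]
\textbf{Lower direction.} Use the \(\widehat\mu\)-optimal policy \(\widehat\pi\), giving \(\widehat V-V\le V^{\widehat\pi}_{\widehat\mu}-V^{\widehat\pi}_{\mu}\). Here \(\widehat\pi\) depends on the data. To avoid needing uniform concentration over policies, expand the difference around \(\mathcal T_{\widehat\mu}V-\mathcal T_\mu V\) with the fixed target \(V\). By the law of total variance, \(\mathrm{Var}_\mu(J_0(s,\boldsymbol\Sigma;V))\) is at most the variance of the one-block Bellman target under \(\pi^\star\). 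The linear term requires a self-bounding step: write \(\widehat V-V\) as the resolvent of the deviation at \(V\) plus a remainder of size \(\gamma^K\|\widehat V-V\|\) times the deviation. This is where the condition \(N_2\ge C\log/(1-\gamma^K)^2\) is used, to absorb that remainder.

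\textbf{Variance summation (key step).} Summing the Bernstein terms through the resolvent \((I-\beta M)^{-1}\) with \(\beta=\gamma^K\) requires
\[
\bigl\|(I-\beta M)^{-1}\sqrt{\mathrm{Var}(Y)}\bigr\|_\infty\lesssim H/\sqrt{1-\beta}.
\]
This is exactly \eqref{eq:discounted-return-variance}. We apply it to the block-level Markov reward process at boundaries, whose return variance \(W\le H^2\). The one-step target variance \(q\) satisfies \(W=q+\beta^2MW\). By conditional variance, \(\mathrm{Var}_\mu(Y_s)\le q(s)\), since \(Y_s\) is the conditional expectation of the target given \(\boldsymbol\Sigma\). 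Cauchy--Schwarz then yields the \(\sqrt{\cdot}\) term, and the linear term contributes \(H\log/(N_2(1-\gamma^K))\). Together these match \eqref{eq:bf-prediction-perturbation-tight}.

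The main obstacle is the lower direction. There the data-dependent policy \(\widehat\pi\) and the empirical-law resolvent \((I-\gamma^K M^{\widehat\pi}_{\widehat\mu})^{-1}\) appear, so one must relate the empirical-resolvent variance to the true variance. I would first try bounding the variance under \(\widehat\mu\) by the true variance plus an \(\|\widehat V-V\|\)-dependent correction, then close the inequality by absorption.
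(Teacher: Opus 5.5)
Your two one-sided comparisons are the right skeleton and match the paper's Steps 3--4. In both directions the concentrated quantity is $(\widehat\mu-\mu)G_s$ for the sample-independent target $G_s=J_0^{\BF}(s,\cdot\,;V)$; for your $\pi^\star$ comparison, $Y_s=G_s$ because $\pi^\star$ is greedy for $V$.

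The gap is in the variance-summation step. In both comparisons the resolvent multiplying this residual belongs to the \emph{empirical} block process. In your first display it is explicitly $(I-\gamma^K M^{\pi^\star}_{\widehat\mu})^{-1}$, and in the other direction it is $(I-\gamma^K M^{\widehat\pi}_{\widehat\mu})^{-1}$. Bernstein, however, produces the true-law variance $\operatorname{Var}_\mu(G_s)$. The identity $W=q+\beta^2MW$ and the step $\operatorname{Var}_\mu(Y_s)\le q(s)$ hold only when $M$, $q$, and the target all come from the same block Markov reward process. With $M=M^{\pi^\star}_{\widehat\mu}$, $q$ is a variance under $\widehat\mu$ of a target built from $V^{\pi^\star}_{\widehat\mu}$, not from $V$. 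So the bound $\|(I-\beta M)^{-1}\sqrt{\operatorname{Var}_\mu(G)}\|_\infty\lesssim H/\sqrt{1-\beta}$ does not follow. Your $\pi^\star$ direction therefore has the same law/resolvent mismatch you flag as the main obstacle for the $\widehat\pi$ direction; the key step as written does not settle it.

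The missing ingredient is the paper's variance transfer (Steps 1--2), and it runs opposite to the comparison you propose to try. Because $G_s$ does not depend on the prediction samples, an empirical-variance bound of Maurer--Pontil type gives
$\operatorname{Var}_\mu(G_s)\le C\operatorname{Var}_{\widehat\mu}(G_s)+CH^2\lambda$, with $\lambda=\log(4|\mathcal S|/\delta)/N_2$.
That is, the true variance is bounded by the empirical one, not the other way around.

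Next compare $\operatorname{Var}_{\widehat\mu}(G_s)\le 2\operatorname{Var}_{\widehat\mu}(A_s)+2\beta^2\Delta^2$. Here $A_s$ is the prediction-conditional one-block target of the empirical process whose resolvent actually appears, and $\Delta$ is the corresponding value error:
\begin{itemize}
\item for the greedy rule, $A_s=J_0^{\BF}(s,\cdot\,;\widehat V)$;
\item for the $\pi^\star$ direction, $A_s$ is the within-block value of $\pi^\star$ with terminal value $\widetilde V=V^{\pi^\star}_{\widehat\mu}$.
\end{itemize}
Both are $\beta\Delta$-close to $G_s$. The empirical total-variance identity then gives $\|(I-\beta\widehat M)^{-1}\operatorname{Var}_{\widehat\mu}(A)\|_\infty\le 2H^2$. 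Hence $\|(I-\beta\widehat M)^{-1}\sqrt{\operatorname{Var}_\mu(G)}\|_\infty\le CH/\sqrt{h}+C\Delta/h$ with $h=1-\gamma^K$. The condition $\lambda\le ch^2$ makes the extra $CH^2\lambda/h$ contribution lower order and lets you absorb the resulting $C\sqrt{\lambda}\,\Delta/h$ term into the left-hand side.

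For the fixed-policy $\pi^\star$ direction alone, there is an alternative repair. Write $(I-\beta\widehat M)^{-1}v=u+\beta(I-\beta\widehat M)^{-1}(\widehat M-M)u$ with $u=(I-\beta M^{\pi^\star}_\mu)^{-1}v$ fixed, and concentrate $(\widehat M-M)u$. This fails for $\widehat\pi$, where $u$ would depend on the data, so the empirical-variance step is genuinely needed there.
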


\subsubsection{Proof of Lemma \ref{lem:bf-transition-perturbation-tight}}
\label{proof-bf-transition-perturbation-tight}
{
We prove the bound in four steps. We first express the problem as an augmented MDP and bound its one-step Bellman errors. We then use the variance of the total return to bound the accumulated error, following \citet{sidford2018near,agarwal2020model}. Finally, we combine upper and lower bounds on the value difference. The same transition samples are used for different predictions, so our concentration event must hold for all of them at once. Throughout this proof, we write \(H=(1-\gamma)^{-1}\) and \(\lambda=L_{\mathrm{tr}}^{\BF}/N_1\).

\paragraph{Step 1. Reduce to an augmented MDP.}
We condition on \(\widehat\mu\) so that the sampled predictions are fixed. We can then include these finitely many predictions and the current phase in an augmented MDP. Its states are \(x=(s,i,j)\in\mathcal S\times[N_2]\times\{0,\ldots,K-1\}\), where \(i\) identifies the prediction and \(j\) is the phase. A transition leads to \((s',i,j+1)\) when \(j<K-1\). At the final phase, the next state is \((s',i',0)\), with \(i'\) drawn uniformly from \([N_2]\). Covered actions use the rows of \(\boldsymbol\sigma^i\), and uncovered actions use \(P(\cdot\mid s,a)\) or \(\widehat P(\cdot\mid s,a)\). Note that all augmented states using the same uncovered row \((s,a)\) share the same \(N_1\) transition samples.

Let \(V_{\mathrm{aug}}^*\) and \(\widehat V_{\mathrm{aug}}\) be the true and empirical optimal augmented values. By averaging them over \(i\) at phase zero, we recover \(V^*_{P,\widehat\mu}\) and \(V^*_{\widehat P,\widehat\mu}\), respectively. We therefore only need to bound \(E=\|\widehat V_{\mathrm{aug}}-V_{\mathrm{aug}}^*\|_\infty\).

\paragraph{Step 2. Establish a fixed-target concentration event.}
We apply concentration to the true optimal value because it does not depend on the transition samples. For each context \(x=(s,i,j)\), its next-state target can be written as
\[
  U_x(y):=
  \begin{cases}
    V_{\mathrm{aug}}^*(y,i,j+1), & j<K-1,\\
    N_2^{-1}\sum_{i'=1}^{N_2}V_{\mathrm{aug}}^*(y,i',0), & j=K-1.
  \end{cases}
\]
We need to bound the errors in both the mean and the second moment, since we will later compare true and empirical variances. For an uncovered action \(a\), we write \(z=(s,a)\), \(v_{x,a}=\operatorname{Var}_{P_z}(U_x)\), \(\rho_{x,a}=(\widehat P_z-P_z)U_x\), and \(\xi_{x,a}=(\widehat P_z-P_z)U_x^2\). We set these quantities to zero for covered actions. Conditional on \(\widehat\mu\), every \(U_x\in[0,H]^{\mathcal S}\) is independent of the transition samples. We can therefore apply Bernstein's inequality \citep{boucheron2013concentration} to \(U_x\) and \(U_x^2\) to obtain
\begin{align}
  |\rho_{x,a}|&\le C\sqrt{\lambda v_{x,a}}+CH\lambda,
  &
  |\xi_{x,a}|&\le CH\sqrt{\lambda v_{x,a}}+CH^2\lambda
  \label{eq:bf-tr-bernstein-event}
\end{align}
simultaneously with probability at least \(1-\delta\). We use \(\operatorname{Var}(U_x^2)\le4H^2\operatorname{Var}(U_x)\) for the second inequality. The logarithmic factor \(L_{\mathrm{tr}}^{\BF}\) comes from a union bound over uncovered rows, predictions, and phases. These events need not be independent. Since the inequalities hold for every action, we can also use them for the action chosen by the empirical greedy policy.

\paragraph{Step 3. Transfer variance through the empirical resolvent.}
We first bound how much the empirical value can exceed the true value. Let \(\widehat M\) be the empirical augmented transition matrix of an empirical greedy policy, and write \(\mathcal R=(I-\gamma\widehat M)^{-1}\). In \eqref{eq:bf-tr-bernstein-event}, we select the greedy action at each state and denote the corresponding residual and variance by a subscript \(x\). Comparing the Bellman equations and summing the discounted residuals gives
\begin{align}
  E_+:=\max_x(\widehat V_{\mathrm{aug}}(x)-V_{\mathrm{aug}}^*(x))_+
  \le \gamma\|\mathcal R|\rho|\|_\infty .
  \label{eq:bf-tr-upper-resolvent}
\end{align}
To use this inequality, we must bound the accumulated variance. There are two differences to account for: concentration uses the true target and the true transition row, but the return-variance identity uses their empirical versions. We define \(\widehat U_x\) by replacing \(V_{\mathrm{aug}}^*\) with \(\widehat V_{\mathrm{aug}}\) in \(U_x\). For an uncovered selected row, let \(\widehat v_x^*=\operatorname{Var}_{\widehat P_z}(U_x)\) and \(\widehat v_x=\operatorname{Var}_{\widehat P_z}(\widehat U_x)\). We set both to zero for covered actions. We can compare these variances using \(v_x=\widehat v_x^*+2(P_zU_x)\rho_x+\rho_x^2-\xi_x\) and \(\widehat v_x^*\le2\widehat v_x+2E^2\). Substituting the bounds from \eqref{eq:bf-tr-bernstein-event} gives
\begin{align}
  S:=\|\mathcal R v\|_\infty
  &\le
  2\|\mathcal R\widehat v\|_\infty+2HE^2
  +CH\sqrt{\lambda}\,R
  +CH^3(\lambda+\lambda^2)+C\lambda S,
  \qquad
  R:=\|\mathcal R\sqrt v\|_\infty .
  \label{eq:bf-tr-S-expanded}
\end{align}

We bound the first term using \eqref{eq:discounted-return-variance}. Let \(\widetilde v\) be the full one-step variance of \(\widehat V_{\mathrm{aug}}\) under the empirical rule. Conditional variance gives \(\widehat v\le\widetilde v\), and the Bellman target has variance \(\gamma^2\widetilde v\). The common bound therefore gives \(\|\mathcal R\widehat v\|_\infty\le CH^2\) for \(\gamma\ge1/2\). For \(\gamma<1/2\), the same conclusion follows from \(\widetilde v\le H^2\mathbf1\) and \(H\le2\).

We now return to \eqref{eq:bf-tr-S-expanded}. By Cauchy's inequality, \(R^2\le HS\). Under the sample-size condition \(\lambda\le c/H^2\), we can use Young's inequality and move the resulting terms proportional to the left-hand side to obtain \(S\le CH^2+CHE^2\) and \(R\le CH^{3/2}+CHE\). Substituting these bounds into \eqref{eq:bf-tr-upper-resolvent}, we obtain
\begin{align}
  E_+\le C\sqrt{H^3\lambda}+CH^2\lambda+CH\sqrt{\lambda}\,E .
  \label{eq:bf-tr-upper}
\end{align}

\paragraph{Step 4. Bound the lower deviation and conclude.}
We next bound how much the true value can exceed the empirical value. For this, we evaluate a true optimal policy in the empirical model. Let \(\pi^*\) be a true optimal augmented policy and let \(V_{\widehat P}^{\pi^*,\mathrm{aug}}\) be its empirical evaluation value. By empirical optimality, \((V_{\mathrm{aug}}^*-\widehat V_{\mathrm{aug}})_+\le|V_{\mathrm{aug}}^*-V_{\widehat P}^{\pi^*,\mathrm{aug}}|\). We write \(E_*=\|V_{\mathrm{aug}}^*-V_{\widehat P}^{\pi^*,\mathrm{aug}}\|_\infty\). We can use the same concentration event because the true value of \(\pi^*\) is \(V_{\mathrm{aug}}^*\), so the residual still uses the fixed targets \(U_x\). We apply the argument in Step 3 to the empirical transition matrix and evaluation value of \(\pi^*\), taking absolute values in the Bellman comparison. This gives
\[
  E_*\le C\sqrt{H^3\lambda}+CH^2\lambda+CH\sqrt{\lambda}\,E_* .
\]
The condition \(\lambda\le c/H^2\) makes the coefficient of the last term small enough to move it to the left-hand side. We combine this bound with \eqref{eq:bf-tr-upper} and do the same for \(CH\sqrt{\lambda}\,E\), obtaining \(E\le C\sqrt{H^3\lambda}+CH^2\lambda\). Finally, we average the augmented values at phase zero to obtain \eqref{eq:bf-transition-perturbation-tight}.
}

\subsubsection{Proof of Lemma \ref{lem:bf-prediction-perturbation-tight}}
\label{proof-bf-prediction-perturbation-tight}
{
We prove the prediction-law bound by following the same four steps at the block level. We first bound the empirical error for a target built from the true value. We then compare its variance with the variance under empirical policies and use this comparison to bound both directions of the value error.

We write \(H=(1-\gamma)^{-1}\), \(\beta=\gamma^K\), \(h=1-\beta\), and \(\lambda=\log(4|\mathcal S|/\delta)/N_2\). For a terminal value \(u\in[0,H]^{\mathcal S}\), let \(\Psi_u(s,\boldsymbol\sigma)=J^{\BF}_{Q,0}(s,\boldsymbol\sigma;u)\). The within-block recursion gives \(0\le\Psi_u\le H\) and \(|\Psi_u-\Psi_{u'}|\le\beta\|u-u'\|_\infty\). To apply concentration, we use the true-value target \(G_s=\Psi_V(s,\cdot)\), with empirical error \(\rho_s=(\widehat\mu-\mu)G_s\) and variance \(v_s=\operatorname{Var}_\mu(G_s)\). This target is independent of the prediction samples, and \(\mu G_s=V(s)\).

\paragraph{Step 1. Concentrate the fixed targets.}
We first bound the residual and compare the true and empirical variances of the fixed target. We apply Bernstein's inequality \citep{boucheron2013concentration} and the sample-variance bound of \citet{maurer2009empirical}, rescaled from \([0,1]\) to \([0,H]\), to obtain
\begin{align}
  |\rho_s|&\le C\sqrt{\lambda v_s}+CH\lambda,
  &
  v_s&\le C\operatorname{Var}_{\widehat\mu}(G_s)+CH^2\lambda,
  \qquad s\in\mathcal S,
  \label{eq:bf-pred-rho-fixed}
\end{align}
on a joint event of probability at least \(1-\delta\). The logarithm covers the union over states and both inequalities. The variance here uses divisor \(N_2\), which differs from the unbiased sample variance in the cited result by a factor at most two.

\paragraph{Step 2. Transfer variance to a sample-dependent target.}
We next compare the variance of the fixed target with that of a target obtained from an empirical policy. We will use this comparison for both directions of the value error. Let \(M\) be any empirical stochastic matrix and let \(A_s(\boldsymbol\sigma^i)\in[0,H]\) satisfy \(|A_s(\boldsymbol\sigma^i)-G_s(\boldsymbol\sigma^i)|\le\beta\Delta\). We write \(\bar v_s=\operatorname{Var}_{\widehat\mu}(A_s)\) and \(\mathcal R_M=(I-\beta M)^{-1}\). Suppose that \(\|\mathcal R_M\bar v\|_\infty\le CH^2\). We can then use \(\operatorname{Var}_{\widehat\mu}(G_s)\le2\bar v_s+2\beta^2\Delta^2\) and \eqref{eq:bf-pred-rho-fixed} to obtain
\begin{align}
  S_M:=\|\mathcal R_M v\|_\infty
  &\le CH^2+\frac{C\Delta^2}{h}+\frac{CH^2\lambda}{h},
  &
  R_M:=\|\mathcal R_M\sqrt v\|_\infty
  &\le \frac{CH}{\sqrt h}+\frac{C\Delta}{h}.
  \label{eq:bf-pred-variance-transfer-clean}
\end{align}
For the second bound, we use Cauchy's inequality \(R_M^2\le S_M/h\) and the sample-size condition \(\lambda\le ch^2\). Once the event in Step 1 holds, this comparison is deterministic. We may therefore use it even when \(M\) and \(A_s\) depend on the samples.

\paragraph{Step 3. Bound the upper deviation.}
We first bound the upper deviation by following an empirical greedy rule. For a fixed block rule \(d\), let \(\Psi_u^d\) denote its within-block value. Let \(\widehat d\) be greedy for \(\widehat V\), and let \(\widehat M\) be its block-end transition matrix under \(\widehat\mu\). By greediness, \(\Psi_{\widehat V}^{\widehat d}=\Psi_{\widehat V}\) and \(\Psi_V^{\widehat d}\le G\). These relations allow us to use the fixed-target residual in the value comparison. They give \(\widehat V-V\le\beta\widehat M(\widehat V-V)+\rho\), from which we obtain
\begin{align}
  E_+:=\max_s(\widehat V(s)-V(s))_+
  \le \|(I-\beta\widehat M)^{-1}|\rho|\|_\infty .
  \label{eq:bf-pred-upper-resolvent-clean}
\end{align}
We apply Step 2 with \(A_s=\Psi_{\widehat V}^{\widehat d}(s,\cdot)=\Psi_{\widehat V}(s,\cdot)\) and \(E=\|\widehat V-V\|_\infty\). The Lipschitz bound gives the required comparison \(|A_s-G_s|\le\beta E\).

It remains to verify the variance bound assumed in Step 2. Under the empirical rule, let \(\widetilde v\) be the variance of the full one-block Bellman target. Since \(A_s\) is its conditional mean given the prediction, conditional variance gives \(\bar v\le\widetilde v\). We apply \eqref{eq:discounted-return-variance} to the empirical block process and obtain \(\|(I-\beta\widehat M)^{-1}\bar v\|_\infty\le2H^2\). Combining this with Steps 1--2 and \eqref{eq:bf-pred-upper-resolvent-clean}, we obtain
\begin{align}
  E_+\le CH\sqrt{\lambda/h}+\frac{CH\lambda}{h}
  +\frac{C\sqrt{\lambda}}{h}E .
  \label{eq:bf-pred-upper}
\end{align}

\paragraph{Step 4. Bound the lower deviation and conclude.}
For the lower deviation, we evaluate a true optimal rule under the empirical prediction law. This lets us keep the same fixed target \(G\). Let \(d^*\) be greedy for \(V\), and let \(\widetilde V=V^{d^*}_{Q,\widehat\mu}\) be its empirical evaluation value. By empirical optimality, \((V-\widehat V)_+\le|V-\widetilde V|\). We write \(E_*=\|V-\widetilde V\|_\infty\), and let \(M_*\) be the empirical block-end transition matrix of \(d^*\). Comparing the fixed-rule evaluation equations gives \(|V-\widetilde V|\le|\rho|+\beta M_*|V-\widetilde V|\). We can now apply Steps 2--3 with \(A_s=\Psi_{\widetilde V}^{d^*}(s,\cdot)\) and \(\Delta=E_*\), since \(\Psi_V^{d^*}=G\) and \(|\Psi_{\widetilde V}^{d^*}-G|\le\beta E_*\). The same return-variance argument gives \(\|\mathcal R_{M_*}\bar v\|_\infty\le2H^2\), so
\[
  E_*\le CH\sqrt{\lambda/h}+\frac{CH\lambda}{h}
  +\frac{C\sqrt{\lambda}}{h}E_* .
\]
Under \(\lambda\le ch^2\), we can move the last term to the left-hand side. We then combine this lower-deviation bound with \eqref{eq:bf-pred-upper} and move the term proportional to \(E\) to the left-hand side as well. This proves \eqref{eq:bf-prediction-perturbation-tight}.
}

\subsection{Proof of Proposition \ref{prop:bfbola-complexity}}
\label{proof_prop_bfbola_complexity}
{
For a fixed empirical prediction, the within-block backward recursion has \(K|\mathcal S||\mathcal A|\) state--phase--action updates, and each dense transition update sums over \(|\mathcal S|\) successors. Averaging the empirical BF operator over \(N_2\) stored predictions therefore costs \(O(N_2K|\mathcal S|^2|\mathcal A|)\) arithmetic operations per sweep. For a new online prediction, we run the same recursion once, at a cost of \(O(K|\mathcal S|^2|\mathcal A|)\) arithmetic operations.

The empirical BF operator's \(\gamma^K\)-contraction gives the iteration count. After \(m\) sweeps from any value in \([0,(1-\gamma)^{-1}]^{|\mathcal S|}\), the error is at most \((\gamma^K)^m/(1-\gamma)\). To reach infinity-norm accuracy \(\eta_{\rm opt}\), we therefore need \(m=O(\log((1-\gamma)^{-1}/\eta_{\rm opt})/[-\log(\gamma^K)])\) sweeps.

For storage, a dense prediction contains \(K|\mathcal S||\mathcal A^-|(|\mathcal S|-1)\) free scalar probabilities. Storing all prediction samples requires \(O(N_2K|\mathcal S||\mathcal A^-|(|\mathcal S|-1))\) scalars, and storing the uncovered empirical transition rows requires \(O(|\mathcal S|^2(|\mathcal A|-|\mathcal A^-|))\) scalars. The backward recursion uses \(O(|\mathcal S|)\) working memory if only adjacent phase-value vectors are retained.
}

\section{BF--BOLA Policy Learning}
\label{app:bf-bola-policy-learning-proofs}
We first derive current-block cancellation and use it to prove the policy-learning guarantee. We then establish the stability, concentration, and variance bounds needed to control the remaining errors.

\subsection{Proof of Theorem~\ref{thm:bf-bola-policy-sample}}
\label{app:policy-learning-proof}

\label{proof_thm_bf_bola_policy_sample}
We use the following sample-size conditions to prove Theorem~\ref{thm:bf-bola-policy-sample}.

Let \(\widehat\mu:=\widehat\mu_{N_2}\), \(\beta:=\gamma^K\), and \(H:=(1-\gamma)^{-1}\). Let \(N_1\) be the number of next-state samples per uncovered transition row and \(N_2\) be the number of prediction samples. The total budgets are \(D_1=|\mathcal U|N_1+|\mathcal S||\mathcal A|\) and \(D_2=N_2\), including one reward observation per state-action pair. If \(\mathcal U=\emptyset\), the transition part below is identically zero.

Collect logarithmic factors into
\[
  \overline L_{\rm tr}
  :=
  \log\!\left(\frac{cK(|\mathcal U|+1)|\mathcal A|(1+N_2)}{\delta}\right)
  \log(2N_1),
  \qquad
  \overline L_{\rm pred}
  :=
  \log\!\left(\frac{c(|\mathcal S|+1)}{\delta}\right)\log(2N_2),
\]
where \(c>1\) is a universal constant.

{
For \(\delta\in(0,1)\) and \(\varepsilon_{\rm pol}\in(0,1]\), we take the sample sizes to satisfy
\begin{equation}
N_1
\ge
\frac{C\overline L_{\rm tr}}{(1-\gamma)^3\varepsilon_{\rm pol}^2},
\qquad
N_2
\ge
C\overline L_{\rm pred}
\left(
1+
\frac{\gamma^{2K}}
{(1-\gamma)^2(1-\gamma^K)\varepsilon_{\rm pol}^2}
\right),
\label{eq:bf-policy-sample-sizes}
\end{equation}
where \(C>0\) is a sufficiently large universal constant. If \(\mathcal U=\emptyset\), we omit the condition on \(N_1\).
}

Let \(V^*_{Q,\nu}\) denote the optimal Bayesian value under transition model \(Q\) and prediction law \(\nu\), and write \(\widehat\mu=\widehat\mu_{N_2}\). Thus \(V^*=V^*_{P,\mu}\) is the true optimal value and \(\widehat V=V^*_{\widehat P,\widehat\mu}\) is the empirical value. All transition-error terms below are zero when \(\mathcal A^-=\mathcal A\).

\subsubsection{Current-Block Cancellation}
\label{app:current-block-cancellation}

Policy learning differs because the current prediction is observed before actions are chosen. We compare action choices under the same prediction and show that the current-block prediction error cancels.

We compare the learned policy with an optimal policy under \((\widehat P,\mu)\). This comparison model keeps \(\widehat P\) and replaces only \(\widehat\mu\) by \(\mu\). Let \(\overline V=V^*_{\widehat P,\mu}\) be its optimal value, and let \(W=V^{\widehat\pi^{\BF}}_{P,\mu}\) be the true value of the learned policy. We first bound \(\overline V-W\) and then account for the difference between \(V^*\) and \(\overline V\).

We express the model error over a single block. Let \(\Psi_{Q,v}^{\pi}(s,\boldsymbol\sigma)\) be the expected discounted block reward plus \(\gamma^K v(S_K)\), starting from state \(s\), conditional on prediction \(\boldsymbol\sigma\), and following rule \(\pi\). Covered rows come from \(\boldsymbol\sigma\), and uncovered rows come from \(Q\). With terminal value \(\widehat V\), define two residuals. The first changes the prediction law while keeping \(\widehat P\) fixed. The second changes the transition model while keeping \(\mu\) fixed:
\begin{align}
b_{\rm pred}(s)
&=(\widehat\mu-\mu)\Psi_{\widehat P,\widehat V}^{\widehat\pi^{\BF}}(s,\cdot),
\label{eq:policy-prediction-residual}\\
b_{\rm tr}(s)
&=\mathbb E_{\boldsymbol\sigma\sim\mu}\!\left[
\Psi_{\widehat P,\widehat V}^{\widehat\pi^{\BF}}(s,\boldsymbol\sigma)
-\Psi_{P,\widehat V}^{\widehat\pi^{\BF}}(s,\boldsymbol\sigma)
\right].
\label{eq:policy-transition-residual}
\end{align}
We next add these errors over successive blocks. Averaging the block return gives the policy-evaluation operator \(T_{Q,\nu}^{\pi}v:=\mathbb E_{\boldsymbol\Sigma\sim\nu}[\Psi_{Q,v}^{\pi}(\cdot,\boldsymbol\Sigma)]\). Let \(M_{Q,\nu}^{\pi}\) be its block-end transition matrix and write \(M=M_{P,\mu}^{\widehat\pi^{\BF}}\). The evaluation equations are \(\widehat V=T_{\widehat P,\widehat\mu}^{\widehat\pi^{\BF}}\widehat V\) and \(W=T_{P,\mu}^{\widehat\pi^{\BF}}W\). Subtracting them gives
\begin{align}
\widehat V-W=(I-\gamma^K M)^{-1}(b_{\rm tr}+b_{\rm pred}).
\label{eq:bf-policy-eval-identity}
\end{align}

We next use the greedy choice of the learned policy to bound its loss, rather than its evaluation error. When we compare with the optimality equation for \(\overline V\), the current-block contribution of \(b_{\rm pred}\) cancels, as shown in the following lemma.

\begin{lemma}[Current-block cancellation in the policy gap]
\label{lem:bf-policy-main-cancellation}
With the notation above, the following inequality holds componentwise:
\[
\overline V-W
\le
\gamma^K\|\widehat V-\overline V\|_\infty\mathbf 1
+(I-\gamma^K M)^{-1}|b_{\rm tr}|
+\bigl[(I-\gamma^K M)^{-1}-I\bigr]|b_{\rm pred}|.
\]
\end{lemma}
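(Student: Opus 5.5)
The plan is to compare $\overline V$ with the learned policy's value $W$ through one application of the optimality operator for the comparison model $(\widehat P,\mu)$. The key fact is that $\widehat\pi^{\BF}$ is greedy for \emph{every} prediction $\boldsymbol\sigma\in\mathcal Q_K$, not only for the $N_2$ training samples. Write $\beta=\gamma^K$ and $\mathcal R=(I-\beta M)^{-1}$. Let $\Psi^*_{Q,v}(s,\boldsymbol\sigma)=J^{\BF}_{Q,0}(s,\boldsymbol\sigma;v)$ denote the optimal within-block value.

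\textbf{Step 1: one-sided optimality bound.} By Theorem~\ref{thm:bf-bellman} applied to the model $(\widehat P,\mu)$,
\[
\overline V(s)=\mathbb E_{\boldsymbol\sigma\sim\mu}\bigl[\Psi^*_{\widehat P,\overline V}(s,\boldsymbol\sigma)\bigr].
\]
The within-block recursion is $\beta$-Lipschitz in its terminal value, as shown in the proof of Theorem~\ref{thm:bf-bellman}. Hence $\Psi^*_{\widehat P,\overline V}\le\Psi^*_{\widehat P,\widehat V}+\beta\|\overline V-\widehat V\|_\infty$ pointwise.

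\textbf{Step 2: greediness for every prediction.} By \eqref{eq:empirical-bf-greedy-action}, the deployed rule attains the within-block maximum under $\widehat P$ with terminal value $\widehat V$, for every $\boldsymbol\sigma$. So $\Psi^*_{\widehat P,\widehat V}(s,\boldsymbol\sigma)=\Psi^{\widehat\pi^{\BF}}_{\widehat P,\widehat V}(s,\boldsymbol\sigma)$ for all $\boldsymbol\sigma$. Averaging over $\mu$ and using the definition \eqref{eq:policy-transition-residual} of $b_{\rm tr}$ gives
\[
\overline V\le\beta\|\widehat V-\overline V\|_\infty\mathbf 1+b_{\rm tr}+T^{\widehat\pi^{\BF}}_{P,\mu}\widehat V.
\]
This is where the current-block prediction error cancels. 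Only the true law $\mu$ appears, so $b_{\rm pred}$ does not enter at this stage.

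\textbf{Step 3: subtract the evaluation equation.} Since $W=T^{\widehat\pi^{\BF}}_{P,\mu}W$ and $T^{\widehat\pi^{\BF}}_{P,\mu}$ is affine for the fixed rule, with linear part $\beta M$, we get
\[
\overline V-W\le\beta\|\widehat V-\overline V\|_\infty\mathbf 1+b_{\rm tr}+\beta M(\widehat V-W).
\]
Now substitute the identity \eqref{eq:bf-policy-eval-identity}, $\widehat V-W=\mathcal R(b_{\rm tr}+b_{\rm pred})$. The Neumann series gives $\beta M\mathcal R=\mathcal R-I$. The right-hand side therefore becomes
\[
\beta\|\widehat V-\overline V\|_\infty\mathbf 1+\mathcal R\,b_{\rm tr}+(\mathcal R-I)\,b_{\rm pred}.
\]

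\textbf{Step 4: absolute values.} Both $\mathcal R=\sum_{k\ge0}(\beta M)^k$ and $\mathcal R-I=\sum_{k\ge1}(\beta M)^k$ are entrywise nonnegative. Replacing $b_{\rm tr}$ and $b_{\rm pred}$ by their absolute values therefore preserves the componentwise inequality, which yields the stated bound.

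The main point needing care is Step 2. The greedy identity must hold for $\mu$-almost every prediction, including predictions never seen in training, and the transition swap from $\widehat P$ to $P$ must be exactly the residual $b_{\rm tr}$. Both follow directly from the definitions, because the rule is defined pointwise in $\boldsymbol\sigma$ by a fixed tie-breaking rule. It also requires that the same rule $\widehat\pi^{\BF}$ appear in $M$ and in $T^{\widehat\pi^{\BF}}_{P,\mu}$, which holds because both are built from the learned policy under $(P,\mu)$.
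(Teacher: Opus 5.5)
Your proposal is correct and follows essentially the same argument as the paper. Both use Bellman optimality of $\overline V$ under $\mathcal T^{\BF}_{\widehat P,\mu}$, greediness of $\widehat\pi^{\BF}$ at every prediction, the $\gamma^K$-contraction, the evaluation identity \eqref{eq:bf-policy-eval-identity}, and $\mathcal R-I=\gamma^K M\mathcal R\ge 0$. The only difference is cosmetic: the paper writes $\mathcal T^{\BF}_{\widehat P,\mu}\widehat V=\widehat V-b_{\rm pred}$ to keep an exact identity until the final contraction step. You instead express the same quantity $\mathbb E_{\mu}[\Psi^{\widehat\pi^{\BF}}_{\widehat P,\widehat V}]$ as $b_{\rm tr}+T^{\widehat\pi^{\BF}}_{P,\mu}\widehat V$ and insert the inequality first.
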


\begin{proof}
For a fixed terminal value, the greedy rule is computed separately for each prediction. Changing the probabilities assigned to current predictions does not change this rule. Let \(\mathcal T^*=\mathcal T_{\widehat P,\mu}^{\BF}\) and \(\mathcal R=(I-\gamma^K M)^{-1}\). The matrix \(\mathcal R=\sum_{b\ge0}\gamma^{Kb}M^b\) sums discounted effects over successive blocks and is called the resolvent. Since the learned rule is greedy for \(\widehat V\) at every prediction, we have \(\mathcal T^*\widehat V=\widehat V-b_{\rm pred}\). We also have \(\mathcal T^*\overline V=\overline V\). Substituting these identities into \eqref{eq:bf-policy-eval-identity}, we can write the policy gap as
\[
\overline V-W
=\mathcal T^*\overline V-\mathcal T^*\widehat V
+\mathcal R b_{\rm tr}+(\mathcal R-I)b_{\rm pred}.
\]
We bound the first difference by \(\gamma^K\|\widehat V-\overline V\|_\infty\mathbf 1\), using the contraction of \(\mathcal T^*\). For the other two terms, we take absolute values and use the nonnegativity of \(\mathcal R\) and \(\mathcal R-I=\gamma^K M\mathcal R\). This gives the claimed inequality.
\end{proof}

Thus prediction-law error contributes only through future blocks, with the additional discount in \(\gamma^K M\mathcal R\).

To obtain a common concentration event, we apply each bound below with failure probability equal to a sufficiently small fixed fraction of \(\delta\). We include both directions of every two-sided comparison and, for short blocks, the value-estimation bound in Theorem~\ref{thm:bf-bola-sample}. The conditional failure bounds hold uniformly over the conditioning variables. We can therefore average these bounds and apply a union bound to obtain a joint event of probability at least \(1-\delta\). This allocation of failure probabilities changes only the constants in the logarithmic factors.

\subsubsection{Completing the Policy Bound}
\label{app:policy-bound-completion}
We use the cancellation lemma to prove the policy bound. When \(\gamma^K<1/2\), errors from later blocks decay quickly. When \(\gamma^K\ge1/2\), we control their accumulated variance instead. Let \(N_1,N_2\) satisfy \eqref{eq:bf-policy-sample-sizes}, with logarithmic factors \(\overline L_{\rm tr},\overline L_{\rm pred}\) defined there. Throughout the proof, we work on the joint concentration event, which has probability at least \(1-\delta\). We write \(E_\pi:=\|\widehat V-W\|_\infty\) for the evaluation error and \(\eta:=\|\widehat V-\overline V\|_\infty\) for the prediction-law value error.

We first consider long blocks, \(\gamma^K<1/2\). Since \(\|(I-\gamma^K M)^{-1}\|_\infty=(1-\gamma^K)^{-1}\le2\), we can bound the evaluation error directly. We combine \eqref{eq:bf-policy-eval-identity} with the transition-residual bound \eqref{eq:bf-policy-transition-final} and move the term proportional to \(E_\pi\) to the left-hand side. This gives \(E_\pi\le C(\varepsilon_{\rm pol}+\|b_{\rm pred}\|_\infty)\). Substituting this bound into Lemma~\ref{lem:bf-policy-main-cancellation}, we obtain
\begin{align}
\|(\overline V-W)_+\|_\infty
\le C\Biggl[\varepsilon_{\rm pol}+\gamma^K\eta
+\left(
\frac{1}{1-\gamma}\sqrt{\frac{\gamma^K\overline L_{\rm tr}}{N_1}}
+\frac{\gamma^K}{1-\gamma^K}
\right)\|b_{\rm pred}\|_\infty\Biggr].
\label{eq:bf-policy-long-master}
\end{align}
We now bound the prediction terms in this inequality. If \(\gamma^K/((1-\gamma)(1-\gamma^K))\le\varepsilon_{\rm pol}\), the discount is strong enough that the deterministic bounds \(\eta,\|b_{\rm pred}\|_\infty\le(1-\gamma)^{-1}\) suffice. The condition on the transition sample size bounds the mixed term by \(C\varepsilon_{\rm pol}\sqrt{\gamma^K/(1-\gamma)}\le C\varepsilon_{\rm pol}\). The remaining prediction terms are also \(O(\varepsilon_{\rm pol})\).

If this deterministic bound is too large, we use concentration to obtain a smaller bound on prediction-law error. Equations~\eqref{eq:bf-policy-eta-bound}--\eqref{eq:bf-policy-bpred-sup} and the \(N_2\) condition give \(\gamma^K\eta+\gamma^K\|b_{\rm pred}\|_\infty/(1-\gamma^K)\le C\varepsilon_{\rm pol}\), including the linear terms. Under the condition defining this case, the mixed term is at most \(C\varepsilon_{\rm pol}^2\sqrt{(1-\gamma)/\gamma^K}\le C\varepsilon_{\rm pol}\). We therefore obtain \(\|(\overline V-W)_+\|_\infty\le C\varepsilon_{\rm pol}\) in both cases.

It remains to compare the auxiliary value with the true optimal value. We apply Lemma~\ref{lem:true-law-transition-bridge} with \(\nu=\mu\) to obtain \(\|V^*-\overline V\|_\infty\le C\varepsilon_{\rm pol}\). Combining the two comparisons gives \(\|V^*-W\|_\infty\le C\varepsilon_{\rm pol}\).

We next consider short blocks, \(\gamma^K\ge1/2\). Errors may persist over many blocks, so we use \eqref{eq:bf-policy-short-master}, which controls their sum through the variances of the block returns. This bound separates the transition and prediction errors from terms proportional to \(E_\pi\). The sample-size conditions bound the first two groups by \(C\varepsilon_{\rm pol}\), using \(1-\gamma^K\ge1-\gamma\) for the linear prediction term. The remaining coefficients come from transition and prediction estimation, respectively:
\[
\begin{aligned}
E_\pi
&\le C\varepsilon_{\rm pol}
+C\left[
\sqrt{\frac{\overline L_{\rm tr}}{N_1(1-\gamma)^2}}
+\frac{1}{1-\gamma^K}\sqrt{\frac{\overline L_{\rm pred}}{N_2}}
\right]E_\pi\\
&\le C\varepsilon_{\rm pol}+\frac12E_\pi.
\end{aligned}
\]
Moving \(E_\pi/2\) to the left-hand side gives \(E_\pi\le C\varepsilon_{\rm pol}\).

To finish this case, we also need to compare the empirical value with the true optimum. Since \(\gamma^{2K}\ge1/4\), the policy and value bounds require the same sample order. In particular, these sample sizes satisfy Theorem~\ref{thm:bf-bola-sample} with \(\alpha=1/2\) and \(\varepsilon_{\rm est}=\varepsilon_{\rm pol}\). We obtain \(\|V^*-\widehat V\|_\infty\le\varepsilon_{\rm pol}\), and the triangle inequality gives \(\|V^*-W\|_\infty\le C\varepsilon_{\rm pol}\).

Both cases hold on the same joint event. We increase the universal sample-size constant to obtain the prescribed accuracy \(\varepsilon_{\rm pol}\). Finally, we count the samples as \(D_1=|\mathcal S|(|\mathcal A|-|\mathcal A^-|)N_1+|\mathcal S||\mathcal A|\) and \(D_2=N_2\), which gives the stated rates.
\qed

\subsection{Supporting Bounds}
\label{app:policy-supporting-bounds}

\subsubsection{Policy-Evaluation Identities}

Equation~\eqref{eq:bf-policy-eval-identity} turns residual bounds into an evaluation-error bound. Lemma~\ref{lem:bf-policy-main-cancellation} gives the sharper policy comparison by cancelling the current-block prediction residual. Both identities hold for every training dataset. We use the one-block payoffs, operators, and residuals defined in Appendix~\ref{app:current-block-cancellation}, and recall \(\eta=\|\widehat V-\overline V\|_\infty\).

\subsubsection{Stability-Based Bernstein Perturbation}

The empirical Bellman target depends on the same samples used to compute its average. We therefore cannot apply Bernstein's inequality as if the target were fixed. Instead, we first bound how much the target changes when one sample is replaced. We use this stability bound to obtain a concentration inequality that still depends on the target's variance.

The weak-interaction inequality of \citet{bousquet2020sharper} supplies the required moment bound. If \(Z_1,\ldots,Z_n\) are independent and \(g_i(Z_1,\ldots,Z_n)\) satisfy \(|\E[g_i\mid Z_i]|\le M\), \(\E[g_i\mid Z_{-i}]=0\), and a bounded-difference condition with constant \(\beta_{\rm wi}\) in all coordinates except \(i\), then for every \(p\ge2\),
\[
  \left\|\sum_{i=1}^n g_i\right\|_{L_p}
  \le
  Cpn\beta_{\rm wi}\log(2n)+CM\sqrt{pn}.
\]
We will decompose the empirical error into an independent average and a remainder. The next lemma uses Bernstein's inequality for the average and the weak-interaction bound for the remainder.

\begin{lemma}[Scalar stable Bernstein]
\label{lem:bf-policy-stable-bernstein}
Let \(X_1,\ldots,X_n\) be i.i.d. from \(P\), let \(D=(X_1,\ldots,X_n)\), and let \(f_D:\mathcal X\to[-B,B]\). Write \(P_n:=n^{-1}\sum_{i=1}^n\delta_{X_i}\) for the empirical distribution, and let \(X\sim P\) be independent of \(D\). Suppose that replacing one observation in \(D\) changes \(f_D\) uniformly by at most \(\kappa\). Then, for every \(t\ge1\),
\[
  \mathbb P\!\left(
  |(P_n-P)f_D|
  >
  C\left[
    \sqrt{\frac{\operatorname{Var}_P(f_D(X))t}{n}}
    +
    \left(\frac{B}{n}+\kappa\log(2n)\right)t
  \right]\right)
  \le 4e^{-t}.
\]
The same statement holds conditionally on external randomness independent of \(X_1,\ldots,X_n\), provided the stability bound holds after conditioning.
\end{lemma}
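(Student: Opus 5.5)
The plan is to split the empirical error into an independent, fixed-function average plus a stability remainder. Introduce the leave-one-out functions \(f^{(i)}:=f_{D^{(i)}}\), where \(D^{(i)}\) replaces \(X_i\) by an independent copy \(X_i'\). As an anchor that does not depend on \(D\), use \(\bar f:=\E'[f_{D'}]\), the average over an independent copy \(D'\) of the data. Replacing one sample at a time along a path from \(D\) to \(D'\) would give a bound of order \(n\kappa\), which is too crude. I therefore use the decomposition
\[
(P_n-P)f_D=(P_n-P)\bar f+\frac1n\sum_{i=1}^n g_i,
\qquad
g_i:=\bigl(f_D(X_i)-\E_X f_D(X)\bigr)-\bigl(\bar f(X_i)-P\bar f\bigr),
\]
and then center the \(g_i\) so that the weak-interaction inequality of \citet{bousquet2020sharper} applies. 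Concretely, set \(h_i(D):=f_D(X_i)-Pf_D-\bar f(X_i)+P\bar f\) and define \(g_i:=h_i-\E[h_i\mid X_{-i}]\), which satisfies \(\E[g_i\mid X_{-i}]=0\) by construction.

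For the first term, \(\bar f\) is a fixed function bounded by \(B\), so Bernstein's inequality gives a deviation of order \(\sqrt{\operatorname{Var}_P(\bar f)t/n}+Bt/n\) with probability at least \(1-2e^{-t}\). The variance must then be transferred back to \(f_D\). Since \(\|f_D-\bar f\|_\infty\le\) the stability-integrated distance, I bound \(|\,f_D(x)-f_{D'}(x)|\) coordinatewise. This is still \(n\kappa\) in the worst case, so I do not use it directly. Instead I use \(\sqrt{\operatorname{Var}(\bar f)}\le\sqrt{\operatorname{Var}(f_D)}+\|f_D-\bar f\|_{L_2(P)}\) and absorb the cross term into the remainder: I put \(f_D-\bar f\) entirely inside the \(g_i\) analysis, so the Bernstein part only needs \(\operatorname{Var}_P(f_D)\) after a final comparison step. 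The cleanest route, which I would try first, is to apply Bernstein with an empirical/true variance of \(f^{(i)}\)-type anchors instead. Symmetrizing via the leave-one-out versions gives a variance that differs from \(\operatorname{Var}_P(f_D)\) by at most \(O(B\kappa)\), and \(\sqrt{B\kappa t/n}\le B t/n+\kappa t\) by AM--GM, so this discrepancy is absorbed into the stated linear term.

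For the remainder, I verify the hypotheses of the weak-interaction bound. First, \(|\E[g_i\mid X_i]|\le M\) with \(M\lesssim\kappa\). Conditioning on \(X_i\) and averaging over the other coordinates, \(f_D\) and \(\bar f\) differ only through a single replaced coordinate in expectation, via an exchangeability argument (this is the standard step in Bousquet et al.'s proof of stability generalization bounds). Second, each \(g_i\) has bounded differences \(\beta_{\rm wi}\le 4\kappa\) in every coordinate \(j\ne i\), directly from the stability of \(f_D\). The moment bound then gives \(\|\sum g_i\|_{L_p}\le Cpn\kappa\log(2n)+C\kappa\sqrt{pn}\). Dividing by \(n\) and converting the \(L_p\) bound to a tail bound via Markov's inequality with \(p=t\) yields a deviation of \(C\kappa\log(2n)t\) with probability at least \(1-e^{-t}\), up to constants (the \(\sqrt{t/n}\,\kappa\) term is dominated). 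A union bound with the Bernstein event gives total failure probability at most \(4e^{-t}\).

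The main obstacle is the variance bookkeeping: ensuring that the Bernstein term features \(\operatorname{Var}_P(f_D(X))\) for the data-dependent \(f_D\), rather than the variance of the anchor, while paying only \(\kappa\log(2n)t\) extra. I expect to resolve this through the AM--GM absorption above, together with the bound \(|\operatorname{Var}(f)-\operatorname{Var}(f')|\le 4B\|f-f'\|_\infty\) applied against leave-one-out anchors. The conditional version follows because every step uses only the independence of \(X_1,\ldots,X_n\) and the stability bound, both of which persist after conditioning on independent external randomness.
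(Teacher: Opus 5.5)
Your decomposition and remainder analysis are sound and follow the paper's closely. Your anchor $\bar f=\E'[f_{D'}]$ is the paper's $\bar g$ up to centering, and your $X_{-i}$-centered terms satisfy the weak-interaction hypotheses with $M,\beta_{\rm wi}\lesssim\kappa$. The paper instead uses leave-one-out terms and subtracts $m_i=\bar g$, which centers exactly in both directions and gives $M=0$. One small omission: after re-centering you must also bound $\frac1n\sum_i\E[h_i\mid X_{-i}]$. This term is at most $2\kappa$, because $\E_{X_i}[f_{D^{(i)}}(X_i)-Pf_{D^{(i)}}]=0$ and passing from $D$ to $D^{(i)}$ costs $2\kappa$.

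The genuine gap is the variance transfer, which you flag but do not close.

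\begin{itemize}
\item \textbf{What is needed.} Bernstein applied to $(P_n-P)\bar f$ produces the deterministic quantity $\operatorname{Var}_P(\bar f)$, whereas the statement needs the random quantity $\operatorname{Var}_P(f_D(X))$.
\item \textbf{Why the leave-one-out bound does not help.} The fact $|\operatorname{Var}_P(f^{(i)})-\operatorname{Var}_P(f_D)|\le4B\kappa$ is true but beside the point. The sum $\sum_i(f^{(i)}(X_i)-Pf^{(i)})$ is not a sum of independent terms. The only anchor that makes the Bernstein part independent is $\bar f$, which differs from $f_D$ in all $n$ coordinates.
\item \textbf{No $O(B\kappa)$ comparison holds, even with constant probability.} Take $\mathcal X=\{\pm1\}$ with $P$ uniform and $f_D(x)=Bx(1+\bar X_n)/2$, so $\kappa=B/n$. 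The event $\bar X_n\le-n^{-1/2}$ has probability bounded below, and on it $\operatorname{Var}_P(\bar f)-\operatorname{Var}_P(f_D)\ge B\kappa\sqrt n/4$.
\item \textbf{A variance-level concentration bound is not enough either.} The variance has replace-one sensitivity $4B\kappa$, so McDiarmid gives only a deviation of order $B\kappa\sqrt{nt}$. The resulting extra term $\sqrt{B\kappa\sqrt{nt}\,t/n}$ is not absorbable: AM--GM bounds it only by $Bt/n+\kappa\sqrt{nt}$, and in the example it equals $B(t/n)^{3/4}\gg B t\log(2n)/n$.
\item \textbf{Your first idea stalls for a related reason.} You dismiss $\|f_D-\bar f\|_{L_2(P)}$ using its worst case $n\kappa$, but its typical size is only $\sqrt n\,\kappa$.
\end{itemize}

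The missing idea, which the paper uses, is to work at the standard-deviation scale. The quantity $S(D):=\sqrt{\operatorname{Var}_P(f_D(X))}$ changes by at most $\|f_D-f_{D'}\|_{L_2(P)}\le\kappa$ under replace-one. Combine three facts:
\begin{itemize}
\item Jensen: $\operatorname{Var}_P(\bar f)\le\E S(D)^2$;
\item Efron--Stein: $\operatorname{Var}(S(D))\le n\kappa^2/2$;
\item McDiarmid applied to $S$.
\end{itemize}
Together they give $\sqrt{\E S(D)^2}\le S(D)+C\kappa\sqrt{nt}$ on an event of probability at least $1-e^{-t}$. Then $\sqrt{t/n}\cdot\kappa\sqrt{nt}=\kappa t$ is absorbed into the linear term.

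Equivalently, your triangle inequality works once you show $\|f_D-\bar f\|_{L_2(P)}\le C\kappa\sqrt{nt}$ with probability at least $1-e^{-t}$. This follows from pointwise Efron--Stein, which gives $\E\|f_D-\bar f\|_{L_2(P)}^2\le n\kappa^2/2$, together with McDiarmid. In your failure-probability accounting (Bernstein $2e^{-t}$, remainder $e^{-t}$), this extra event uses the remaining $e^{-t}$ of the $4e^{-t}$ budget.
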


\begin{proof}
We prove the lemma in three steps. We first separate an independent average from the empirical error. We then bound the remainder using stability and compare the average variance with the variance of the observed target. Let \(D^\circ\) be an independent copy of \(D\) and define \(\bar g(x):=\E_{D^\circ}[f_{D^\circ}(x)-Pf_{D^\circ}]\). We first show that the remainder satisfies
\[
  \left\|(P_n-P)f_D-\frac1n\sum_{i=1}^n\bar g(X_i)\right\|_{L_p}
  \le C\kappa p\log(2n),\qquad p\ge2.
\]
To make a target independent of the sample where we evaluate it, we replace that sample in the training dataset. Let \(X_i'\) be an independent copy of \(X_i\), and let \(D^{(i)}\) be obtained from \(D\) by replacing \(X_i\) by \(X_i'\). We define
\[
  g_i(D):=\E_{X_i'}\!\left[f_{D^{(i)}}(X_i)-Pf_{D^{(i)}}\right].
\]
We can bound the error from this replacement using stability. Both the evaluation term and its mean under \(P\) change by at most \(\kappa\) when \(D\) is replaced by \(D^{(i)}\). Summing over samples gives \(|n(P_n-P)f_D-\sum_i g_i(D)|\le2n\kappa\).

We next separate each replacement term into its conditional mean and a remainder. We write \(m_i(x):=\E[g_i(D)\mid X_i=x]\) and \(h_i(D):=g_i(D)-m_i(X_i)\). To apply the weak-interaction bound, we check its conditions. By construction, \(\E[h_i(D)\mid X_i]=0\). We also have \(\E[h_i(D)\mid X_{-i}]=0\), because conditional on \(X_{-i}\), the target \(f_{D^{(i)}}\) is independent of \(X_i\). If \(j\ne i\), replacing \(X_j\) changes \(f_{D^{(i)}}\) in one coordinate, so both \(g_i\) and \(h_i\) change by at most \(2\kappa\). We can therefore apply the inequality of \citet{bousquet2020sharper} to obtain \(\|\sum_i h_i(D)\|_{L_p}\le Cn\kappa p\log(2n)\).

The remaining terms form the independent average we need. Indeed, \(m_i(x)=\bar g(x)\), because conditional on \(X_i=x\), \(D^{(i)}\) has the same law as \(D^\circ\) and is independent of \(x\). Combining the replacement error and the remainder bound, then dividing by \(n\), proves the first displayed bound.

We now apply Bernstein's moment inequality to the independent average. The summands are centered, with \(|\bar g|\le2B\) and \(P\bar g^2\le \E\operatorname{Var}_P(f_D(X))\). Combining this estimate with the remainder bound gives, for \(p\ge2\),
\[
  \|(P_n-P)f_D\|_{L_p}
  \le
  C\sqrt{\frac{p\,\E\operatorname{Var}_P(f_D(X))}{n}}
  +C\frac{Bp}{n}
  +C\kappa p\log(2n).
\]
The variance in this bound is averaged over training datasets, whereas the lemma uses the variance of the target fitted to the observed dataset. We compare the two using stability of the standard deviation. Let \(S(D):=\sqrt{\operatorname{Var}_P(f_D(X))}\). If \(D,D'\) differ in one observation, then \(|S(D)-S(D')|\le \|f_D-f_{D'}\|_{L_2(P)}\le\|f_D-f_{D'}\|_\infty\le\kappa\). We apply Efron--Stein to obtain \(\operatorname{Var}(S(D))\le n\kappa^2/2\). McDiarmid's inequality also gives an event \(\mathcal E_t\) with probability at least \(1-e^{-t}\) on which \(\E S(D)\le S(D)+\kappa\sqrt{nt/2}\). Combining these two bounds, we obtain on the same event
\[
  \sqrt{\E S(D)^2}
  =
  \sqrt{(\E S(D))^2+\operatorname{Var}(S(D))}
  \le
  S(D)+C\kappa\sqrt{nt},
\]
where we used \(t\ge1\). Since \(\E S(D)^2=\E\operatorname{Var}_P(f_D(X))\), we can now compare the variance in the moment bound with the observed variance \(S(D)^2\).

Finally, we turn the moment bound into a tail bound. We take \(p=t+2\). On \(\mathcal E_t\), the variance comparison shows that the right-hand side of the moment bound is at most
\[
  C\left[
  S(D)\sqrt{\frac{t}{n}}
  +
  \left(\frac{B}{n}+\kappa\log(2n)\right)t
  \right],
\]
after increasing the universal constant. We can therefore apply Markov's inequality to obtain
\[
\begin{aligned}
&\mathbb P\!\left(
|(P_n-P)f_D|
>
C\left[
S(D)\sqrt{\frac{t}{n}}
+
\left(\frac{B}{n}+\kappa\log(2n)\right)t
\right],
\ \mathcal E_t
\right) \\
&\qquad\le
\mathbb P\!\left(|(P_n-P)f_D|>e\|(P_n-P)f_D\|_{L_p}\right)
\le e^{-p}\le e^{-t}.
\end{aligned}
\]
Combining this bound with \(\mathbb P(\mathcal E_t^c)\le e^{-t}\) proves the tail inequality, after enlarging constants. For the conditional version, we first condition on the external randomness. The variables \(X_1,\ldots,X_n\) remain independent, and the same range and stability bounds hold. We can therefore repeat the proof conditionally and then average over the external randomness.
\end{proof}

We next need to average the residual bound over fresh predictions, whose support may be continuous. For each prediction, we assign a score that specifies the tail level at which its concentration bound holds. By bounding the average of these scores, we can average the residual bound without discretizing the prediction space.

\begin{lemma}[Bernstein bound averaged over predictions]
\label{lem:bf-policy-integrated-bernstein}
Let \((\Omega,\mathcal F,\nu)\) be a probability space and let \(\mathcal I\) be finite. For each \(i\in\mathcal I\), let \(Y_{i,1},\ldots,Y_{i,n}\) be i.i.d. from \(P_i\), independent of external randomness \(\mathcal G_i\). Write \(\widehat P_i:=n^{-1}\sum_{\ell=1}^n\delta_{Y_{i,\ell}}\) for their empirical distribution. Suppose \(f_{i,D}(\omega,y)\in[-B,B]\) is jointly measurable and replacing one observation in the sample vector used by index \(i\) changes \(f_{i,D}\) uniformly in \((\omega,y)\) by at most \(\kappa\). Define \(\xi_i(\omega):=(\widehat P_i-P_i)f_{i,D}(\omega,\cdot)\) and \(v_i(\omega):=\operatorname{Var}_{P_i}(f_{i,D}(\omega,Y))\). Then, with probability at least \(1-\delta\), there exist nonnegative measurable scores \(\Lambda_i(\omega)\), \(i\in\mathcal I\), such that, pointwise in \((i,\omega)\),
\[
  |\xi_i(\omega)|
  \le
  C\sqrt{\frac{v_i(\omega)\Lambda_i(\omega)}{n}}
  +
  C\left(\frac{B}{n}+\kappa\log(2n)\right)\Lambda_i(\omega),
\]
and, with \(\Lambda_*(\omega):=\max_i\Lambda_i(\omega)\) and \(L:=\log(C|\mathcal I|/\delta)\), \(\int\Lambda_*\,d\nu\le CL\) and \(\int\Lambda_*^2\,d\nu\le CL^2\).
\end{lemma}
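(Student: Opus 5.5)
The plan is to define each score as the smallest tail level at which the scalar stable Bernstein bound holds, and then control its $\nu$-average by exchanging expectations. Fix $i\in\mathcal I$ and $\omega\in\Omega$. Lemma~\ref{lem:bf-policy-stable-bernstein} applies to the function $f_{i,D}(\omega,\cdot)$, since its range is bounded by $B$ and its stability constant is $\kappa$ uniformly in $\omega$. The conditional version of that lemma, given $\mathcal G_i$, gives for every $t\ge1$
\[
\mathbb P\bigl(|\xi_i(\omega)|>C[\sqrt{v_i(\omega)t/n}+(B/n+\kappa\log(2n))t]\bigr)\le 4e^{-t}.
\]
We define
\[
\Lambda_i(\omega):=\inf\{t\ge1:\ |\xi_i(\omega)|\le C[\sqrt{v_i(\omega)t/n}+(B/n+\kappa\log(2n))t]\}.
\]
The right-hand side is continuous and increasing in $t$ and tends to infinity, so the infimum is finite and attained. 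The displayed inequality therefore holds pointwise with $t=\Lambda_i(\omega)$. Measurability of $\Lambda_i$ follows from joint measurability of $f_{i,D}$: $\xi_i$ and $v_i$ are measurable, and $\{\Lambda_i>t\}$ is the measurable event that the inequality fails at level $t$.

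The key step is a tail bound for the scores. The event $\{\Lambda_i(\omega)>t\}$ is contained in the failure event at level $t$, so $\mathbb P(\Lambda_i(\omega)>t)\le 4e^{-t}$ for $t\ge1$. Taking a union over the finite index set gives $\mathbb P(\Lambda_*(\omega)>t)\le 4|\mathcal I|e^{-t}$. Let $t_0:=\log(4|\mathcal I|)+1$. Integrating this tail yields
\[
\mathbb E[\Lambda_*(\omega)]\le t_0+\int_{t_0}^\infty 4|\mathcal I|e^{-t}\,dt\le C\log(C|\mathcal I|),
\]
and similarly $\mathbb E[\Lambda_*(\omega)^2]\le C\log^2(C|\mathcal I|)$. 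These bounds are uniform in $\omega$.

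Next I would exchange the expectations. By Fubini (Tonelli), $\mathbb E\int\Lambda_*\,d\nu=\int\mathbb E\Lambda_*\,d\nu\le C\log(C|\mathcal I|)$, and the same holds for the second moment. Markov's inequality then gives $\int\Lambda_*\,d\nu\le C\log(C|\mathcal I|)/\delta$ with probability at least $1-\delta/2$. This loses a factor $1/\delta$ instead of $\log(1/\delta)$, so it is not enough by itself.

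This is the main obstacle: the $\nu$-average must be controlled at logarithmic cost in $\delta$. To fix it I would use an exponential moment rather than a first moment. Set $\Lambda'_*:=\Lambda_*/2$. Then
\[
\mathbb E\, e^{\Lambda'_*(\omega)}\le e^{1/2}+\int_1^\infty \tfrac12 e^{s/2}\cdot 4|\mathcal I|e^{-s}\,ds\le C|\mathcal I|,
\]
uniformly in $\omega$. Jensen's inequality for the $\nu$-average gives $\exp\bigl(\int\Lambda'_*\,d\nu\bigr)\le\int e^{\Lambda'_*}\,d\nu$. Combining this with Fubini and Markov's inequality shows that, with probability at least $1-\delta/2$,
\[
\int\Lambda_*\,d\nu\le 2\log(2C|\mathcal I|/\delta)\le CL.
\]
The second moment is handled the same way using $\exp(\sqrt{\Lambda_*^2}/2)$. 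The map $x\mapsto e^{\sqrt x/2}$ is not convex near zero, so I would work with $\Lambda_*\ge1$ and the convex function $x\mapsto\exp(\sqrt{x}/4)$ on $x\ge1$ (or an adjusted version of it). This gives $\int\Lambda_*^2\,d\nu\le CL^2$ with probability at least $1-\delta/2$. A union bound over the two events completes the proof with probability at least $1-\delta$, after enlarging constants.
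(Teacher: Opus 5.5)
Your argument is essentially the paper's. You define each score as the smallest tail level at which the stable Bernstein bound holds, and you obtain an exponential-moment bound for \(\Lambda_*\) that is uniform in \(\omega\). Your union bound over \(\mathcal I\) plays the role of the paper's \(e^{c_0\Lambda_*}\le\sum_i e^{c_0\Lambda_i}\). You then apply Fubini, Markov, and Jensen under \(\nu\) to get \(\int\Lambda_*\,d\nu\le CL\). That part is correct, and using real-valued rather than integer scores makes no difference.

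The step to repair is the second moment. Since \(\frac{d^2}{dx^2}e^{\sqrt{x}/c}\) has the sign of \(\frac1c-\frac1{\sqrt x}\), the map \(x\mapsto\exp(\sqrt{x}/c)\) is convex only for \(x\ge c^2\). So \(\exp(\sqrt{x}/4)\) is not convex on \([1,\infty)\), only on \([16,\infty)\), and Jensen does not apply as you stated it.

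You also do not need a second event or a convex transform. On the same event where \(\int e^{\Lambda_*/2}\,d\nu\le e^{L_0}\), with \(L_0:=\log(2C|\mathcal I|/\delta)\), Markov's inequality under \(\nu\) gives \(\nu(\Lambda_*>u)\le\min\{1,e^{L_0-u/2}\}\). Integrating this tail, \(\int\Lambda_*^2\,d\nu=\int_0^\infty 2u\,\nu(\Lambda_*>u)\,du\le 4L_0^2+8L_0+8\le CL^2\). This is exactly how the paper finishes: it reads off the \(\nu\)-tail of \(\Lambda_*\) from the exponential integral.

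If you prefer to keep Jensen, use \(\phi(x)=\exp\bigl(\sqrt{\max\{x,4\}}/2\bigr)\). It is convex on \([0,\infty)\) and satisfies \(\phi(\Lambda_*^2)\le e+e^{\Lambda_*/2}\), so it also works on the same event.
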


\begin{proof}
We first fix \(i\) and \(\omega\) and apply Lemma~\ref{lem:bf-policy-stable-bernstein} conditional on \(\mathcal G_i\). We define the score as the smallest integer tail level for which the resulting error bound holds:
\[
  \Lambda_i(\omega)
  :=
  \inf\{m\in\mathbb N:\ m\ge1\ \text{and the displayed bound holds with }m
  \text{ in place of }\Lambda_i(\omega)\}.
\]
We can choose these scores measurably. For each integer \(m\), the condition inside the infimum is measurable because \(f_{i,D}\) is jointly measurable. The infimum is over a countable set. The conditional tail bound gives \(\mathbb P(\Lambda_i(\omega)>m\mid\mathcal G_i)\le4e^{-m}\), so \(\E[\exp(c_0\Lambda_i(\omega))\mid\mathcal G_i]\le C\) for a universal \(c_0>0\).

We now integrate this exponential bound over predictions and sum over the finite index set. Fubini's theorem and Markov's inequality give, with probability at least \(1-\delta\),
\[
  \int \sum_{i\in\mathcal I}\exp(c_0\Lambda_i(\omega))\,\nu(d\omega)
  \le \frac{C|\mathcal I|}{\delta}.
\]
Since \(\exp(c_0\Lambda_*)\le\sum_i\exp(c_0\Lambda_i)\), we can use Jensen's inequality to obtain \(\int\Lambda_*\,d\nu\le CL\). To bound the second moment, we first use the same exponential bound to get \(\nu(\Lambda_*>u)\le\min\{1,Ce^{L-c_0u}\}\). Integrating this tail gives \(\int\Lambda_*^2\,d\nu\le CL^2\).
Only the finite index set \(\mathcal I\) enters the logarithmic factor.
\end{proof}

To use the concentration lemmas, we must bound how much the fitted Bellman targets change when one sample is replaced. We first bound the change in the empirical operator and then use contraction to bound the change in its fixed point.

We write \(\widehat J_j(s,\boldsymbol\sigma):=\widehat J_j^{\BF}(s,\boldsymbol\sigma;\widehat V)\), so \(\widehat J_K=\widehat V\). Dataset superscripts indicate which training data are used.

\begin{lemma}[Deterministic replace-one stability]
\label{lem:bf-policy-replace-one-stability}
Fix all prediction samples. Let \(D\) and \(D'\) be two environment-transition datasets that differ in exactly one next-state observation from one uncovered transition row \(z\in\mathcal S\times(\mathcal A\setminus\mathcal A^-)\). Let \(\widehat{\mathcal T}_D\) and \(\widehat{\mathcal T}_{D'}\) be the corresponding empirical BF optimality operators, and let \(\widehat V_D\) and \(\widehat V_{D'}\) be their fixed points. Then \(\|\widehat P_{D,z}-\widehat P_{D',z}\|_1\le2/N_1\), \(\|\widehat V_D-\widehat V_{D'}\|_\infty\le CH^2/N_1\), and every empirical next-state target satisfies
\[
  \sup_{j,\boldsymbol\sigma,y}
  \left|
    \widehat J^D_{j+1}(y,\boldsymbol\sigma)
    -
    \widehat J^{D'}_{j+1}(y,\boldsymbol\sigma)
  \right|
  \le
  \frac{CH^2}{N_1}.
\]
If two prediction datasets differ in exactly one prediction while all environment samples are fixed, then the empirical fixed points differ by at most \(CH/[N_2(1-\gamma^K)]\), and every one-block target used in the prediction residual changes uniformly by at most \(C\gamma^K H/[N_2(1-\gamma^K)]\).
\end{lemma}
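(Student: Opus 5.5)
The plan is to handle the environment-sample claims first, then the prediction-sample claims, each by first bounding how the empirical operator moves and then converting that into a fixed-point bound using the $\gamma^K$-contraction.

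\emph{Environment data.} Replacing one next-state observation $Y_i(z)$ changes the empirical frequency of at most two successor states, each by $1/N_1$, so $\|\widehat P_{D,z}-\widehat P_{D',z}\|_1\le 2/N_1$. For the fixed points I would write
\[
\widehat V_D-\widehat V_{D'}=\widehat{\mathcal T}_D\widehat V_D-\widehat{\mathcal T}_{D'}\widehat V_D+\widehat{\mathcal T}_{D'}\widehat V_D-\widehat{\mathcal T}_{D'}\widehat V_{D'} .
\]
The last difference is at most $\gamma^K\|\widehat V_D-\widehat V_{D'}\|_\infty$ by contraction. For the first difference, fix a prediction and run the backward recursion with the common terminal value $\widehat V_D\in[0,H]$. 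Let $\delta_j$ be the sup-difference of the two phase-$j$ value functions. Since $\max$ is $1$-Lipschitz and only the row $z$ changed,
\[
\delta_j\le \gamma\delta_{j+1}+\gamma\cdot\frac{2}{N_1}\cdot H .
\]
Here I use that all $\widehat J$ lie in $[0,H]$, and centering the target can even give $H/N_1$. Unrolling from $\delta_K=0$ gives $\delta_0\le 2\gamma H(1-\gamma^K)/((1-\gamma)N_1)\le 2H^2(1-\gamma^K)/N_1$. Averaging over $\widehat\mu$ and rearranging then yields $\|\widehat V_D-\widehat V_{D'}\|_\infty\le 2H^2/N_1$. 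This use of the factor $(1-\gamma^K)$ is the key point: the per-block perturbation must not be crudely bounded by $KH/N_1$, or the final bound would carry a spurious $(1-\gamma^K)^{-1}$.

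For the phase targets $\widehat J_{j+1}$, the two recursions differ both in the terminal value and in the row $z$. I would run the same backward recursion with $\delta_K\le 2H^2/N_1$:
\[
\delta_{j+1}\le\gamma^{K-j-1}\,\frac{2H^2}{N_1}+\frac{2H(1-\gamma^{K-j-1})}{(1-\gamma)N_1}\le\frac{CH^2}{N_1} .
\]

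\emph{Prediction data.} Fix the environment samples, so $\widehat P$ is fixed, and replace one prediction $\boldsymbol\sigma^i$. With the terminal value fixed, the empirical operator changes by at most $\frac1{N_2}\sup|\widehat J_0(s,\boldsymbol\sigma;V)-\widehat J_0(s,\boldsymbol\sigma';V)|\le H/N_2$, since both values lie in $[0,H]$. The same contraction argument gives
\[
\|\widehat V-\widehat V'\|_\infty\le \frac{H}{N_2(1-\gamma^K)} .
\]
A one-block target $\Psi_{\widehat P,\widehat V}^{\widehat\pi}(s,\boldsymbol\sigma)$ for a fixed prediction $\boldsymbol\sigma$ depends on the dataset only through the terminal value $\widehat V$. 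This dependence enters both directly and through the greedy rule; the rule is greedy with respect to $\widehat J(\cdot;\widehat V)$. For the greedy-rule target I would identify it with $\widehat J_0(s,\boldsymbol\sigma;\widehat V)$, which is $\gamma^K$-Lipschitz in the terminal value. This yields the stated change of $C\gamma^K H/[N_2(1-\gamma^K)]$.

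\emph{Main obstacle.} The main difficulty is this last point about the policy-dependent target. If the target is instead evaluated under the true $P$, or read as $\Psi$ of a fixed rule, a change in the greedy rule could cause a discontinuous jump. The plan therefore relies on the greedy identity $\Psi^{\widehat\pi}_{\widehat P,\widehat V}=\widehat J_0(\cdot;\widehat V)$, which makes the target a max-composition of $1$-Lipschitz maps. I would also confirm that all residual targets used in the concentration lemmas are of this greedy form.
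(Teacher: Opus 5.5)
Your proposal is correct and follows the paper's proof essentially step by step. For environment samples, you bound the one-block operator perturbation through the backward recursion, obtaining the $(1-\gamma^K)$ factor from the geometric sum, then divide by the contraction gap and propagate the bound to the phase targets by backward induction. For prediction samples, you bound the operator change by $H/N_2$ and use the $\gamma^K$-Lipschitz property of the one-block target. Your explicit remark that the prediction-residual target coincides with $\widehat J_0(\cdot;\widehat V)$ by greediness is exactly what justifies the Lipschitz step, which the paper states without comment.
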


\begin{proof}
We first replace one transition sample. This moves probability mass \(1/N_1\) from one next state to another, giving the row bound. Fix any terminal value \(0\le V\le H\mathbf 1\). For a given prediction, the change in a one-step Bayesian-value term is at most \(2\gamma H/N_1\). We add these changes over the \(K\)-stage backward recursion to obtain
\[
  \|\widehat{\mathcal T}_D V-\widehat{\mathcal T}_{D'}V\|_\infty
  \le
  \frac{2\gamma H}{N_1}\sum_{j=0}^{K-1}\gamma^j
  \le
  \frac{2H^2(1-\gamma^K)}{N_1}.
\]
We divide by the contraction gap \(1-\gamma^K\) to obtain the fixed-point bound. We then use this bound at the end of the block and apply backward induction to bound the next-state targets. The maximizing action may change, but this does not increase the bound because maximization is nonexpansive.

We next replace one prediction sample. This changes one term in the empirical prediction average, and every one-block target is bounded by \(H\). Hence \(\|\widehat{\mathcal T}_D V-\widehat{\mathcal T}_{D'}V\|_\infty\le H/N_2\). Dividing by the contraction gap gives the fixed-point bound \(CH/[N_2(1-\gamma^K)]\). Since a one-block target is \(\gamma^K\)-Lipschitz in its terminal value, we obtain the stated target-stability bound.
\end{proof}

\subsubsection{Transition-Error Propagation}

We now bound the accumulated transition error along the learned policy. We first bound each one-step error and then add these errors over time. We keep track of the discount on the terminal-value error because it is needed for the long-block policy bound.

Condition on the prediction samples. For each row \(z=(s,a)\in\mathcal U\) and phase \(j\in\{0,\ldots,K-1\}\), consider the empirical next-state target \(\widehat J_{j+1}(\cdot,\boldsymbol\sigma)\) at a fresh prediction \(\boldsymbol\sigma\).

These targets depend on the transition data. Lemma~\ref{lem:bf-policy-replace-one-stability} gives replace-one stability constant \(CH^2/N_1\) with respect to the sample vector from the selected uncovered transition row. Lemma~\ref{lem:bf-policy-integrated-bernstein}, applied with \(\mathcal I=\mathcal U\times\{0,\ldots,K-1\}\), \(\nu=\mu\), \(B=H\), and \(\kappa=CH^2/N_1\), therefore gives an event of failure probability at most a fixed fraction of \(\delta\) on which
\begin{align}
  \left|(\widehat P_z-P_z)\widehat J_{j+1}(\cdot,\boldsymbol\sigma)\right|
  \le
  C\sqrt{\frac{v_{zj}(\boldsymbol\sigma)\Lambda_{zj}(\boldsymbol\sigma)}{N_1}}
  +
  C\frac{H^2\log(2N_1)}{N_1}\Lambda_{zj}(\boldsymbol\sigma),
  \label{eq:bf-policy-transition-score}
\end{align}
where \(v_{zj}(\boldsymbol\sigma):=\operatorname{Var}_{P_z}(\widehat J_{j+1}(Y,\boldsymbol\sigma))\), and the score maximum \(\Lambda_*:=\max_{z,j}\Lambda_{zj}\) satisfies \(\int\Lambda_*\,d\mu\le C\log(CK|\mathcal U|/\delta)\) and \(\int\Lambda_*^2\,d\mu\le C\log^2(CK|\mathcal U|/\delta)\).

To see how the row errors accumulate, we run \(\widehat\pi^{\BF}\) under the true model and prediction law. At time \(t=bK+j\), let \(\boldsymbol\Sigma_b\sim\mu\) be the fresh prediction for block \(b\). If the chosen action lies outside \(\mathcal A^-\), we define its local error and score by
\[
  d_t
  :=
  \gamma(\widehat P_{Z_t}-P_{Z_t})
  \widehat J_{j+1}(\cdot,\boldsymbol\Sigma_b),
  \qquad
  \lambda_t:=\Lambda_{Z_tj}(\boldsymbol\Sigma_b),
\]
where \(Z_t=(S_t,A_t)\). If the chosen action lies in \(\mathcal A^-\), set \(d_t=\lambda_t=0\). The selected score is bounded by the maximum row--phase score at the current prediction. Since each block's prediction has law \(\mu\) independently of the block-start state by Assumption~\ref{assump:prediction-law}, for \(q=1,2\),
\begin{align}
  \sup_{s_0}\sum_{t\ge0}\gamma^t\E_{s_0}\lambda_t^q
  \le
  \frac{1}{1-\gamma}\int\Lambda_*^q\,d\mu .
  \label{eq:bf-policy-score-budget}
\end{align}
The pointwise maximum thus controls every row choice, even when that choice depends on the training data and current prediction.

We will need a mean-square bound on these residuals to compare true and empirical variances. We measure their size using the discounted \(L_2\) norm \(\|d\|_{\rm occ,2}:=\sup_{s_0}((1-\gamma)\sum_{t\ge0}\gamma^t\E_{s_0}d_t^2)^{1/2}\). Combining \eqref{eq:bf-policy-transition-score} with the score budget and \(v_{zj}\le H^2\), we obtain
\begin{align}
  \|d\|_{\rm occ,2}
  \le
  C H\sqrt{\frac{\overline L_{\rm tr}}{N_1}}
  +
  C\frac{H^2\overline L_{\rm tr}}{N_1}.
  \label{eq:bf-policy-d-rms}
\end{align}

We next compare the empirical and true targets for the same learned rule. Their difference has two sources: local transition errors within the block and the error in the terminal Bayesian value. Let \(U_j(\cdot,\boldsymbol\sigma)\) be the true-model within-block value of \(\widehat\pi^{\BF}\), with \(U_K=W\). For a fixed prediction \(\boldsymbol\sigma\), we write \(P_j^{\widehat\pi^{\BF}}(s'\mid s,\boldsymbol\sigma):=\bar P_j(s'\mid s,\widehat a_j(s,\boldsymbol\sigma),\boldsymbol\sigma)\) for its phase-\(j\) transition matrix. Let \(d_j(s,\boldsymbol\sigma)\) be the local residual \(d_t\) at state \(s\), phase \(j\), and prediction \(\boldsymbol\sigma\).

We subtract the two fixed-policy recursions and write \(e_j:=\widehat J_j-U_j\). The difference satisfies \(e_j=d_j+\gamma P_j^{\widehat\pi^{\BF}}e_{j+1}\) for \(j<K-1\) and \(e_{K-1}=d_{K-1}+\gamma P_{K-1}^{\widehat\pi^{\BF}}(\widehat V-W)\). At time \(t=bK+j\), we write \(U_{t,+}:=U_{j+1}(\cdot,\boldsymbol\Sigma_b)\) and \(\widehat U_{t,+}:=\widehat J_{j+1}(\cdot,\boldsymbol\Sigma_b)\). We denote the terminal-value error by \(E_\pi:=\|\widehat V-W\|_\infty\).

We can now unroll the recursion to separate these two sources of error. At phase \(j\), the terminal part is bounded by \(\gamma^{K-j}E_\pi\). In the discounted \(L_2\) norm, its contribution is at most \(\sqrt{\gamma^K}E_\pi\). To verify this, we use conditional Jensen's inequality and sum the squared discounted terminal bounds:

\[
  (1-\gamma)\sum_{b\ge0}\sum_{j=0}^{K-1}
  \gamma^{bK+j}\gamma^{2(K-j)}E_\pi^2
  \le \gamma^K E_\pi^2 .
\]
We combine this terminal bound with the sum of the local residuals using Minkowski's inequality. This gives
\begin{align}
  \|e\|_{\rm occ,2}
  \le
  \sqrt{\gamma^K}E_\pi+\frac{\|d\|_{\rm occ,2}}{1-\sqrt\gamma}
  \le
  \sqrt{\gamma^K}E_\pi+2H\|d\|_{\rm occ,2}.
  \label{eq:bf-policy-simulation-error}
\end{align}

For the Markov reward process at individual time steps induced by \(\widehat\pi^{\BF}\), the total-variance identity controls the accumulated true-target variance:
\begin{align}
  \sup_{s_0}\sum_{t\ge0}\gamma^t
  \E_{s_0}\!\left[\mathbf 1\{A_t\notin\mathcal A^-\}
  \gamma^2\operatorname{Var}_{P_{Z_t}}\!\left(U_{t,+}(Y)\right)\right]
  \le
  C H^2.
  \label{eq:bf-policy-true-var-budget}
\end{align}
The indicator restricts the sum to uncovered rows, where the conditional transition law is \(P_{Z_t}\). By conditional variance, the displayed discounted row variance is at most the full one-step Bellman-target variance. We apply \eqref{eq:discounted-return-variance} to the Markov reward process on states \(x=(s,\boldsymbol\sigma,j)\), with discount \(\gamma\), to obtain the stated bound.

We use \eqref{eq:bf-policy-simulation-error} to compare the empirical targets with the true targets. Their variance difference is bounded by the local residuals and the terminal-value error, giving
\begin{align}
  \sup_{s_0}\sum_{t\ge0}\gamma^t
  \E_{s_0}\!\left[\mathbf 1\{A_t\notin\mathcal A^-\}
  \gamma^2\operatorname{Var}_{P_{Z_t}}\!\left(\widehat U_{t,+}(Y)\right)\right]
  \le
  C H^2+C H\gamma^K E_\pi^2+C H^3\|d\|_{\rm occ,2}^2.
  \label{eq:bf-policy-emp-var-budget}
\end{align}
It remains to bound the accumulated transition residual. With terminal value \(\widehat V\), we can write each one-block transition error as a sum of local errors. This gives
\[
  \left\|(I-\gamma^K M_{P,\mu}^{\widehat\pi^{\BF}})^{-1}|b_{\rm tr}|\right\|_\infty
  \le
  \sup_{s_0}\sum_{t\ge0}\gamma^t\E_{s_0}|d_t|.
\]
We bound this sum by applying Cauchy--Schwarz to \eqref{eq:bf-policy-transition-score}. We then use the score budget and the variance bound \eqref{eq:bf-policy-emp-var-budget}, and substitute the residual bound \eqref{eq:bf-policy-d-rms}. We obtain
\begin{align}
  \left\|(I-\gamma^K M_{P,\mu}^{\widehat\pi^{\BF}})^{-1}|b_{\rm tr}|\right\|_\infty
  \le
  C\sqrt{\frac{H^3\overline L_{\rm tr}}{N_1}}
  +
  C\frac{H^3\overline L_{\rm tr}}{N_1}
  +
  C H\sqrt{\frac{\gamma^K\overline L_{\rm tr}}{N_1}}E_\pi .
  \label{eq:bf-policy-transition-final}
\end{align}
The leading term has order \(H^{3/2}/\sqrt{N_1}\) on uncovered rows. The final term contains the discounted terminal error. We move this term to the left-hand side when combining the bounds in the main proof.

\subsubsection{Transition Perturbation under a Fixed Prediction Law}
\label{subsec:bf-policy-true-law-comparison}

The policy bound compares the learned policy with \(\overline V\), rather than directly with \(V^*\). We must therefore also bound the difference between these two optimal values. They use the same true prediction law \(\mu\), but different transition models. We first extend the residual bound to any fixed prediction law, including laws with continuous support. We then compare the optimal values by evaluating each model's optimal rule under the other model.

\begin{lemma}[Transition-residual bound under a fixed prediction law]
\label{lem:fixed-law-transition-template}
Fix a prediction law \(\nu\) on \(\mathcal Q_K\) that is independent of the environment transition samples. Let \(\pi_D\) be either a block rule that is greedy for the empirical value at every prediction or a block rule fixed independently of the environment row samples. Let \(\widehat W_\nu\) and \(W_\nu\) be the values of this same rule under \((\widehat P,\nu)\) and \((P,\nu)\), respectively, and set \(E_\nu:=\|\widehat W_\nu-W_\nu\|_\infty\). In this lemma, \(\widehat J_j\) denotes the empirical within-block value of \(\pi_D\) with terminal value \(\widehat W_\nu\). Define
\[
  b_\nu(s):=\int
  \left[
    \Psi_{\widehat P,\widehat W_\nu}^{\pi_D}(s,\boldsymbol\sigma)
    -
    \Psi_{P,\widehat W_\nu}^{\pi_D}(s,\boldsymbol\sigma)
  \right]\nu(d\boldsymbol\sigma),
  \qquad
  M_\nu:=M_{P,\nu}^{\pi_D}.
\]
If replacing one observation in one uncovered transition row changes every \(\widehat J_{j+1}\) uniformly by at most \(CH^2/N_1\), then, with probability at least \(1-\delta/16\),
\begin{align}
  \left\|(I-\gamma^K M_\nu)^{-1}|b_\nu|\right\|_\infty
  \le
  C\sqrt{\frac{H^3 L_\nu}{N_1}}
  +
  C\frac{H^3 L_\nu}{N_1}
  +
  C H\sqrt{\frac{\gamma^K L_\nu}{N_1}}\,E_\nu,
  \label{eq:fixed-law-template}
\end{align}
where \(L_\nu:=\log(cK(|\mathcal U|+1)|\mathcal A|/\delta)\log(2N_1)\). The same statement is trivial when \(\mathcal U=\emptyset\), with \(b_\nu\equiv0\).
\end{lemma}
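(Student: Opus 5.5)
The proof reruns the argument that gave \eqref{eq:bf-policy-transition-final}, with the true prediction law $\mu$ replaced by the fixed law $\nu$ and the learned rule by $\pi_D$. We first condition on everything independent of the environment row samples: the law $\nu$, and the rule itself when it is fixed independently. For each uncovered row $z\in\mathcal U$ and phase $j$, consider the data-dependent target $\widehat J_{j+1}(\cdot,\boldsymbol\sigma)$ with $\boldsymbol\sigma\in\mathcal Q_K$. By hypothesis, replacing one observation of row $z$ moves this target uniformly by at most $\kappa=CH^2/N_1$, and the target takes values in $[0,H]$.

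We then apply Lemma~\ref{lem:bf-policy-integrated-bernstein} with $\mathcal I=\mathcal U\times\{0,\dots,K-1\}$, $(\Omega,\nu)=(\mathcal Q_K,\nu)$, $B=H$, and failure probability $\delta/16$. This yields measurable scores $\Lambda_{zj}(\boldsymbol\sigma)$ satisfying \eqref{eq:bf-policy-transition-score}, with $\int\Lambda_*\,d\nu\le CL$ and $\int\Lambda_*^2\,d\nu\le CL^2$. Here $L$ is the logarithmic factor, which is absorbed into $L_\nu$. The greedy case needs no separate concentration event. The scores are defined for every row, phase and prediction, so they bound the residual at whatever action the data-dependent rule selects. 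This is why the integrated (score-based) version is used instead of a union bound over rules.

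Next we run $\pi_D$ under $(P,\nu)$ and define the local residuals $d_t=\gamma(\widehat P_{Z_t}-P_{Z_t})\widehat J_{j+1}(\cdot,\boldsymbol\Sigma_b)$ on uncovered actions, with $d_t=0$ on covered actions. Fresh predictions are i.i.d.\ from $\nu$ and independent of the block-start state. The score budget \eqref{eq:bf-policy-score-budget} therefore holds with $\nu$ in place of $\mu$, and gives the mean-square residual bound \eqref{eq:bf-policy-d-rms}. Expanding each one-block difference $\Psi_{\widehat P,\widehat W_\nu}^{\pi_D}-\Psi_{P,\widehat W_\nu}^{\pi_D}$ as a telescoping sum of local errors along the true-model within-block trajectory gives
\[
\|(I-\gamma^KM_\nu)^{-1}|b_\nu|\|_\infty\le\sup_{s_0}\sum_t\gamma^t\E_{s_0}|d_t|.
\]

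The main obstacle is the variance control. By Cauchy--Schwarz, $\sum_t\gamma^t\E|d_t|$ is bounded by $\sqrt{\sum_t\gamma^t\E v_t\lambda_t/N_1}$ plus a linear term, so we need a bound of order $H^2$ on the discounted accumulated variance of the empirical targets. We obtain it in two steps. First, the true targets $U_j$ of $\pi_D$ (with terminal value $W_\nu$) satisfy \eqref{eq:bf-policy-true-var-budget}, by the return-variance identity \eqref{eq:discounted-return-variance} applied to the per-step chain on $(s,\boldsymbol\sigma,j)$. Second, we transfer this bound to the empirical targets through the error recursion $e_j=d_j+\gamma P_j^{\pi_D}e_{j+1}$, with terminal error $\widehat W_\nu-W_\nu$. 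This gives \eqref{eq:bf-policy-simulation-error} and then \eqref{eq:bf-policy-emp-var-budget} with $E_\nu$ in place of $E_\pi$.

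To finish, we substitute these bounds, pairing the scores with the variances by Cauchy--Schwarz and using $\int\Lambda_*^2\,d\nu\le CL^2$ for the cross terms. This yields \eqref{eq:fixed-law-template}. The terminal error enters only through $\sqrt{\gamma^K}E_\nu$, which produces the last term. When $\mathcal U=\emptyset$, every $d_t$ vanishes, so $b_\nu\equiv0$ and the bound is trivial.
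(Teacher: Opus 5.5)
Your proposal is correct and follows essentially the same route as the paper's proof. Like the paper, you rerun the integrated-Bernstein score argument, the score budget, and the true-to-empirical variance transfer that led to \eqref{eq:bf-policy-transition-final}, with \(\mu\) replaced by \(\nu\), the learned rule by \(\pi_D\), and \(E_\pi\) by \(E_\nu\). The one difference is that the paper also checks the replace-one stability condition in both cases: Lemma~\ref{lem:bf-policy-replace-one-stability} for the greedy rule, and a fixed-policy contraction argument for a data-independent rule. You instead take that condition as the lemma's stated hypothesis, which is legitimate.
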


\begin{proof}
We apply the transition-error calculation above with the fixed law \(\nu\). Lemma~\ref{lem:bf-policy-integrated-bernstein} applies with index set \(\mathcal U\times\{0,\ldots,K-1\}\), range \(H\), and stability constant \(CH^2/N_1\). For an empirical optimal rule, the stability condition follows from Lemma~\ref{lem:bf-policy-replace-one-stability} with \(\nu\) in place of \(\widehat\mu\). For a rule independent of the row samples, the same fixed-policy contraction argument proves it. Thus the row-error and score bounds in \eqref{eq:bf-policy-transition-score}--\eqref{eq:bf-policy-d-rms} hold with predictions averaged under \(\nu\) and logarithmic factor \(L_\nu\).

We compute the discounted norm under \(\pi_D\) and \(\nu\). The terminal values are now \(\widehat W_\nu\) and \(W_\nu\), so their difference in the within-block comparison \eqref{eq:bf-policy-simulation-error} is \(E_\nu\). The return-variance bound \eqref{eq:bf-policy-true-var-budget} applies to this same rule, and \eqref{eq:bf-policy-emp-var-budget} then bounds the empirical-target variance. We can therefore repeat the Cauchy--Schwarz calculation leading to \eqref{eq:bf-policy-transition-final}, with \(E_\nu\) in place of \(E_\pi\), to obtain \eqref{eq:fixed-law-template}. These steps only average over the fixed law and do not require its support to be finite.
\end{proof}

{
\begin{lemma}[Optimal-value perturbation under a fixed prediction law]
\label{lem:true-law-transition-bridge}
Fix any prediction law \(\nu\) on \(\mathcal Q_K\) that is independent of the environment transition samples. The law may have arbitrary support. Let \(V_\nu\) and \(\widehat V_\nu\) be the optimal BF values under \((P,\nu)\) and \((\widehat P,\nu)\), respectively, and use \(L_\nu\) from Lemma~\ref{lem:fixed-law-transition-template}. If \(N_1\ge C H^2L_\nu\), then, with probability at least \(1-\delta/8\),
\begin{align}
  \|\widehat V_\nu-V_\nu\|_\infty
  \le
  C\sqrt{\frac{H^3L_\nu}{N_1}}
  +
  C\frac{H^3L_\nu}{N_1}.
  \label{eq:true-law-transition-bridge}
\end{align}
\end{lemma}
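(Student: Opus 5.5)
We bound the two one-sided deviations separately, each by evaluating a suitable rule under both transition models and invoking Lemma~\ref{lem:fixed-law-transition-template}. This mirrors Step~4 of the proof of Lemma~\ref{lem:bf-transition-perturbation-tight}. Throughout, we work on the intersection of two events from Lemma~\ref{lem:fixed-law-transition-template}, each of probability at least \(1-\delta/16\), which gives the stated \(1-\delta/8\).

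\emph{Upper deviation.} Let \(\widehat d\) be the empirical greedy rule for \(\widehat V_\nu\) at every prediction. Let \(W_\nu^{\widehat d}\) be its true value under \((P,\nu)\). Then \(\widehat V_\nu=\widehat W_\nu\) is its empirical evaluation, and optimality gives \(W_\nu^{\widehat d}\le V_\nu\). Hence \((\widehat V_\nu-V_\nu)_+\le|\widehat W_\nu-W_\nu^{\widehat d}|\). Subtracting the two fixed-rule evaluation equations, exactly as in \eqref{eq:bf-policy-eval-identity}, gives \(\widehat W_\nu-W^{\widehat d}_\nu=(I-\gamma^KM_\nu)^{-1}b_\nu\). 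This difference is therefore bounded by the left-hand side of \eqref{eq:fixed-law-template}. The stability hypothesis for a greedy empirical rule follows from Lemma~\ref{lem:bf-policy-replace-one-stability} with \(\nu\) replacing \(\widehat\mu\). This substitution is legitimate: the replace-one argument only uses that the prediction average is a fixed probability measure, and the contraction gap \(1-\gamma^K\) cancels the \(\sum_j\gamma^j\) factor. Writing \(E_1=\|\widehat W_\nu-W_\nu^{\widehat d}\|_\infty\), we obtain
\[
E_1\le C\sqrt{H^3L_\nu/N_1}+CH^3L_\nu/N_1+CH\sqrt{\gamma^KL_\nu/N_1}\,E_1 .
\]

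\emph{Lower deviation.} Let \(d^*\) be a true optimal rule under \((P,\nu)\). It is independent of the environment samples, so the second branch of Lemma~\ref{lem:fixed-law-transition-template} applies. Its stability follows from the fixed-policy contraction: replacing one sample changes the empirical evaluation operator by at most \(CH^2(1-\gamma^K)/N_1\), and hence changes the within-block targets by at most \(CH^2/N_1\). Empirical optimality gives \((V_\nu-\widehat V_\nu)_+\le|V_\nu-\widehat W^{d^*}_\nu|\). The same resolvent identity then yields the analogous inequality for \(E_2=\|V_\nu-\widehat W_\nu^{d^*}\|_\infty\).

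\emph{Conclusion.} The condition \(N_1\ge CH^2L_\nu\) makes \(CH\sqrt{\gamma^KL_\nu/N_1}\le1/2\). We can then absorb the \(E_i\) terms on the right into the left. Since \(\|\widehat V_\nu-V_\nu\|_\infty\le\max\{E_1,E_2\}\), this gives \eqref{eq:true-law-transition-bridge}.

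The main obstacle is the greedy branch. There, the rule \(\widehat d\) depends on the samples, and the template must control the variance of sample-dependent targets. This is handled inside Lemma~\ref{lem:fixed-law-transition-template} via the stable Bernstein and integrated-score lemmas. The remaining care is to check that the replace-one stability constant \(CH^2/N_1\) holds for arbitrary \(\nu\), including laws with continuous support, which the uniform-in-\(\boldsymbol\sigma\) backward induction provides.
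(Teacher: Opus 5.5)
Your proposal is correct and follows the paper's proof essentially step for step. The upper deviation evaluates the empirical greedy rule under \((P,\nu)\), the lower deviation evaluates a true optimal rule under \((\widehat P,\nu)\), each is controlled by the resolvent identity and Lemma~\ref{lem:fixed-law-transition-template}, and each is closed by absorbing the \(E\)-term via \(N_1\ge CH^2L_\nu\). Your explicit accounting of the two \(\delta/16\) events and of the replace-one stability for a general law \(\nu\) only spells out what the paper leaves implicit.
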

}

\begin{proof}
We prove the bound by comparing the value of the same rule under both models. For the upper deviation, we use an empirical optimal rule. Let \(\widehat\pi_\nu\) be a measurable pointwise-greedy rule for \(\widehat V_\nu\) under \((\widehat P,\nu)\), and let \(W_+:=V^{\widehat\pi_\nu}_{P,\nu}\) be its true value. The fixed-policy evaluation identity gives \(\widehat V_\nu-W_+=(I-\gamma^K M_{P,\nu}^{\widehat\pi_\nu})^{-1}b_+\), where \(b_+\) is the transition residual in Lemma~\ref{lem:fixed-law-transition-template} with terminal value \(\widehat V_\nu\). We apply that lemma to obtain \(\|\widehat V_\nu-W_+\|_\infty\le A_{\rm law}+c_{\rm law}\|\widehat V_\nu-W_+\|_\infty\), where \(A_{\rm law}:=C\sqrt{H^3L_\nu/N_1}+CH^3L_\nu/N_1\) and \(c_{\rm law}:=CH\sqrt{\gamma^K L_\nu/N_1}\).

We can now solve this inequality for the value difference. The condition \(N_1\ge CH^2L_\nu\) ensures \(c_{\rm law}\le1/4\), so \(\|\widehat V_\nu-W_+\|_\infty\le2A_{\rm law}\). Since \(V_\nu\ge W_+\), this also bounds \(\|(\widehat V_\nu-V_\nu)_+\|_\infty\).

For the reverse deviation, we evaluate a true optimal rule in the empirical model. Let \(\pi^*_\nu\) be an optimal pointwise rule under \((P,\nu)\), and let \(W_-:=V^{\pi^*_\nu}_{\widehat P,\nu}\). Since \(\widehat V_\nu\ge W_-\), it is enough to bound \(\|V_\nu-W_-\|_\infty\). The rule \(\pi^*_\nu\) is independent of the empirical row samples. Its empirical evaluation value has the same \(CH^2/N_1\) replace-one stability, by contraction with the rule held fixed. We therefore apply Lemma~\ref{lem:fixed-law-transition-template} and solve the resulting inequality as above to obtain \(\|W_--V_\nu\|_\infty\le2A_{\rm law}\). Combining the upper and lower deviations proves \eqref{eq:true-law-transition-bridge}.
\end{proof}

\subsubsection{Prediction Terms After Cancellation}

After cancellation, we still need to bound two prediction terms: the terminal-value error and the prediction residual. We can bound the first using a target independent of the prediction samples. The second uses the learned value, so we need the stability bound.

We first condition on all transition samples and compare \(\widehat V\) with \(\overline V\). For each \(s\), the Bellman target \(\boldsymbol\sigma\mapsto J^{\BF}_{\widehat P,0}(s,\boldsymbol\sigma;\overline V)\) is independent of the prediction samples and bounded by \(H\). We apply Bernstein's inequality and take a union bound over \(s\in\mathcal S\). We then use the \(\gamma^K\)-contraction of the BF operator to obtain
\begin{align}
  \eta
  \le
  \min\left\{
    H,\,
    C\frac{H}{1-\gamma^K}
    \left(
      \sqrt{\frac{\overline L_{\rm pred}}{N_2}}
      +
      \frac{\overline L_{\rm pred}}{N_2}
    \right)
  \right\}.
  \label{eq:bf-policy-eta-bound}
\end{align}
For the residual \(b_{\rm pred}\), the targets depend on the prediction samples through \(\widehat V\), so we use Lemma~\ref{lem:bf-policy-stable-bernstein}. Replacing one prediction changes \(\widehat V\) by at most \(CH/[N_2(1-\gamma^K)]\), and the one-block target is \(\gamma^K\)-Lipschitz in its terminal value. Thus, with probability at least \(1-\delta\),
\begin{align}
  \|b_{\rm pred}\|_\infty
  \le
  C H\sqrt{\frac{\overline L_{\rm pred}}{N_2}}
  +
  C\frac{H\overline L_{\rm pred}}{N_2(1-\gamma^K)}.
  \label{eq:bf-policy-bpred-sup}
\end{align}
The deterministic bound \(\|b_{\rm pred}\|_\infty\le H\) also holds. Cancellation leaves only the propagation of this residual to future blocks. Since \(M_{P,\mu}^{\widehat\pi^{\BF}}\) is stochastic,
\begin{align}
  \left\|
    \left((I-\gamma^K M_{P,\mu}^{\widehat\pi^{\BF}})^{-1}-I\right)b_{\rm pred}
  \right\|_\infty
  \le
  \frac{\gamma^K}{1-\gamma^K}\|b_{\rm pred}\|_\infty .
  \label{eq:bf-policy-future-bpred}
\end{align}

\subsubsection{Block-Level Variance Bounds}
When the block discount is close to one, bounding every residual by its largest possible size would lose the desired horizon dependence. We therefore bound the accumulated variance under the learned policy. We first do this for the true target and then compare it with the empirical target.

\begin{lemma}[Block-level variance bound]
\label{lem:bf-policy-macro-total-variance}
Condition on the complete training dataset and fix the learned BF rule \(\widehat\pi^{\BF}\). Let \(M:=M_{P,\mu}^{\widehat\pi^{\BF}}\), let \(F_s(\boldsymbol\sigma):=\Psi_{P,W}^{\widehat\pi^{\BF}}(s,\boldsymbol\sigma)\), and set \(v(s):=\operatorname{Var}_\mu(F_s(\boldsymbol\Sigma))\). Then
\[
  \left\|(I-\gamma^K M)^{-1}v\right\|_\infty
  \le C H^2,
  \qquad
  \left\|(I-\gamma^K M)^{-1}\sqrt v\right\|_\infty
  \le
  \frac{C H}{\sqrt{1-\gamma^K}}.
\]
\end{lemma}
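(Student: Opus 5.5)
We will reduce the statement to the discounted variance bound \eqref{eq:discounted-return-variance} at the block level. Condition on the training data and fix \(\widehat\pi^{\BF}\). Consider the Markov reward process on block-start states obtained by running \(\widehat\pi^{\BF}\) under \((P,\mu)\). Each block draws a fresh prediction \(\boldsymbol\Sigma\sim\mu\) independently of the block-start state (Assumption~\ref{assump:prediction-law}). Its one-block return is \(R_s:=\sum_{j<K}\gamma^j r(S_j,A_j)\), its block-end state is \(S_K\sim M(s,\cdot)\), and its discount is \(\beta=\gamma^K\). By the evaluation equation, \(W=T^{\widehat\pi^{\BF}}_{P,\mu}W\) is the value of this process. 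The one-block Bellman target is \(Y_s:=R_s+\gamma^K W(S_K)\), and \(F_s(\boldsymbol\sigma)=\mathbb E[Y_s\mid\boldsymbol\Sigma=\boldsymbol\sigma]\).

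\textbf{Step 1 (conditional variance).} By the law of total variance,
\[
v(s)=\operatorname{Var}_\mu(F_s)\le \operatorname{Var}(Y_s)=:q(s).
\]

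\textbf{Step 2 (return-variance identity).} Let \(G\) be the total discounted return from block-start state \(s\). Write \(G=Y_s+\gamma^K(G'-W(S_K))\), where \(G'\) is the return from the next block. The two pieces are uncorrelated: conditional on \(S_K\), the future blocks use an independent prediction, so \(G'-W(S_K)\) has conditional mean zero. Hence
\[
W_{\rm var}(s):=\operatorname{Var}(G\mid S_0=s)=q(s)+\gamma^{2K}(MW_{\rm var})(s),
\]
and \(0\le W_{\rm var}\le H^2\mathbf 1\) because the return lies in \([0,H]\). This is exactly the identity \(W=q+\beta^2MW\) quoted before \eqref{eq:discounted-return-variance}, with \(\beta=\gamma^K\).

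\textbf{Step 3 (apply the resolvent bounds).} Lemma~\ref{lem:resolvent-variance-inequality} applies with stochastic \(M\) and \(\beta=\gamma^K\). Combined with \(0\le v\le q\), it gives
\[
\|(I-\beta M)^{-1}v\|_\infty\le\|(I-\beta M)^{-1}q\|_\infty=\|(I-\beta M)^{-1}(I-\beta^2M)W_{\rm var}\|_\infty\le 2H^2.
\]
For the square-root bound we use Cauchy--Schwarz on the nonnegative kernel \((1-\beta)(I-\beta M)^{-1}\), which has unit row sums:
\[
(I-\beta M)^{-1}\sqrt v\le\sqrt{(1-\beta)^{-1}}\sqrt{(I-\beta M)^{-1}v}\le \frac{\sqrt2 H}{\sqrt{1-\gamma^K}}.
\]

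The only delicate point is Step 2, namely justifying the variance recursion at the block level with randomness from the prediction. The cross term vanishes because the next block's prediction is independent of the completed block. Everything inside one block, including the variation due to \(\boldsymbol\Sigma\), is absorbed into \(q\). Measurability of the greedy rule in \(\boldsymbol\sigma\) follows from the fixed tie-breaking rule, so \(F_s\) is a well-defined random variable. The general-state-space version of Lemma~\ref{lem:resolvent-variance-inequality} is not needed, since \(M\) acts on \(\mathcal S\) alone.
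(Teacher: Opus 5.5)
Your proposal is correct and follows the paper's proof. Both arguments use the block-level Markov reward process under \((P,\mu)\), the conditional-variance bound \(v\le q\), and then \eqref{eq:discounted-return-variance} with \(\beta=\gamma^K\); the paper simply cites that display, whereas you verify the recursion \(W_{\rm var}=q+\gamma^{2K}MW_{\rm var}\) and the Cauchy--Schwarz step explicitly. One phrasing point: for the cross term to vanish, \(G'-W(S_K)\) must have conditional mean zero given the whole completed block (including \(\boldsymbol\Sigma_0\)), not only given \(S_K\) --- this is exactly what the independence of the next block's prediction and transitions gives you.
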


\begin{proof}
Conditional on the training data, the learned rule induces a Markov reward process at block boundaries. Its return lies in \([0,H]\). The function \(F_s\) is the conditional mean of the one-block Bellman target given the prediction, so its variance is at most the full target variance. Both conclusions now follow from \eqref{eq:discounted-return-variance} with discount \(\beta=\gamma^K\).
\end{proof}

The preceding lemma bounds the variance of the true one-block target, but concentration uses the empirical target. To connect them, we bound their difference using the within-block value comparison.

\begin{lemma}[Block-level variance transfer]
\label{lem:bf-policy-macro-envelope-transfer}
Let \(\widehat F_s(\boldsymbol\sigma):=\Psi_{\widehat P,\widehat V}^{\widehat\pi^{\BF}}(s,\boldsymbol\sigma)\), \(F_s(\boldsymbol\sigma):=\Psi_{P,W}^{\widehat\pi^{\BF}}(s,\boldsymbol\sigma)\), and \(\rho(s):=\{\E_{\boldsymbol\Sigma\sim\mu}[(\widehat F_s(\boldsymbol\Sigma)-F_s(\boldsymbol\Sigma))^2]\}^{1/2}\). Then
\[
  \sqrt{\operatorname{Var}_\mu(\widehat F_s(\boldsymbol\Sigma))}
  \le
  \sqrt{v(s)}+\rho(s),
  \qquad
  \left\|(I-\gamma^K M)^{-1}\rho\right\|_\infty
  \le
  \frac{C H}{\sqrt{1-\gamma^K}}\|d\|_{\rm occ,2}
  +
  \frac{C\gamma^K}{1-\gamma^K}E_\pi .
\]
Combining the two variance-transfer bounds gives
\[
  \left\|(I-\gamma^K M)^{-1}
  \sqrt{\operatorname{Var}_\mu(\widehat F(\boldsymbol\Sigma))}\right\|_\infty
  \le
  \frac{C H}{\sqrt{1-\gamma^K}}
  +
  \frac{C H}{\sqrt{1-\gamma^K}}\|d\|_{\rm occ,2}
  +
  \frac{C\gamma^K}{1-\gamma^K}E_\pi .
\]
\end{lemma}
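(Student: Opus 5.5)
The plan is to treat the two claims separately. The first is a pointwise triangle inequality in \(L_2(\mu)\). The second unrolls the within-block difference \(\widehat F_s-F_s\) and propagates it through the block resolvent.

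For the first claim, fix \(s\). By Minkowski's inequality in \(L_2(\mu)\) applied to the centred variables,
\[
\sqrt{\operatorname{Var}_\mu(\widehat F_s)}=\|\widehat F_s-\mu\widehat F_s\|_{L_2(\mu)}\le\|F_s-\mu F_s\|_{L_2(\mu)}+\|(\widehat F_s-F_s)-\mu(\widehat F_s-F_s)\|_{L_2(\mu)}.
\]
The second norm is a centred \(L_2\) norm, so it is at most \(\|\widehat F_s-F_s\|_{L_2(\mu)}=\rho(s)\). This gives \(\sqrt{\operatorname{Var}_\mu(\widehat F_s)}\le\sqrt{v(s)}+\rho(s)\). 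The combined display then follows by applying the nonnegative resolvent \((I-\gamma^KM)^{-1}\) to this pointwise inequality. The \(\sqrt v\) term is handled by Lemma~\ref{lem:bf-policy-macro-total-variance}, and the \(\rho\) term by the second claim.

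For the second claim, fix a prediction \(\boldsymbol\sigma\). Note that \(\widehat F_s(\boldsymbol\sigma)-F_s(\boldsymbol\sigma)=e_0(s,\boldsymbol\sigma)\), where \(e_j=\widehat J_j-U_j\) is the within-block error from the simulation comparison. This holds because the learned rule is greedy, so \(\Psi^{\widehat\pi^{\BF}}_{\widehat P,\widehat V}=\widehat J_0\). Unrolling \(e_j=d_j+\gamma P_j^{\widehat\pi^{\BF}}e_{j+1}\) together with the terminal condition gives
\[
|e_0|\le\sum_{j=0}^{K-1}\gamma^j\,\mathbb E[|d_j(S_j,\boldsymbol\sigma)|\mid S_0=s]+\gamma^KE_\pi.
\]
The terminal contribution to \(\rho\) is therefore at most \(\gamma^KE_\pi\). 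Applying the resolvent, whose \(\infty\)-norm is \((1-\gamma^K)^{-1}\), yields the term \(C\gamma^KE_\pi/(1-\gamma^K)\).

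The local part is the main obstacle, because the \(L_2(\mu)\) norm of the within-block sum has to be converted into the occupancy norm \(\|d\|_{\rm occ,2}\). I would first apply Cauchy--Schwarz within the block with weights \(\gamma^j\):
\[
\Bigl(\sum_j\gamma^j\mathbb E|d_j|\Bigr)^2\le\Bigl(\sum_j\gamma^j\Bigr)\sum_j\gamma^j\mathbb E d_j^2\le H\sum_j\gamma^j\mathbb E d_j^2.
\]
Averaging over \(\boldsymbol\sigma\sim\mu\) gives \(\rho_{\rm loc}(s)^2\le H\,q(s)\), where \(q(s)\) is the one-block discounted second moment of the local residuals. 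Next I would apply Cauchy--Schwarz across blocks:
\[
(\mathcal R\sqrt{Hq})(s)\le\sqrt H\,\sqrt{(\mathcal R\mathbf 1)(s)}\,\sqrt{(\mathcal Rq)(s)}\le\sqrt{\frac{H}{1-\gamma^K}}\sqrt{(\mathcal Rq)(s)}.
\]
Assumption~\ref{assump:prediction-law} makes the predictions i.i.d. across blocks and independent of the block-start state. Hence \((\mathcal Rq)(s)\) equals the full-trajectory sum \(\sum_t\gamma^t\mathbb E_s d_t^2\), up to the factor \(\gamma^{bK}\) versus \(\gamma^t\) matching, and this sum is at most \(H\|d\|_{\rm occ,2}^2\) by definition. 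Together these give the bound \(\frac{CH}{\sqrt{1-\gamma^K}}\|d\|_{\rm occ,2}\).

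Adding the local and terminal parts, and using Minkowski to split \(\rho\) into these two contributions, proves the stated bound on \((I-\gamma^KM)^{-1}\rho\).
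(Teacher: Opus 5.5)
Your proposal is correct and follows essentially the same route as the paper. It uses Minkowski in \(L_2(\mu)\) for the first inequality, unrolls \(e_j=\widehat J_j-U_j\) under the true model to bound \(|\widehat F_s-F_s|\) by discounted local residuals plus \(\gamma^K E_\pi\), and then applies Cauchy--Schwarz within and across blocks, with fresh predictions independent of block-start states, to reach \(\|d\|_{\rm occ,2}\). The only cosmetic difference is that you split \(\rho\) into local and terminal parts before applying the resolvent, handling the terminal part with \(\|(I-\gamma^KM)^{-1}\|_\infty=(1-\gamma^K)^{-1}\), whereas the paper applies Cauchy--Schwarz to the whole \(\rho\) and splits inside the squared sum; both give the same bound, and your \(\gamma^{bK}\gamma^j=\gamma^t\) matching is exact, so no hedge is needed.
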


\begin{proof}
We first apply Minkowski's inequality in \(L_2(\mu)\) to compare the two standard deviations. This proves the first inequality. To bound the accumulated target difference, we apply Cauchy--Schwarz and obtain
\[
  [(I-\gamma^K M)^{-1}\rho](s_0)
  \le
  (1-\gamma^K)^{-1/2}
  \left(
    \sum_{b\ge0}\gamma^{Kb}\E_{s_0}[\rho(S_{bK})^2]
  \right)^{1/2}.
\]
We can bound the target difference inside this sum by local residuals and terminal-value error. Unrolling the fixed-policy recursions under the true transition model gives
\[
\left|\widehat F_{S_{bK}}(\boldsymbol\Sigma_b)-F_{S_{bK}}(\boldsymbol\Sigma_b)\right|
\le
\mathbb E_{P}^{\widehat\pi^{\BF}}\!\left[
\sum_{j=0}^{K-1}\gamma^j|d_{bK+j}|
\,\middle|\,S_{bK},\boldsymbol\Sigma_b
\right]+\gamma^K E_\pi.
\]
We apply conditional Jensen's inequality and use \((\sum_j\gamma^j|d_{bK+j}|)^2\le H\sum_j\gamma^j d_{bK+j}^2\) to obtain
\[
\sum_{b\ge0}\gamma^{Kb}\mathbb E_{s_0}\rho(S_{bK})^2
\le C H^2\|d\|_{\rm occ,2}^2
+\frac{C\gamma^{2K}}{1-\gamma^K}E_\pi^2.
\]
Substituting this estimate into the preceding resolvent inequality gives the stated bound. We then combine it with Lemma~\ref{lem:bf-policy-macro-total-variance} to obtain the final variance bound.
\end{proof}

Combining this variance comparison with stable Bernstein concentration controls the prediction residual under the same true-model resolvent used in the policy-evaluation identity.

\begin{lemma}[Short-block prediction residual]
\label{lem:bf-policy-short-pred-residual}
Suppose \(\gamma^K\ge1/2\). On the prediction stability-Bernstein event,
\begin{align}
  \left\|(I-\gamma^K M)^{-1}|b_{\rm pred}|\right\|_\infty
  \le&
  C H\sqrt{\frac{\overline L_{\rm pred}}{N_2(1-\gamma^K)}}
  +
  C H\|d\|_{\rm occ,2}
  \sqrt{\frac{\overline L_{\rm pred}}{N_2(1-\gamma^K)}}
  \nonumber\\
  &+
  C\frac{H\overline L_{\rm pred}}{N_2(1-\gamma^K)^2}
  +
  C\frac{\gamma^K}{1-\gamma^K}
  \sqrt{\frac{\overline L_{\rm pred}}{N_2}}\,E_\pi .
  \label{eq:bf-policy-short-pred-residual}
\end{align}
\end{lemma}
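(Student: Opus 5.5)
We bound each coordinate of \(|b_{\rm pred}|\) by a stable-Bernstein estimate and then push it through the resolvent \(\mathcal R:=(I-\gamma^K M)^{-1}\) using the block-level variance bounds already established.

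\textbf{Step 1 (pointwise concentration).} Fix the environment samples and condition on them. For each \(s\in\mathcal S\), the residual is \(b_{\rm pred}(s)=(\widehat\mu-\mu)\widehat F_s\), where \(\widehat F_s(\boldsymbol\sigma)=\Psi_{\widehat P,\widehat V}^{\widehat\pi^{\BF}}(s,\boldsymbol\sigma)\in[0,H]\). By greediness this equals the empirical optimal one-block target, i.e. \(\widehat J_0^{\BF}(s,\boldsymbol\sigma;\widehat V)\). Lemma~\ref{lem:bf-policy-replace-one-stability} shows that replacing one prediction sample moves this target uniformly by at most \(\kappa=C\gamma^K H/[N_2(1-\gamma^K)]\). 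We therefore apply Lemma~\ref{lem:bf-policy-stable-bernstein} with \(n=N_2\), \(B=H\), and \(t\) of order \(\log(c(|\mathcal S|+1)/\delta)\), taking a union bound over \(s\). This defines the prediction stability-Bernstein event, on which
\[
|b_{\rm pred}(s)|\le C\sqrt{\frac{\operatorname{Var}_\mu(\widehat F_s)\,\overline L_{\rm pred}}{N_2}}+C\Bigl(\frac{H}{N_2}+\frac{H\log(2N_2)}{N_2(1-\gamma^K)}\Bigr)\overline L_{\rm pred}.
\]
The \(\log(2N_2)\) factor is absorbed into \(\overline L_{\rm pred}\), and \(\gamma^K\le1\) is used.

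\textbf{Step 2 (resolvent propagation).} Since \(\mathcal R\) is nonnegative and linear, we apply it to the bound from Step 1. The linear term is multiplied by \(\|\mathcal R\mathbf 1\|_\infty=(1-\gamma^K)^{-1}\), which produces \(CH\overline L_{\rm pred}/[N_2(1-\gamma^K)^2]\). For the square-root term we invoke Lemma~\ref{lem:bf-policy-macro-envelope-transfer}:
\[
\bigl\|\mathcal R\sqrt{\operatorname{Var}_\mu(\widehat F)}\bigr\|_\infty\le \frac{CH}{\sqrt{1-\gamma^K}}\bigl(1+\|d\|_{\rm occ,2}\bigr)+\frac{C\gamma^K}{1-\gamma^K}E_\pi .
\]
Multiplying by \(\sqrt{\overline L_{\rm pred}/N_2}\) yields the first, second and fourth terms of \eqref{eq:bf-policy-short-pred-residual}. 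The hypothesis \(\gamma^K\ge1/2\) is only used to keep the constants uniform, for example in the variance lemma, where \(\gamma^K\) plays the role of \(\beta\ge1/2\).

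\textbf{Main obstacle.} The delicate point is that Lemma~\ref{lem:bf-policy-macro-envelope-transfer} is stated for a resolvent built from the learned rule \(\widehat\pi^{\BF}\), and that rule depends on the prediction samples. The concentration in Step 1, however, must hold without conditioning on the rule. This is handled by the order of operations. Step 1 is a pointwise, rule-free statement whose variance term is the data-dependent \(\operatorname{Var}_\mu(\widehat F_s)\), which is exactly what the stable Bernstein lemma permits. Step 2 is then a deterministic consequence on that event. Lemmas~\ref{lem:bf-policy-macro-total-variance} and~\ref{lem:bf-policy-macro-envelope-transfer} are valid for every training dataset, since they condition on the data, so no further union bound over rules is needed.
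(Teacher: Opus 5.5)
Your proposal is correct and follows essentially the same route as the paper. You apply Lemma~\ref{lem:bf-policy-stable-bernstein} state by state, with range \(H\) and the prediction-sample stability constant \(C\gamma^K H/[N_2(1-\gamma^K)]\) from Lemma~\ref{lem:bf-policy-replace-one-stability}. You then propagate through the nonnegative resolvent, using \(\|(I-\gamma^K M)^{-1}\|_\infty=(1-\gamma^K)^{-1}\) for the range and stability terms and Lemma~\ref{lem:bf-policy-macro-envelope-transfer} for the square-root variance term. Two points you make explicit are only implicit in the paper's proof: the greedy target equals \(\widehat J_0^{\BF}(s,\cdot;\widehat V)\), which is why the stability lemma applies even though the rule depends on the data, and the concentration is random while the resolvent step is deterministic.
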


\begin{proof}
We apply Lemma~\ref{lem:bf-policy-stable-bernstein} to \(b_{\rm pred}(s)=(\widehat\mu-\mu)\widehat F_s\) at each state. The target is bounded by \(H\), and Lemma~\ref{lem:bf-policy-replace-one-stability} gives prediction-sample stability \(C\gamma^K H/[N_2(1-\gamma^K)]\). We then sum the resulting bounds using the nonnegative resolvent. For the square-root variance term, Lemma~\ref{lem:bf-policy-macro-envelope-transfer} gives the first, second, and fourth terms in \eqref{eq:bf-policy-short-pred-residual}. For the Bernstein range term and stability remainder, we use \(\|(I-\gamma^K M)^{-1}\|_\infty=(1-\gamma^K)^{-1}\) to obtain the third term.
\end{proof}

\subsubsection{Short-Block Regime: Combining the Residual Bounds}

We now combine the residual bounds for short blocks, \(\gamma^K\ge1/2\). In this case, we need the same prediction-sample order as for Bayesian value estimation. To bound the evaluation error, we substitute the transition bound \eqref{eq:bf-policy-transition-final} and Lemma~\ref{lem:bf-policy-short-pred-residual} into the evaluation identity \eqref{eq:bf-policy-eval-identity}. Under \eqref{eq:bf-policy-sample-sizes}, we use \eqref{eq:bf-policy-d-rms} to bound the mixed prediction--transition term by \(O(\varepsilon_{\rm pol}^2)\). This gives

\begin{align}
  E_\pi
  \le&
  C\sqrt{\frac{\overline L_{\rm tr}}{N_1(1-\gamma)^3}}
  +
  C\frac{\overline L_{\rm tr}}{N_1(1-\gamma)^3}
  +
  C\frac{1}{1-\gamma}
  \sqrt{\frac{\overline L_{\rm pred}}{N_2(1-\gamma^K)}}
  \nonumber\\
  &+
  C\frac{\overline L_{\rm pred}}{N_2(1-\gamma)(1-\gamma^K)^2}
  +
  C\left[
    \sqrt{\frac{\overline L_{\rm tr}}{N_1(1-\gamma)^2}}
    +
    \frac{1}{1-\gamma^K}\sqrt{\frac{\overline L_{\rm pred}}{N_2}}
  \right]E_\pi .
  \label{eq:bf-policy-short-master}
\end{align}

{
The sample-size conditions make the coefficient of \(E_\pi\) at most \(1/2\). We move this term to the left-hand side and obtain \(E_\pi\le C\varepsilon_{\rm pol}\), as required in the main proof. The stability bounds allow the targets to depend on the training data, and taking the maximum score over rows and phases allows the learned policy to choose which row to use.
}

\section{Lower Bounds for Learning}
\label{app:learning-lower-bound-proofs}
We first prove the lower bounds using two-point testing. The remaining subsections verify the value gaps, KL bounds, and parameter ranges of the hard constructions.

\subsection{Testing Reduction and Main Proofs}
\label{subsec:bf-lower-bound-proof-ideas}
We prove each lower bound by constructing two nearby models that require different answers from an accurate learner. We then use the learner's answer to distinguish the models. To bound one type of sample, we make the other type have the same distribution under both models.

We use the standard two-point testing bound \citep{tsybakov2009introduction}. Let \(\mathbb P_0,\mathbb P_1\) be the distributions of the learner's observations and output under the two models. If a test distinguishes them with error at most \(1/3\) under each law, their total variation distance must be at least \(1/3\). By Pinsker's inequality, this requires \(\operatorname{KL}(\mathbb P_0\Vert\mathbb P_1)\ge2/9\). We write \(\operatorname{kl}(u\Vert v)\) for the KL divergence between Bernoulli laws with parameters \(u,v\).

\subsubsection{Proof of Theorem~\ref{thm:bf-bola-policy-lower-bound}}
\label{app:policy-lower-bound-proof}
\emph{Environment samples.} We first construct models in which only environment samples help identify the optimal action. Let \(m:=|\mathcal A\setminus\mathcal A^-|\). For \(m\ge2\), we place the classical discounted-MDP hard family in the uncovered actions. We make every covered action known and dominated, with deterministic predictions. A BF policy learner would then solve the classical policy-learning problem, which requires \(\Omega(|\mathcal S|m/((1-\gamma)^3\varepsilon_{\rm pol}^2))\) samples \citep{azar2012sample,sidford2018near}.

When \(m=1\), we use a known covered action as the alternative to the unknown action. We construct \(n\) gadgets with \(3n+2\) states and a known deterministic prediction, as specified in Appendix~\ref{proof_bf_bola_policy_lower_bound}. In gadget \(i\), a decision state \(d_i\) offers a choice between a risky state \(x_i\) and a reference state of known value. We set the reference value to the midpoint of the two possible optimal values at the risky state. The better branch therefore depends on the unknown transition row.

We choose the success probabilities so that taking the wrong branch costs more than the target policy accuracy. The only unknown row, at \(x_i\), has success probability \(p+\theta_i\Delta\), where \(p=1-\gamma\), \(\Delta=162\varepsilon_{\rm pol}(1-\gamma)^2\), and \(\theta_i\in\{-1,+1\}\). These choices give an action gap of at least \(8\varepsilon_{\rm pol}\) at \(d_i\). An \(\varepsilon_{\rm pol}\)-optimal returned policy must therefore choose the risky branch with probability at least \(7/8\) if \(\theta_i=+1\), and at most \(1/8\) if \(\theta_i=-1\).

We next show that the learner needs samples from every gadget. We compare each single-coordinate change with the same baseline model. Let \(\theta^-\) have all coordinates equal to \(-1\), and let \(\theta^{(i)}\) flip only coordinate \(i\). We denote the training-transcript law, including the returned policy, by \(\mathbb P_\theta\). We can distinguish the two models by checking whether the returned policy's risky-branch probability exceeds \(1/2\). The policy guarantee makes this test's error at most \(1/3\). If \(N_i\) counts queries to the unknown row in gadget \(i\), the adaptive KL chain rule gives
\[
\frac{2}{9}
\le \operatorname{KL}(\mathbb P_{\theta^-}\Vert\mathbb P_{\theta^{(i)}})
=\mathbb E_{\theta^-}[N_i]\operatorname{kl}(p-\Delta\Vert p+\Delta)
\le C\mathbb E_{\theta^-}[N_i]\varepsilon_{\rm pol}^2(1-\gamma)^3.
\]
We sum these query requirements under the same baseline law to obtain \(D_1\ge c n/((1-\gamma)^3\varepsilon_{\rm pol}^2)\). Since \(n=\Theta(|\mathcal S|)\), this proves the remaining \(m=1\) case.

\emph{Prediction samples.} The two models have the same transition probabilities and rewards, but their predictions reveal the next state with different probabilities. Only prediction samples can therefore distinguish them. We give the physical kernel and rewards to the learner and cover all actions. For a risky transition with success indicator \(Z\sim\operatorname{Bernoulli}(q)\), we independently reveal \(Z\) with probability \(r\) and otherwise report its prior mean \(q\). The reported success probability \(\Sigma\) takes values \(1,0,q\) with probabilities \(rq,r(1-q),1-r\). Thus \(\mathbb P(Z=1\mid\Sigma)=\Sigma\), while the physical success probability remains \(q\). We use this prediction rule only for the risky row at \(x\) in the first phase. All other prediction components equal their base rows. Since only the revelation probability changes between the models, the KL divergence per prediction sample is exactly \(\operatorname{kl}(r_0\Vert r_1)\).

We need the value at \(x\) to change enough when \(r\) changes. Write this value as \(v(r)\). For \(\gamma^K\le3/4\), a single choice between a risky action and a safe action suffices. For \(\gamma^K>3/4\), we instead allow the decision maker to wait one block and return to \(x\). Returning gives another chance to use a prediction, so changes in \(r\) affect rewards over many blocks. Around the chosen parameters, \(v'(r)\) has order \((1-\gamma)^{-1}\) in the first construction and \(((1-\gamma)(1-\gamma^K))^{-1}\) in the second. Appendix~\ref{proof_bf_bola_policy_lower_bound} gives the transitions and verifies these derivatives.

We then make the learner choose between this uncertain future value and a known reference. We add a decision state \(d\) with two zero-reward paths of length \(K\), one to \(x\) and one to a reference state of value \(w=(v(r_0)+v(r_1))/2\). Their optimal values are \(\gamma^K v(r_i)\) and \(\gamma^K w\), so different branches are optimal in the two models. The value difference is discounted by one block before it affects the initial choice.

We choose the parameters so that the discounted decision gap is large enough to test the models, but the prediction laws remain close. The parameter calculations in that appendix give
\begin{equation}
\gamma^K\bigl(v(r_1)-v(r_0)\bigr)\ge32\varepsilon_{\rm pol},
\quad \min_{i=0,1}(1-r_i)\ge\frac14,
\quad
\operatorname{kl}(r_0\Vert r_1)
\le C\frac{(1-\gamma^K)(1-\gamma)^2\varepsilon_{\rm pol}^2}{\gamma^{2K}}.
\label{eq:bf-policy-lower-separation}
\end{equation}
To test the models using the returned policy, we evaluate it at the same prediction under both laws. On the common prediction \(\Sigma=q\), let \(a(\pi)\) be the probability that the policy chooses the branch to \(x\). This prediction occurs with probability at least \(1/4\) in both models, and a wrong branch loses at least \(16\varepsilon_{\rm pol}\). Thus \(\varepsilon_{\rm pol}\)-optimality requires \(a(\pi)\le1/4\) under \(r_0\), and \(a(\pi)\ge3/4\) under \(r_1\). We can therefore distinguish the training laws by comparing \(a(\pi)\) with \(1/2\), with error at most \(1/3\).

We now translate this test into a sample lower bound. Environment samples have the same law in both models, so only prediction samples contribute to the transcript KL divergence. Let \(\mathbb P_i\) denote the training-transcript law under \(r_i\). The testing bound gives
\[
\frac{2}{9}
\le\operatorname{KL}(\mathbb P_0\Vert\mathbb P_1)
\le D_2\operatorname{kl}(r_0\Vert r_1)
\le C D_2\frac{(1-\gamma^K)(1-\gamma)^2\varepsilon_{\rm pol}^2}{\gamma^{2K}}.
\]
Rearranging this inequality proves the prediction-sample bound. The construction uses \(O(K)\) states and two covered actions. The chosen parameters are feasible throughout the accuracy range in the theorem.
\qed

\subsubsection{Proof of Theorem~\ref{thm:bayesian-value-estimation-lower-bounds}}
\label{app:value-lower-bound-proof}
\emph{Environment samples.} We reduce classical value estimation to Bayesian value estimation without changing the optimal value or giving the learner additional information. If \(\mathcal A=\mathcal A^-\), the bound is zero. Otherwise, we take a classical hard MDP \(M_0\) with \(m=|\mathcal A\setminus\mathcal A^-|\) unknown actions. We make every covered action lead to a known zero-reward absorbing state, with a deterministic known prediction. Since rewards are nonnegative, these extra actions cannot improve the value. An optimal stationary policy of \(M_0\) is also feasible under BF for every \(K\), so \(V^*(s)=V_0^*(s)\) on the original states. A Bayesian value estimator would therefore estimate \(V_0^*\) from the same unknown-row samples. The classical state-value lower bound gives the claimed \(D_1\) rate \citep{azar2012sample,gheshlaghi2013minimax,wang2021quantum}. Appendix~\ref{proof_predictability_dimension_lower_bound} verifies this construction, including the one-action and bounded-discount cases.

\emph{Prediction samples.} For value estimation, we estimate \(v(r)\) directly. We use the same state \(x\) and prediction rule, but remove the preceding decision state \(d\). There is then no extra \(\gamma^K\) discount before reaching \(x\). The parameter choices in Appendix~\ref{proof_prediction_oracle_lower_bound} are feasible throughout the stated accuracy range and satisfy
\begin{equation}
v(r_1)-v(r_0)>2\varepsilon_{\rm est},
\qquad
\operatorname{kl}(r_0\Vert r_1)
\le C(1-\gamma^K)(1-\gamma)^2\varepsilon_{\rm est}^2.
\label{eq:bf-value-lower-separation}
\end{equation}
The two accuracy intervals are disjoint. We can therefore test the models by comparing \(\widetilde V(x)\) with the midpoint of \(v(r_0),v(r_1)\). An accurate estimator makes this test's error at most \(1/3\). Since the environment laws are identical, the testing bound gives \(2/9\le D_2\operatorname{kl}(r_0\Vert r_1)\). Substituting \eqref{eq:bf-value-lower-separation} gives the claimed \(D_2\) rate.
\qed

\subsection{Supporting Constructions for Theorem~\ref{thm:bf-bola-policy-lower-bound}}
\label{proof_bf_bola_policy_lower_bound}
{
Write \(H=(1-\gamma)^{-1}\), \(\beta=\gamma^K\), and \(h=1-\beta\).

\paragraph{One uncovered action.}
We construct two models in which the learner must decide whether a single uncovered action is better than a known reference. We use a recurrent state so that a small change in transition probability causes a large change in value. We then place the reference value between the two possible values, so different actions are optimal in the two models.

Let \(\mathcal A=\{a_{\rm un},a_{\rm ref}\}\), with \(a_{\rm un}\notin\mathcal A^-\) and \(a_{\rm ref}\in\mathcal A^-\). Set \(p:=1-\gamma\) and \(\Delta:=162\varepsilon_{\rm pol}(1-\gamma)^2\). Choose \(c_0\) small enough that \(\Delta\le p/4\). A zero-reward action that succeeds with probability \(u\) on each attempt, reaching a unit-reward absorbing state on success and otherwise returning to the active state, has value
\[
  v(u)=\frac{\gamma u}{(1-\gamma)(1-\gamma+\gamma u)},
  \qquad
  v'(u)=\frac{\gamma}{(1-\gamma+\gamma u)^2}.
\]
We check that these parameters give a large action gap and a small KL divergence per sample. Since \(p\pm\Delta\in[3(1-\gamma)/4,5(1-\gamma)/4]\), the mean-value theorem gives \(v(p+\Delta)-v(p-\Delta)\ge64\gamma\varepsilon_{\rm pol}\). The action gap \(\gamma[v(p+\Delta)-v(p-\Delta)]/2\) is therefore at least \(8\varepsilon_{\rm pol}\). We also have \(\operatorname{kl}(p-\Delta\Vert p+\Delta)\le C\Delta^2/p\le C\varepsilon_{\rm pol}^2(1-\gamma)^3\), as needed for the testing bound.

To make the sample requirement grow with the number of states, we repeat this choice in \(n\) gadgets. Each gadget has its own unknown transition row. Gadget \(i\) contains states \(d_i,x_i,y_i\), together with two shared absorbing states \(g\) and \(b\). At \(x_i\), action \(a_{\rm un}\) reaches \(g\) with probability \(p+\theta_i\Delta\) and returns to \(x_i\) otherwise, where \(\theta_i\in\{-1,+1\}\). Action \(a_{\rm ref}\) moves to \(b\). State \(g\) has reward one forever and \(b\) has reward zero forever.

We use the decision at \(d_i\) to compare the uncertain value with the known midpoint. Action \(a_{\rm un}\) moves deterministically to \(x_i\), whereas \(a_{\rm ref}\) moves deterministically to \(y_i\). We set \(w:=(v(p+\Delta)+v(p-\Delta))/2\) and make \(y_i\) absorbing with per-period reward \((1-\gamma)w\). All unmentioned rewards are zero. The prediction is the known deterministic table of covered rows, so only the \(n\) Bernoulli rows \((x_i,a_{\rm un})\) are unknown. If more covered actions are needed, we let them duplicate the dominated action to \(b\).

We now show why an accurate policy must distinguish the two values in each gadget. Let \(q_i(\pi)\) be the probability that \(\pi\) chooses \(a_{\rm un}\) at \(d_i\) at the beginning of a block. A wrong branch loses at least the midpoint gap, regardless of later decisions. We therefore have
\[
  V^*_{\theta}(d_i)-V^\pi_{\theta}(d_i)
  \ge
  \frac{\gamma}{2}\bigl(v(p+\Delta)-v(p-\Delta)\bigr)
  \begin{cases}
  1-q_i(\pi), & \theta_i=+1,\\
  q_i(\pi), & \theta_i=-1.
  \end{cases}
\]
Thus near-optimality forces the learner to identify which side of the reference value each unknown row lies on, giving the policy separation used in the main proof. The construction has \(3n+2\) states and one uncovered action.

\paragraph{Prediction-channel construction.}
For the prediction-sample bound, we vary how often the signal reveals a transition outcome. We keep the physical kernel unchanged, so environment observations cannot distinguish the two prediction laws.

We use the calibrated prediction rule from the main proof only for the risky row at \(x\) in the first phase of each block. All other prediction rows equal their known base rows. Conditional on the prediction, we draw potential next states independently across rows and phases. We repeat the construction independently across blocks. It satisfies Assumption~\ref{assump:prediction-law} with the same physical kernel for every \(r\). In particular, independence across phases ensures base-MDP compatibility for every policy that ignores predictions. Since only one prediction component varies, the KL divergence between the two prediction laws is
\(rq\log(r/r')+r(1-q)\log(r/r')+(1-r)\log((1-r)/(1-r'))=\operatorname{kl}(r\Vert r')\).

We use two constructions depending on the block discount. When \(\beta\le3/4\), a single prediction-dependent decision gives the required value gap. The risky action at \(x\) reaches a unit-reward absorbing state with probability \(q=1/2\), and a zero-reward absorbing state otherwise. The safe action reaches an absorbing state with per-period reward \(q\). Both actions have zero immediate reward. The optimal rule chooses the risky action on revealed success and the safe action otherwise. Its value can be written as
\[
v(r)=\gamma H[q+rq(1-q)],
\qquad v'(r)=\gamma Hq(1-q).
\]
For \(\beta>3/4\), we need the value to be more sensitive to the prediction law. We allow the decision maker to wait and use a new prediction at the next block boundary. We take \(q=1/4\). The risky action has the same two absorbing outcomes, while waiting returns to \(x\) after \(K\) zero-reward steps, using \(K-1\) intermediate states. Both actions at an intermediate state follow the same deterministic path. Around \(r_\star=2h/\beta\), it is optimal to wait on erased predictions and revealed failures. Under this rule, the value and its derivative are
\[
v(r)=\frac{\gamma H r q}{h+\beta r q},
\qquad v'(r)=\frac{\gamma H q h}{(h+\beta r q)^2},
\]
and the condition for waiting on an uninformative prediction \(\beta v(r)\ge\gamma Hq\) holds whenever \(r\ge h/[\beta(1-q)]\). Revealed success favors the risky action because \(\beta v(r)<\gamma H\).

\paragraph{Policy-separation parameters.}
We now choose two nearby revelation probabilities so that their value gap remains large enough after a one-block delay. For \(\beta\le3/4\), we take \(r_\star=1/2\) and \(\Delta=128\varepsilon_{\rm pol}/(\beta H)\). For \(\beta>3/4\), we take \(r_\star=2h/\beta\) and \(\Delta=1024\varepsilon_{\rm pol}h/(\beta H)\). We set \(r_0=r_\star-\Delta\), \(r_1=r_\star+\Delta\). The accuracy condition in Theorem~\ref{thm:bf-bola-policy-lower-bound}, with sufficiently small \(c_0\), ensures \(\Delta\le r_\star/8\) and \(1-r_i\ge1/4\). In the recurrent case, these choices also ensure that waiting remains optimal on erased predictions.

We check the two requirements for testing. First, the displayed derivatives give \(\beta[v(r_1)-v(r_0)]\ge32\varepsilon_{\rm pol}\), so the discounted gap is large enough. Second, we use the Bernoulli bound \(\operatorname{kl}(r_0\Vert r_1)\le C\Delta^2/[r_\star(1-r_\star)]\). Since \(r_\star(1-r_\star)\asymp h\) in the recurrent case and \(h\ge1/4\) in the other case, this gives the KL bound in \eqref{eq:bf-policy-lower-separation}.

Finally, we turn this value gap into a policy choice one block earlier. From the predecessor state \(d\), we use two deterministic paths with \(K-1\) intermediate states each. Both actions at intermediate states are identical. The reference state is absorbing with per-period reward \((1-\gamma)w\), where \(w=(v(r_0)+v(r_1))/2\). All path rewards are zero. These paths give exactly the two branch values used in the main proof. The construction has \(O(K)\) states, two covered actions, and rewards in \([0,1]\).
}

\subsection{Prediction-Sample Parameter Checks for Theorem~\ref{thm:bayesian-value-estimation-lower-bounds}}
\label{proof_prediction_oracle_lower_bound}
{
For value estimation, we compare the values at the base state directly. We do not need an earlier policy choice, so there is no additional block discount.

We use the base states in Appendix~\ref{proof_bf_bola_policy_lower_bound} without the predecessor \(d\), retaining \(H=(1-\gamma)^{-1}\), \(\beta=\gamma^K\), and \(h=1-\beta\). For \(\beta\le3/4\), we take \(r_\star=1/2\) and \(\Delta=16\varepsilon_{\rm est}/H\). For \(\beta>3/4\), we take \(r_\star=2h/\beta\) and \(\Delta=64\varepsilon_{\rm est}h/H\). We set \(r_0=r_\star-\Delta\) and \(r_1=r_\star+\Delta\). Since \(\varepsilon_{\rm est}\le c_0H\), we can choose a small universal \(c_0\) so that \(\Delta\le r_\star/8\). The resulting probabilities are feasible, and the same actions remain optimal as in the preceding construction.

The derivatives in Appendix~\ref{proof_bf_bola_policy_lower_bound} give a value gap greater than \(2\varepsilon_{\rm est}\). The bound \(\operatorname{kl}(r_0\Vert r_1)\le C\Delta^2/[r_\star(1-r_\star)]\) gives the KL estimate in \eqref{eq:bf-value-lower-separation}: when \(\beta\le3/4\), \(r_\star=1/2\) and \(h\ge1/4\), and when \(\beta>3/4\), \(r_\star(1-r_\star)\asymp h\). These checks hold throughout the accuracy range of Theorem~\ref{thm:bayesian-value-estimation-lower-bounds}.
}

\subsection{Classical Embedding for Theorem~\ref{thm:bayesian-value-estimation-lower-bounds}}
\label{proof_predictability_dimension_lower_bound}
{
We embed a classical hard MDP in the uncovered rows and make predictions unhelpful for learning those rows. The reduction must preserve both the optimal value and the information supplied by the sampling interfaces.

Let \(\mathcal A_{\mathrm{un}}:=\mathcal A\setminus\mathcal A^-\) and \(m:=|\mathcal A_{\mathrm{un}}|\). If \(m=0\), the claimed bound is zero. Otherwise, use the classical generative-model lower bound for estimating the optimal state-value function with deterministic rewards given to the learner \citep{azar2012sample,gheshlaghi2013minimax}. Its state-value formulation for \(m\ge2\) and \(\gamma\ge0.9\) is given by \citet{wang2021quantum}. The one-action case is the Markov reward process specialization of the Bernoulli hard family, and for \(\gamma\in[1/2,0.9)\) the effective horizon is bounded, so the standard Bernoulli testing lower bound has the same order. Thus a hard family \(M_0=(\mathcal S_0,\mathcal A_{\mathrm{un}},P_0,r_0,\gamma)\) requires \(\Omega(|\mathcal S_0|m/((1-\gamma)^3\varepsilon_{\rm est}^2))\) transition samples for uniform \(\varepsilon_{\rm est}\)-accurate estimation of \(V_0^*\) with probability at least \(2/3\), in the theorem's accuracy range.

We place \(M_0\) in our prediction model by keeping \(P_0,r_0\) on uncovered actions and adding one known zero-reward absorbing state \(s_\circ\). Every covered action has reward zero and transitions deterministically to \(s_\circ\). At every prediction phase, its predicted row is the known point mass \(\sigma_j(\cdot\mid s,a)=\delta_{s_\circ}\). Thus the prediction law is a known point mass satisfying Assumption~\ref{assump:prediction-law}. We can answer every prediction query without using any environment samples.

We first check that this construction preserves the optimal value. The covered actions cannot improve on the nonnegative value in \(M_0\). At every phase, we can use \(\pi_j(s,\boldsymbol\sigma)=\pi_0^*(s)\) to implement its optimal stationary policy. Conversely, known deterministic predictions cannot improve on the classical optimum, by stationary-policy optimality \citep{puterman2014markov}. This proves the value identity used in the main proof.

We also check that the sampling interfaces give no additional information about the hard family. We answer uncovered-row queries using the generative model of \(M_0\). Covered rows, rewards, and predictions are already known. Adding one absorbing state changes only the universal constant in the sample bound, so the classical lower bound still applies.
}

\section{Reference Protocols}
\label{app:reference-protocol-baseline}
This appendix gives the Bellman equations and computational results for the fixed-horizon and receding-horizon reference protocols.

\subsection{Fixed-Horizon and Receding-Horizon Reference Results}
\label{app:fh-reference-results}

This appendix presents the two protocols that serve as reference points for the BF-focused main text. Fixed-horizon planning is the low-dimensional baseline: it uses the same blockwise prediction stream as BF, but commits to an open-loop sequence inside each block. Receding-horizon planning is the information-rich boundary case: it refreshes predictions every period, but exact dynamic programming generally lives on an augmented information state.

\subsubsection{Fixed-Horizon Bayesian Bellman Equation}
\label{app:fh-bellman-reference}
{
For \(\pi\in\Pi^{\FH}\), define its Bayesian value before the initial prediction is observed by
\begin{align}
  V^{\Bayes,\FH,\pi}_{K,\mathcal A^-,\mu}(s)
  &:=
  \mathbb E^{\pi,\mu}\!\left[
    \sum_{t=0}^{\infty}\gamma^t r(S_t,A_t)
    \,\middle|\,S_0=s
  \right].
  \label{def_Bayesian_bellman}
\end{align}
Let \(V^{\Bayes,\FH,*}_{K,\mathcal A^-,\mu}(s):=\sup_{\pi\in\Pi^{\FH}}V^{\Bayes,\FH,\pi}_{K,\mathcal A^-,\mu}(s)\). The open-loop block value \(J_0^{\FH}\) is defined in \eqref{eq:fh-block-value}, with transition probabilities generated by the conditional transition rows \(\bar P_j\).

\begin{theorem}[Fixed-Horizon Bayesian Bellman Equation]
\label{thm:fh-bayesian-bellman}
The optimal fixed-horizon Bayesian value \(V^{\Bayes,\FH,*}_{K,\mathcal A^-,\mu}\) is the unique solution to
\begin{align}
  V^{\Bayes,\FH,*}_{K,\mathcal A^-,\mu}(s)
  &=
  \mathbb E_{\boldsymbol\Sigma\sim\mu}
  \left[
    J_0^{\FH}
    \left(s,\boldsymbol\Sigma;
    V^{\Bayes,\FH,*}_{K,\mathcal A^-,\mu}\right)
  \right],
  \qquad s\in\mathcal S.
  \label{eq:fh-bellman}
\end{align}
\end{theorem}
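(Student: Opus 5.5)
The plan follows the proof of Theorem~\ref{thm:bf-bellman} in Appendix~\ref{app:bf-bellman-proof}, with the within-block dynamic program replaced by the open-loop maximization in \eqref{eq:fh-block-value}. Define \((\mathcal T^{\FH}U)(s):=\mathbb E_{\boldsymbol\Sigma\sim\mu}[J_0^{\FH}(s,\boldsymbol\Sigma;U)]\). Existence and uniqueness come first. For a fixed action sequence \(\boldsymbol a\) and prediction \(\boldsymbol\sigma\), the bracketed objective in \eqref{eq:fh-block-value} is affine in the terminal value \(U\), with coefficient vector \(\gamma^K P(\cdot\mid s,\boldsymbol a,\boldsymbol\sigma)\), a nonnegative vector of total mass \(\gamma^K\). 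It is therefore monotone and \(\gamma^K\)-Lipschitz in \(\|\cdot\|_\infty\). A maximum over the finite set \(\mathcal A^K\) and an average over \(\mu\) both preserve these properties. Hence \(\mathcal T^{\FH}\) is a monotone \(\gamma^K\)-contraction, and Banach's theorem gives a unique fixed point \(U\). The integrand is measurable in \(\boldsymbol\sigma\) because it is a finite maximum of functions that are polynomial in the prediction entries.

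The next step identifies \(U\) with \(V^{\Bayes,\FH,*}_{K,\mathcal A^-,\mu}\). For a stationary policy \(\pi\in\Pi^{\FH}\), define the block-evaluation operator
\[
(\mathcal T^{\pi}V)(s):=\mathbb E^{\pi,\mu}\!\left[\sum_{j=0}^{K-1}\gamma^j r(S_j,A_j)+\gamma^K V(S_K)\,\middle|\,S_0=s\right].
\]
By Assumption~\ref{assump:prediction-law}, the next block's prediction is drawn independently of the completed block, including \(S_K\). Together with the stationarity of \(\pi\), this makes the continuation value from time \(K\), conditional on \(S_K\), equal to \(V^\pi(S_K)\). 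Therefore \(V^\pi=\mathcal T^\pi V^\pi\).

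For each \(\boldsymbol\sigma\), \(\pi(\cdot\mid s,\boldsymbol\sigma)\) is a mixture over open-loop sequences. Under the fixed within-block rows \(\bar P_j\), each sequence yields exactly the bracketed objective in \eqref{eq:fh-block-value}. It follows that \(\mathcal T^\pi U\le\mathcal T^{\FH}U=U\). Because \(\mathcal T^\pi\) is monotone and a contraction, iterating it gives \(V^\pi\le U\). In the other direction, choose for each \((s,\boldsymbol\sigma)\) a maximizing sequence in \(\mathcal A^K\) using a fixed lexicographic tie-break. This choice is measurable, and it gives a policy \(\pi^*\) with \(\mathcal T^{\pi^*}U=U\). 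Uniqueness of the fixed point of \(\mathcal T^{\pi^*}\) then gives \(V^{\pi^*}=U\).

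The main obstacle is the supremum over policies that are not stationary, since \(\Pi^{\FH}\) could be read as allowing dependence on past blocks. If so, I would handle it by induction on a finite number \(n\) of blocks. Conditional on the history up to a block boundary, the next prediction is fresh, and the remaining problem has the same structure. Induction then bounds the \(n\)-block return plus the tail by \((\mathcal T^{\FH})^n\) applied to \(H\mathbf 1\), where \(H=(1-\gamma)^{-1}\). Letting \(n\to\infty\), this converges to \(U\), so \(\sup_\pi V^\pi\le U\), and \(\pi^*\) attains it.
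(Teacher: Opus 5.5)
Your proposal is correct and follows the paper's proof essentially step for step. The paper also shows that $\mathcal T^{\FH}$ is a monotone $\gamma^K$-contraction, since each sequence's block return is $\gamma^K$-Lipschitz in the terminal value and the finite maximum and the average over $\mu$ preserve this. It then reuses the block-evaluation and greedy-policy argument of Theorem~\ref{thm:bf-bellman} with an open-loop sequence as the block action. Your final paragraph on history-dependent policies is unnecessary, because $\Pi^{\FH}$ is defined as stationary maps $\mathcal S\times\mathcal Q_K\to\Delta(\mathcal A^K)$, but it does no harm.
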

}

\subsubsection{Receding Horizon as an Augmented MDP}
\label{app:rh-reference-results}
\label{Sec:receding horizon planning}

Receding-horizon planning uses a rolling prediction process rather than the blockwise process in Section~\ref{sec:prediction-model}. At every period, the decision maker observes \((s_t,\boldsymbol\sigma_t)\), where \(\boldsymbol\sigma_t\) is the current \(K\)-step rolling prediction. If the process \((s_t,\boldsymbol\sigma_t)\) is Markov under a kernel \(\widetilde P(\cdot\mid s,\boldsymbol\sigma,a)\), the RH problem is exactly the discounted MDP
\[
  \widetilde M=(\mathcal S\times\mathcal Q_K,\mathcal A,\widetilde P,\widetilde r,\gamma),
  \qquad
  \widetilde r((s,\boldsymbol\sigma),a)=r(s,a).
\]
Thus every Markov RH policy is a Markov policy on \(\mathcal S\times\mathcal Q_K\), and the RH optimal value is the optimal value of this augmented MDP. This representation is exact, but direct tabular storage is at least \(\Omega(|\mathcal S||\mathcal Q_K|)\). When \(|\mathcal Q_K|\) grows exponentially in \(K\), exact tabular planning suffers from the curse of dimensionality \citep{luovercoming}. RH serves as a benchmark with predictions refreshed every period.

\subsubsection{Exact-Tabular Complexity of the Reference Protocols}
\label{app:reference-protocol-complexity}

Table~\ref{tab:complexity_comparison} follows from three exact-tabular facts: worst-case computational hardness for FH open-loop planning, direct dynamic-programming cost for BF with a fixed observed prediction, and representation size for RH with a rolling prediction. Recent work establishes computational hardness for planning with exact multi-step transition lookahead under a related information model \citep{pla2026hardness}. The result below gives the corresponding hardness for the fixed-horizon open-loop reference protocol used in this paper.

For FH, consider the following decision problem. An instance consists of a finite MDP, a prediction horizon \(K\), a deterministic prediction, a discount factor \(\gamma\in(0,1)\), and a threshold \(\rho\). The question is whether there exists an open-loop sequence \(\boldsymbol a\in\mathcal A^K\) whose expected discounted reward over the block is at least \(\rho\).

\begin{theorem}[NP-completeness and ETH hardness of FH open-loop planning]
\label{thm:fh-np-hardness}
The FH open-loop planning decision problem is NP-complete, even with binary actions and deterministic predictions. Moreover, under the Exponential Time Hypothesis, there is no exact algorithm that solves all such instances in time \(2^{o(K)}\operatorname{poly}(|\mathcal S|,m)\), where \(m\) is the number of clauses in the 3-SAT instance used in the reduction.
\end{theorem}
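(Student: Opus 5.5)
Membership in NP and NP-hardness are treated separately; the hardness part is a reduction from 3-SAT that also yields the ETH statement. For membership, the certificate is the open-loop sequence \(\boldsymbol a\in\mathcal A^K\), which has size \(O(K\log|\mathcal A|)\). Given \(\boldsymbol a\), the predicted rows \(\bar P_j\) determine the state distribution at each phase. We propagate this distribution forward: \(d_{t+1}=d_t\bar P_t(\cdot\mid\cdot,a_t,\boldsymbol\sigma)\). This takes \(O(K|\mathcal S|^2)\) rational operations, after which we compare \(\sum_t\gamma^t d_t r(\cdot,a_t)\) with \(\rho\). To keep the verification polynomial we encode \(\gamma\), the rewards and the probabilities as rationals. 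With deterministic predictions the state is in fact a single trajectory, so the bit lengths stay polynomial.

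For hardness, take a 3-SAT formula with variables \(x_1,\dots,x_n\) and clauses \(C_1,\dots,C_m\). Set \(K=n\), one phase per variable, and let the binary actions encode \(x_t\in\{0,1\}\). The difficulty is that, with deterministic predictions, an open-loop sequence from a fixed start state follows a single path, so it cannot evaluate several clauses at once. The remedy is randomness in the initial state. A deterministic prediction fixes the transitions at phases \(0,\dots,K-1\), but the block-start state, or equivalently a phase-\(0\) transition out of a dispatcher state, can be uniformly random over clause gadgets. Concretely, I would make the start row a uniform distribution over \(m\) clause-tracker states \(c_i\); this row is a base row outside \(\mathcal A^-\), or the whole block is shifted by one phase. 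From tracker \(c_i\) at phase \(t\), action \(a\) moves to an absorbing "satisfied" state if the literal of \(x_t\) with value \(a\) appears in \(C_i\), and otherwise stays on the tracker's phase-indexed chain. The satisfied state pays reward \(1\), every other state pays \(0\), and the rewards are placed only at the final phase with appropriate discount scaling, so that the total is \(\gamma^{K}\cdot\#\{\text{satisfied clauses}\}/m\). Set \(\rho=\gamma^{K}\). The open-loop sequence then reaches \(\rho\) exactly when it is a satisfying assignment. The number of states is \(O(mK)\) and the construction is clearly polynomial.

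The main obstacle is making this construction consistent with the protocol: the prediction must be deterministic, and Assumption~\ref{assump:prediction-law} (consistency and base-MDP compatibility) must still hold. I would first try covering only the phase-\(\ge1\) deterministic rows with predictions equal to their base rows, and leaving the dispatcher row uncovered and stochastic. If the decision problem requires all rows to be deterministic, the fallback is to take the expectation over a randomized initial state. The expected block reward of an open-loop sequence is linear in the initial distribution, so the threshold question can be posed for a uniform initial distribution.

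For ETH, the reduction has \(K=n\) and size \(\operatorname{poly}(n,m)\). By sparsification we may take \(m=O(n)\). A \(2^{o(K)}\operatorname{poly}(|\mathcal S|,m)\) algorithm would therefore decide 3-SAT in time \(2^{o(n)}\), which contradicts ETH.
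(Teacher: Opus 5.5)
Your proposal is correct and follows essentially the same route as the paper: a 3-SAT reduction in which the (ignored) phase-$0$ action at the start state triggers a uniform dispatch to clause trackers, the remaining $n$ binary actions encode the assignment (so $K=n+1$), the block reward is $\gamma^{n}$ times the fraction of satisfied clauses with threshold $\rho=\gamma^{n}$, and the ETH claim follows from $K=n+O(1)$ with $O(nm)$ states. Two details need tightening.

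\begin{itemize}
\item[(i)] The satisfied status must be carried in phase-indexed states, with the reward paid by the action at the last stage. The paper does this with states $q_{i,j,b}$ that carry a satisfaction bit $b$. A single absorbing satisfied state that pays reward $1$ would make the return depend on when each clause is first satisfied, and then the threshold would not characterize satisfiability.
\item[(ii)] The protocol worry disappears. Coverage in the model is by action rather than by row, so your row-wise covering is not available. However, a fixed uninformative prediction whose rows equal the base rows (or simply $\mathcal A^-=\varnothing$), as the paper uses, satisfies Assumption~\ref{assump:prediction-law} trivially. The random-initial-state fallback is therefore unnecessary.
\end{itemize}
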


\begin{proof}
We first check membership in NP. Given an open-loop sequence, we can compute its expected discounted reward by forward propagation over the finite state space. This takes time polynomial in the explicit input size.

To prove hardness, we reduce from 3-SAT. We make each open-loop action sequence represent a truth assignment and make its reward depend on the fraction of clauses it satisfies. Let \(\varphi\) have variables \(x_1,\ldots,x_n\) and clauses \(C_1,\ldots,C_m\). We construct an MDP with binary action set \(\{0,1\}\), discount \(\gamma=1/2\), and horizon \(K=n+1\). The initial action is ignored. From \(s_0\), the process samples a clause index \(J\) uniformly from \(\{1,\ldots,m\}\) and moves to \(q_{0,J,0}\), where the last coordinate indicates whether the sampled clause has been satisfied by the partial assignment.

At variable stage \(i=1,\ldots,n-1\), action \(a\in\{0,1\}\) assigns \(x_i=a\) and moves from \(q_{i-1,j,b}\) to \(q_{i,j,b'}\), where \(b'=b\vee\mathbf 1\{x_i=a\text{ satisfies }C_j\}\), with zero reward. At the final variable stage, action \(a\) assigns \(x_n=a\), receives reward \(b\vee\mathbf 1\{x_n=a\text{ satisfies }C_j\}\), and then moves to an absorbing state. We use an uninformative deterministic prediction, which can be fixed arbitrarily.

Every open-loop action sequence therefore corresponds to one truth assignment, and its expected discounted block reward is \(\gamma^n\) times the fraction of clauses satisfied by that assignment. Setting \(\rho=\gamma^n\), there exists an open-loop sequence with value at least \(\rho\) if and only if \(\varphi\) is satisfiable. The construction has \(O(nm)\) states and polynomial bit complexity, so the problem is NP-hard and hence NP-complete. Since \(K=n+1\), an exact \(2^{o(K)}\operatorname{poly}(|\mathcal S|,m)\)-time algorithm would imply a \(2^{o(n)}\operatorname{poly}(m)\)-time algorithm for 3-SAT, contradicting ETH \citep{impagliazzo2001complexity}.
\end{proof}

The BF computation bound follows from Proposition~\ref{prop:bfbola-complexity}. For RH, suppose that the entire single-phase prediction table takes \(B\) possible values, so that \(|\mathcal Q_K|=B^K\). An exact tabular value function then has \(|\mathcal S|B^K\) entries. A dense Bellman sweep sums over \(|\mathcal S|B^K\) successors for each of the \(|\mathcal S||\mathcal A|B^K\) current state-action pairs, giving \(O(|\mathcal S|^2|\mathcal A|B^{2K})\) operations.

\subsection{Proof of Theorem \ref{thm:fh-bayesian-bellman}}
\label{proof_thm_1}
{
We use the open-loop block value in \eqref{eq:fh-block-value} directly. Define \((\mathcal T^{\FH}V)(s):=\mathbb E_{\boldsymbol\Sigma\sim\mu}[J_0^{\FH}(s,\boldsymbol\Sigma;V)]\). For every action sequence, changing the terminal value changes its block return by at most \(\gamma^K\|V-W\|_\infty\). Maximizing over the finite set of sequences and averaging over predictions preserves this bound, so \(\mathcal T^{\FH}\) is a monotone \(\gamma^K\)-contraction.

At the next block boundary, a fresh prediction is independent of the completed block. The remaining Bayesian value therefore depends only on the terminal state. The policy-evaluation and greedy-policy argument in Theorem~\ref{thm:bf-bellman} applies with an open-loop sequence as the block action. It identifies the unique fixed point with \(V^{\Bayes,\FH,*}_{K,\mathcal A^-,\mu}\) and proves the claimed equation \citep{puterman2014markov,bertsekas1996neuro}.
}

\end{document}